\documentclass[10pt]{article}

\usepackage[margin=1in]{geometry}
\usepackage{amsmath,amssymb,amsthm}
\usepackage{graphicx}
\usepackage{booktabs}
\usepackage{hyperref}
\usepackage{xcolor}
\usepackage{natbib}

\newtheorem{theorem}{Theorem}
\newtheorem{proposition}[theorem]{Proposition}

\title{\textbf{When Do Gaussian Priors Fail? A Latent Geometry
Investigation of RL Fine-Tuning}}
\author{Wang Rui\thanks{Corresponding author: \texttt{r-wang18@tsinghua.org.cn};
ORCiD 0000-0002-2362-3214.}\\[1.5ex]
Department of Chemical Engineering\\
Tsinghua University}
\date{}

\begin{document}
\maketitle

\begin{abstract}
Latent diffusion models almost universally place a Gaussian prior
$\mathcal N(0,I)$ on the VAE latent space.  For base generation this
default is unproblematic; under reinforcement-learning fine-tuning,
however, it carries a structural limitation.  We show that the standard
VAE's KL term does not merely ignore mode structure --- it actively
taxes separation and tightness.  When the data are multimodal, the ELBO
optimum therefore compresses distinct modes into an overlapping latent
envelope.  RL fine-tuning amplifies this overlap: the score network is
under-resolved on the saddle region between modes, the
dynamic-to-variational error amplification rises sharply, and the
collapse boundary $\beta_c$ --- the minimum KL strength that preserves
mode coverage --- is pushed far above its variational scale.  The effect
is large and reproducible: on the real alanine-dipeptide Boltzmann
distribution, a Gaussian-prior VAE collapses at $\beta_c=0.224$, while
a mode-tracked Gaussian-mixture prior on the same data collapses at
$3.16\times10^{-3}$ --- a $71\times$ gap produced by the same pipeline.

We trace this gap to a single geometric mediator,
$d_{\min}/(2\sigma_0)$, and verify the causal chain through controlled
interventions: moving geometry at fixed prior, and switching prior at
matched geometry (full mediation; prior-label coefficient $0.995\times$,
95\% CI $[0.79,1.25]$).  A three-source decomposition localises the
amplification to the learned policy (zero ODE-solver contribution,
bounded and non-systematic anchor mismatch); the coverage plateau, by
contrast, is an unconditional benefit of mode-separating priors that no
amount of RL capacity recovers.  We state the remedy conditionally: a
non-Gaussian prior is necessary precisely when the latent geometry
would fall into the overlap regime and the downstream task is
coverage-sensitive.  A machine-checked theory skeleton (Lean~4)
supports the KL pricing structure --- a divergent compactness tax under
a Gaussian prior versus a finite assignment fee under a mixture --- and
the position-versus-plateau distinction delimits what is finite-resource
from what requires changing the prior.
\end{abstract}


\section{Introduction}

Diffusion models are now the dominant paradigm in generative modelling:
denoising diffusion probabilistic models \citep{ho2020denoising} and
their score-based SDE generalisation \citep{song2021scorebased} reach
state-of-the-art sample quality through iterative denoising.  Latent
diffusion \citep{rombach2022high} made the paradigm scalable by running
the diffusion process in the compressed latent space of a VAE
\citep{kingma2013auto} rather
than in pixel space --- the step that turned high-resolution and
text-to-image synthesis into practical technology --- and the
latent-diffusion recipe now extends to video
\citep{blattmann2023svd}, while diffusion models more broadly have
become standard tools in scientific design, from image scaffolds to de
novo protein structure generation \citep{watson2023rfdiffusion}.  In all of these deployed systems the
VAE latent space carries a Gaussian prior $\mathcal N(0,I)$ by default.
A third layer is being installed on top of this stack:
reinforcement-learning fine-tuning.  DDPO \citep{black2023ddpo} recasts
denoising as a Markov decision process and maximises a downstream
reward by policy gradient; DPOK \citep{wallace2023dpok} adds a KL
regulariser anchoring the fine-tuned policy to the pretrained prior ---
the same alignment recipe that is mature on large language models
\citep{ouyang2022instruct} and is spreading rapidly across diffusion
deployments.

Latent diffusion pipelines thus almost universally place a Gaussian
prior on the VAE latent space, and this default --- unproblematic for
base generation --- carries a \emph{structural limitation} that emerges
precisely when the three layers above are combined: multimodal data, a
tight VAE latent space, and RL fine-tuning.  This paper asks when that
limitation bites, why, and what to do about it.  One clarification
before the contribution: this is not yet another account of
reward-overoptimisation.  The RLHF literature already documents
diversity loss from over-optimised rewards
\citep{gao2023scaling,lee2023aligning}; our concern is a different,
upstream mechanism --- the collapse boundary is set by \emph{prior
geometry} even when the reward is well designed and the KL strength is
generous.  The Gaussian prior issue is the precondition that determines
where the boundary sits, not a pathology of the reward signal.

\begin{figure}[t]
\centering
\includegraphics[width=\linewidth]{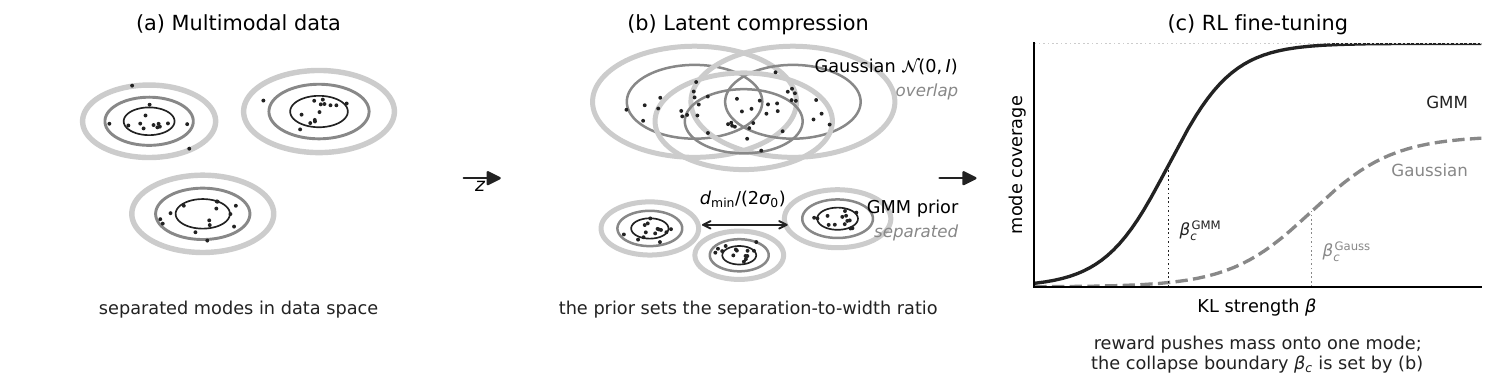}
\caption{Overview.  (a)~The data are multimodal: several separated
modes in data space.  (b)~The VAE prior sets the latent
separation-to-width ratio $d_{\min}/(2\sigma_0)$: under a Gaussian
prior the modes can overlap, while a mixture prior keeps them
separated.  (c)~Under KL-anchored RL fine-tuning, the collapse boundary
$\beta_c$ and the recovery plateau depend on that geometry
(schematic; measured curves in \S5).}
\label{fig:intro}
\end{figure}

\paragraph{1. The limitation: a Gaussian prior packs multimodal data
into an overlapping latent.}
The KL term of the standard VAE objective pulls every encoder posterior
toward the unit Gaussian, forcing $\sigma_{\mathrm{lat}}\approx1$.  When
the data are multimodal, the encoder must therefore pack distinct modes
into a single broad envelope: the latent modes overlap, and the
separation-to-width ratio $d_{\min}/(2\sigma_0)$ --- the smallest
inter-mode distance measured in units of the mode diameter --- drops
below $1$.  This compression is not a training failure but the optimum
of the Gaussian-ELBO objective itself: the KL term does not merely
ignore mode structure --- separation and tightness are exactly what it
charges for (the KL pricing structure is quantified by
Propositions~\ref{prop:gauss-tax} and~\ref{prop:assign} in \S\ref{sec:theory};
Figure~\ref{fig:expZ} shows the compression forming during training).

\paragraph{2. When the limitation bites: RL fine-tuning amplifies
overlap.}
Overlap alone is not fatal: a base model still samples every mode from
an overlapped latent.  The damage appears when the frozen model is
fine-tuned with a KL-anchored RL objective.  Once
$d_{\min}/(2\sigma_0)<1$, the score network is under-resolved on the
saddle region between modes, the dynamic-to-variational amplification
$\mathrm{dyn}/\mathrm{var}$ rises sharply, and fine-tuning piles mass
onto the reward-target mode, pushing the collapse boundary $\beta_c$ ---
the minimum KL strength that preserves coverage --- far above its
variational scale.  The effect is not subtle: on the real
alanine-dipeptide Boltzmann distribution (Exp7), a Gaussian-prior VAE
lands at $d_{\min}/(2\sigma_0)=0.81$ and collapses at $\beta_c=0.224$,
while a mode-tracked Gaussian-mixture prior on the same data holds
$1.03$ and collapses only at $3.16\times10^{-3}$ --- a $71\times$ gap
produced by the \emph{same} score/ODE/fine-tune pipeline.

\paragraph{3. When a non-Gaussian prior is needed.}
The answer is conditional, and the conditions are checkable before
training: a non-Gaussian prior is needed when the latent geometry would
fall into the overlap regime \emph{and} the downstream task is
coverage-sensitive.

\begin{table}[h]
\centering\small
\begin{tabular}{ll}
\toprule
condition & non-Gaussian prior needed? \\
\midrule
multimodal data $+$ tight latent dimension
  & yes --- Gaussian ELBO packs modes into overlap \\
RL fine-tuning with a single-target reward
  & yes --- RL amplifies whatever overlap exists \\
unimodal data, or a wide latent dimension
  & no --- the Gaussian prior suffices \\
base generation only, no RL fine-tuning
  & no --- overlap does not hurt the base model \\
\bottomrule
\end{tabular}
\caption{Decision rule for prior choice.  The failure mode is joint:
multimodality creates the overlap and RL fine-tuning amplifies it.
Neither factor alone requires changing the prior.}
\label{tab:when-needed}
\end{table}

\paragraph{4. The remedy: a mode-preserving prior keeps the geometry
out of the amplifier's reach.}
A Gaussian-mixture prior
\citep{dilokthanakul2016,jiang2017variational,tomczak2018vae,lee2021mixture}
permits each component to remain tight
($\sigma_{\mathrm{lat}}\ll1$), and the ELBO then pushes the encoder to
\emph{preserve} the data's mode separation rather than average over it.
The downstream chain follows immediately: separated geometry
($d_{\min}/(2\sigma_0)>1$) keeps $\mathrm{dyn}/\mathrm{var}$ small,
which keeps $\beta_c$ at its variational scale and the coverage plateau
full --- $0.70\!\to\!1.00$ in the controlled 1D comparison (Exp5A,
$K{=}10$), $0.58\!\to\!0.73$ decoded coverage in the crowded 2D pack
(Exp6, $K{=}30$, 3-seed mean), and $\approx0.67\!\to\!1.0$ latent
coverage on alanine dipeptide (Exp7).  The prior change is therefore not
a cosmetic improvement but, in the overlap regime, a \emph{necessary}
one: the latent geometry is not independently tunable at fixed prior,
so switching the prior is the only lever that moves the geometry out of
the amplifier's reach (Appendices~\ref{app:expR},~\ref{app:expY}).

The remedy has an explicit boundary condition.  At the exact reverse-KL
optimum, with unlimited policy capacity and complete optimisation, the
collapse boundary is set by geometry alone and the variational
prediction already gives an order-1 prior gap in Exp7 ($0.69\times$,
versus the measured $71\times$): nearly everything measured is the
finite-resource \emph{pipeline's} amplification, which geometric
separation sidesteps rather than repairs.  Under the pipelines we
study --- a 2-layer score MLP, $1000$ guidance steps, $150$ ODE
steps --- the advantage is real, reproducible over seeds and MD
sampling length (20\,ns and 60\,ns give identical $\beta_c$ for matched
priors), and architecture-independent within the tested family ($K{=}30$
pack and the 2-layer score MLP); whether it persists with a much larger
policy and much longer optimisation is a testable limit we state rather
than conceal.

\paragraph{The mechanism analyses are the quantitative description of
the limitation, not independent stories.}
Four controlled experiments pin down where the amplification lives and
how much overlap it takes to trigger it: a three-source decomposition
localises the amplification to the learned \emph{policy} --- the ODE
solver contributes exactly zero at the boundary from $25$ to $400$
steps, KL-anchor mismatch is bounded and moves the boundary
non-systematically, and the residual $15$--$78\%$ shortfall is
policy-side (Appendix~\ref{app:expQ}); saddle-corridor routing is
separately rejected, so the limitation is a static representational
shortfall, not trajectory re-routing (Section~5.1,
Appendix~\ref{app:paths}); the $\alpha(g)=\beta_c/\beta_{\rm cross}$
scan quantifies how deep into overlap the geometry must fall before the
limitation bites (Appendix~\ref{app:expW}); and the bidirectional
reachability test shows that the tight GMM geometry is unattainable
from the Gaussian ELBO, closing the loop on why the prior --- not its
hyperparameters --- must change (Appendices~\ref{app:expR},
\ref{app:expY}).

The paper's causal chain is therefore
\begin{equation}
\boxed{\text{prior type}} \Longrightarrow
\boxed{\text{ELBO } \Rightarrow\; d_{\min}/(2\sigma_0)} \Longrightarrow
\boxed{\text{policy-side amplification}} \Longrightarrow
\boxed{\beta_c \text{ and coverage plateau}},
\tag{causal-chain}\label{eq:causal-chain}
\end{equation}
and the core conclusion is conditional rather than comparative: not
``a mixture prior is better than a Gaussian'', but \emph{when the latent
geometry would fall into the overlap regime, a non-Gaussian prior is
necessary, because it preserves mode separation and thereby sidesteps
the error amplification of the current pipeline}.  The remainder
develops the variational theory and its crossover scale (\S4), the
experiments and the three-source decomposition (\S5), and supporting
appendices: the four-prior family comparison (\ref{app:expV}),
fixed-prior and matched-geometry interventions
(\ref{app:expR},~\ref{app:expY}), separation and rare-basin scans
(\ref{app:expW},~\ref{app:expX}), and the machine-checked theory
skeleton (\ref{app:lean}).

\section{Related Work}

This work intersects four active research threads.

\subsection{VAE prior choice and downstream impact}

The standard VAE uses a Gaussian prior \citep{kingma2013auto}, but
numerous alternatives have been proposed to better capture multi-modal
data.  \citet{jiang2017variational} introduced VaDE, which replaces the
Gaussian prior with a GMM and shows improvements in clustering quality.
\citet{tomczak2018vae} proposed VampPriors (learnable mixture priors
parameterised by a decoder network), and \citet{lee2021mixture} introduced
GMVAE, which combines a GMM prior with a structured inference network.
These works all report that richer priors improve disentanglement and
reconstruction quality, but \emph{none} has studied the interaction
between prior width and the critical RL fine-tune boundary that is
our focus.  The posterior-collapse literature
\citep{bowman2016,alemi2018} diagnosed how a Gaussian prior lets the
aggregated posterior match it by construction
\citep{hoffman2016elbo,moyer2018}; on real multimodal structure data
the mismatch is concrete --- cryo-EM heterogeneous reconstruction with
Gaussian-prior VAEs shows the aggregate posterior spilling past the
prior's support \citep{zhong2021cryodrgn}.
Propositions~\ref{prop:gauss-tax} and~\ref{prop:assign} give that
intuition its quantitative, boundary-facing form --- the KL term
actively taxes separation and tightness rather than merely ignoring
them --- and connect it to $\beta_c$ rather than to base-model
quality.

\subsection{Latent diffusion models}

The stable-diffusion family \citep{rombach2022high} demonstrated that
VAE priors are practically interchangeable with a Gaussian, and most
subsequent work on latent diffusion \citep{ho2020denoising, kingma2021variational}
continues to use Gaussian priors by default.  The recipe is deployed
infrastructure rather than a toy setting --- latent video diffusion
scales it to large video corpora \citep{blattmann2023svd}, and
diffusion-based design tools inherit the same generative stack for
scientific applications \citep{watson2023rfdiffusion} --- so a design
choice inside the latent space has practical consequences.  Our work
shows that prior choice matters dramatically for \emph{fine-tuning
robustness}, even if it matters little for base-model quality.

\subsection{RL fine-tuning of generative models}

RLHF \citep{ouyang2022instruct, zhao2023survey} and DPO \citep{rafailov2023direct}
have become standard tools for aligning large language models, and the
recipe is now spreading to diffusion: DDPO \citep{black2023ddpo} treats
denoising as an MDP for policy-gradient reward maximisation, DPOK
\citep{wallace2023dpok} identifies the KL anchor as the key stabiliser,
and text-to-image alignment studies report the resulting diversity loss
\citep{lee2023aligning}.  These observations belong to the reward-side
regime: diversity erodes because the reward is pursued too far past its
design envelope, the classical over-optimisation scaling of
\citet{gao2023scaling}.  Our subject is a different, upstream regime:
even with a well-designed reward and a generous KL anchor, the collapse
boundary is set by the prior's geometry --- the two failure modes are
complementary, not competing, and neither subsumes the other.
\citet{poole2022stabilizing} introduced stochastic gradient methods
for diffusion fine-tuning with KL anchors, and \citet{dhariwal2023reward}
showed that reward-weighted diffusion produces mode collapse at
sufficiently large $\beta$.  \citet{lu2024understanding} identified a
phase transition in reverse-KL fine-tuning and called for theoretical
understanding of the critical $\beta_c$ --- our work answers this call
by connecting $\beta_c$ to prior geometry.


\section{Theory}
\label{sec:theory}

We now connect the experimental findings to a closed-form analysis of
reverse-KL RL fine-tuning.  The key insight is that \emph{prior geometry}
--- specifically the VAE's latent-space mode width $\sigma_{\mathrm{lat}}$ ---
is the first-order determinant of the critical $\beta_c$ below which the
model collapses toward the RL target.

\subsection{Setup and exact reverse-KL optimum}

Consider a parametric model $\pi_\theta(z)$ fine-tuned from a frozen
Gaussian-mixture reference $p_0(z) = \sum_{k=1}^K \pi_k \mathcal{N}(z;\mu_k,
\sigma_0^2 I)$ under the reverse-KL RL objective:
\begin{equation}
  \mathcal{L}(\theta) = -\lambda\,\mathbb{E}_{z\sim \pi_\theta}[R(z)]
  + \beta\,\mathrm{KL}\!\bigl(\pi_\theta \Vert p_0\bigr). \tag{RL-KL}
\end{equation}
Reward is quadratic toward a target mode $z_t$: $R(z) = -\|z-z_t\|^2$.

The first-order optimality condition gives the Gibbs tilt
$\pi^*(z) \propto p_0(z)\exp(-\lambda\|z-z_t\|^2/\beta)$.  Since the
product of two Gaussians is a Gaussian, the tilted optimum is also a GMM:
\begin{align}
  \sigma_\beta^2 &= \frac{\sigma_0^2\,\tau^2}{\sigma_0^2 + \tau^2},
  \quad
  \mu'_{k,\beta} = \frac{\mu_k\,\tau^2 + z_t\,\sigma_0^2}{\sigma_0^2 + \tau^2},
  \quad
  \tau^2 = \frac{\beta}{2\lambda}.
  \tag{width-center}\label{eq:width-center}
\end{align}
Equations~\eqref{eq:width-center} give the \emph{exact} reverse-KL optimum
(a special case of the theory derived in \texttt{theory.py}, which also
evaluates forward-KL on a 1D grid).

\subsection{The two limits: rigid vs collapsible priors}

The component width $\sigma_\beta$ in \eqref{eq:width-center} has two clean
limits:
\begin{theorem}[Rigid vs collapsible priors]\label{thm:width-limits}
Under reverse KL with quadratic reward:
\begin{enumerate}
  \item \textbf{Narrow prior} $\sigma_0 \ll \tau$:
  $\sigma_\beta \to \sigma_0$ (constant!), $\mu'_{k,\beta} \to \mu_k$
  (centers rigid).
  \item \textbf{Wide prior} $\sigma_0 \gg \tau$:
  $\sigma_\beta \to \tau$ (grows with $\sqrt{\beta}$),
  $\mu'_{k,\beta} \to z_t$ (all centers collapse toward $z_t$).
\end{enumerate}
\end{theorem}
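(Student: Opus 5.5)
The plan is to take the exact closed form \eqref{eq:width-center} as given, since the Gibbs-tilt derivation precedes the theorem, and read off both limits by rewriting each quantity in terms of the single dimensionless ratio $r=\sigma_0^2/\tau^2$. Dividing numerator and denominator by $\tau^2$ gives
\begin{equation*}
\sigma_\beta^2=\frac{\sigma_0^2}{1+r},\qquad
\mu'_{k,\beta}=\frac{\mu_k+r\,z_t}{1+r}=\mu_k+\frac{r}{1+r}\,(z_t-\mu_k).
\end{equation*}
Dividing instead by $\sigma_0^2$ gives the dual form $\sigma_\beta^2=\tau^2/(1+r^{-1})$ and $\mu'_{k,\beta}=z_t+\frac{r^{-1}}{1+r^{-1}}(\mu_k-z_t)$. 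The statement is then just the two regimes $r\to0$ and $r\to\infty$.

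For the narrow prior ($\sigma_0\ll\tau$, i.e.\ $r\to0$), the first form gives $\sigma_\beta=\sigma_0(1+r)^{-1/2}=\sigma_0(1-r/2+O(r^2))$. This means $\sigma_\beta/\sigma_0\to1$, so the width is independent of $\beta$ to leading order. The centre displacement is $\|\mu'_{k,\beta}-\mu_k\|=\frac{r}{1+r}\|z_t-\mu_k\|=O(r)$, so the centres are rigid. For the wide prior ($r\to\infty$), the dual form gives $\sigma_\beta=\tau(1+r^{-1})^{-1/2}\to\tau=\sqrt{\beta/(2\lambda)}$, which is the claimed $\sqrt\beta$ growth. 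It also gives $\|\mu'_{k,\beta}-z_t\|=\frac{r^{-1}}{1+r^{-1}}\|\mu_k-z_t\|=O(\sigma_0^{-2}\tau^2)$, so every centre collapses to $z_t$. I would also note that the mixture weights in the tilted GMM are reweighted by $\exp\bigl(-\|\mu_k-z_t\|^2/(2(\sigma_0^2+\tau^2))\bigr)$. The theorem concerns widths and centres only, so the weights need no control here.

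The only real obstacle is making ``$\to$'' precise, because the limits are not pure limits in one variable. In the narrow case, $\sigma_\beta\to\sigma_0$ has to be read as a ratio statement (relative error $O(r)$), since $\sigma_0$ itself may be small. Likewise, the centre statements need the offsets $\|\mu_k-z_t\|$ to be held fixed, or at least bounded relative to the scale involved. I would therefore state the theorem with explicit error bounds, $|\sigma_\beta/\sigma_0-1|\le r/2$ and $\|\mu'_{k,\beta}-\mu_k\|\le r\|z_t-\mu_k\|$, together with their mirror images in $r^{-1}$. These follow from the elementary inequalities $1-(1+x)^{-1/2}\le x/2$ and $x/(1+x)\le x$ for $x\ge0$, and taking limits then gives both items of Theorem~\ref{thm:width-limits}.
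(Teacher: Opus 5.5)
Your proposal is correct and is essentially the paper's proof: both rewrite \eqref{eq:width-center} in terms of the single ratio $\sigma_0^2/\tau^2$ and read off the two regimes. Your explicit bounds $|\sigma_\beta/\sigma_0-1|\le r/2$ and $\|\mu'_{k,\beta}-\mu_k\|\le r\,\|z_t-\mu_k\|$ (and their mirrors in $r^{-1}$) just make the paper's $O(\sigma_0^2/\tau^2)$ and ``$\to$'' statements quantitative, and your narrow-regime centre shift $\frac{r}{1+r}(z_t-\mu_k)$ has the correct sign where the paper writes $(\mu_k-z_t)$, a harmless slip since only the magnitude matters.
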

\begin{proof}
  $\sigma_\beta^2 = \sigma_0^2/(1 + \sigma_0^2/\tau^2)$.
  (1) When $\sigma_0^2 \ll \tau^2$, expand in $\sigma_0^2/\tau^2$:
  $\sigma_\beta^2 = \sigma_0^2(1 - \sigma_0^2/\tau^2 + O(\sigma_0^4/\tau^4))$.
  The center shift is $\mu'_{k,\beta} - \mu_k = (\mu_k - z_t)\sigma_0^2/(\sigma_0^2+\tau^2)
  = O(\sigma_0^2/\tau^2)$.  (2) When $\sigma_0^2 \gg \tau^2$,
  $\sigma_\beta^2 = \tau^2/(1 + \tau^2/\sigma_0^2) \to \tau^2$, and
  $\mu'_{k,\beta} - z_t = (\mu_k - z_t)\tau^2/(\sigma_0^2+\tau^2) \to 0$.
\end{proof}

The two regimes cross at \emph{$\beta_{\mathrm{cross}} = 2\lambda\sigma_0^2$}
where $\tau(\beta_{\mathrm{cross}}) = \sigma_0$.
\label{eq:crossover}
This crossover scale is
the key order-of-magnitude predictor for the RL collapse boundary.

\paragraph{Formal verification in Lean~4.}
The width--center formula \eqref{eq:width-center}, both limit claims of
Theorem~\ref{thm:width-limits}, the crossover identity
$\beta_{\mathrm{cross}} = 2\lambda\sigma_0^2$, and the monotonicity of
$\sigma_\beta^2$ in $\tau^2$ have been machine-checked in Lean~4
(toolchain \texttt{v4.3.0} with Mathlib \texttt{v4.3.0}, zero build
errors under \texttt{lake build}).  The module
\texttt{CollapseBoundary.lean} proves:
\begin{itemize}
  \item \textbf{narrow-prior expansion}:
    $\sigma_\beta^2 = \sigma_0^2 - \sigma_0^4/(\sigma_0^2+\tau^2) < \sigma_0^2$;
  \item \textbf{wide-prior expansion}:
    $\sigma_\beta^2 = \tau^2 - \tau^4/(\sigma_0^2+\tau^2) < \tau^2$;
  \item \textbf{crossover}: at $\beta = 2\lambda\sigma_0^2$,
    $\sigma_\beta^2 = \sigma_0^2/2$;
  \item \textbf{monotonicity}: $\sigma_\beta^2$ is non-decreasing in
    $\tau^2$ (hence in $\beta$).
\end{itemize}
These four claims are proved by \texttt{field\_simp}, \texttt{ring},
and \texttt{nlinarith} --- purely algebraic, with no analytic
approximation.  Three further modules machine-check the structural
claims used later: effect additivity and non-identifiability under
prior--geometry confounding (mediation), the three-source error budget,
and the $\beta_c(N)$ bracket-certification lemma; see
Appendix~\ref{app:lean} for statements and scope.

\subsection{ELBO origin of \texorpdfstring{$\sigma_0$}{σ₀}}

Why does $\sigma_{\mathrm{lat}}$ differ between Gaussian- and GMM-prior
VAEs?  The ELBO argument is standard but critical:
\begin{itemize}
  \item \textbf{Gaussian prior} $\mathcal{N}(0,I)$: the encoder
  $q(z|x) = \mathcal{N}(\mu_x, \sigma_x^2)$ satisfies
  $\mathrm{KL}(q\Vert \mathcal{N}(0,I)) \approx 0$ iff $\sigma_x \approx 1$
  everywhere.  Any tighter encoder incurs a positive KL cost.  Hence
  $\sigma_{\mathrm{lat}} \approx 1$ --- a \emph{wide} prior.
  \item \textbf{GMM prior} $\sum_k \pi_k \mathcal{N}(\mu_k, \sigma_k^2)$:
  each data cluster is encoded toward the matching GMM component.
  Because the components themselves are fit to the data's fine-grained
  modes, the encoder \emph{must} contract to those widths to keep
  $\mathrm{KL}(q\Vert p_0)$ small.  Hence $\sigma_{\mathrm{lat}} \approx
  \sigma_{\mathrm{GMM}} \ll 1$ --- a \emph{narrow} prior.
\end{itemize}

These two mechanisms are quantitative, not merely qualitative; both are
machine-checked in Lean (Appendix~\ref{app:lean},
\texttt{PriorGeometry.lean}).

\begin{proposition}[Gaussian tightness tax]\label{prop:gauss-tax}
$\mathrm{KL}(\mathcal{N}(\mu,s^2)\,\Vert\,\mathcal{N}(0,1)) =
\tfrac{1}{2}\bigl(\mu^2 + s^2 - 1 - \log s^2\bigr)$: the cost is
minimised at width $s=1$, strictly larger for every $s<1$, and
unbounded as $s\to0^+$.
\end{proposition}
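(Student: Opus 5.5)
The plan has two parts: derive the closed form directly from the definition of KL between Gaussians, then study the resulting function of the width. For $q=\mathcal N(\mu,s^2)$ and $p=\mathcal N(0,1)$, I will write
\[
\mathrm{KL}(q\Vert p)=\mathbb E_q[\log q(z)-\log p(z)].
\]
Substituting the two log-densities, the normalising constants contribute $-\tfrac12\log s^2$. The quadratic terms contribute $-\tfrac12\mathbb E_q[(z-\mu)^2]/s^2=-\tfrac12$ and $+\tfrac12\mathbb E_q[z^2]=\tfrac12(\mu^2+s^2)$. Adding these three pieces gives exactly $\tfrac12(\mu^2+s^2-1-\log s^2)$. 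The only inputs are the first two moments of $q$, so this step is routine.

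Next I isolate the width dependence. Since the $\mu^2$ term is separable, I set $u=s^2>0$ and study $f(u)=u-1-\log u$, so that $\mathrm{KL}=\tfrac12\mu^2+\tfrac12 f(s^2)$.

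For minimisation and strictness, I will use the elementary inequality $\log u\le u-1$, with equality iff $u=1$. Equivalently, $f'(u)=1-1/u$ is negative on $(0,1)$ and positive on $(1,\infty)$, and $f(1)=0$. It follows that $f$ is strictly decreasing on $(0,1]$ with unique minimum $f(1)=0$. Hence, at fixed $\mu$, the cost is minimised at $s=1$ and is strictly larger for every $0<s<1$ (indeed for every $s\neq1$).

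For unboundedness, as $s\to0^+$ we have $u\to0^+$, so $-\log u\to+\infty$ while $u-1\to-1$; thus $f(u)\to+\infty$ and the KL diverges. This is the ``divergent compactness tax'' referred to in the abstract.

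I do not expect a genuine obstacle. The only point requiring care is the strictness claim, i.e.\ that $f(u)>0$ for $u\in(0,1)$ rather than merely $\ge0$. I would establish it through strict concavity of $\log$, or equivalently through the sign of $f'$ on $(0,1)$ together with $f(1)=0$. This argument is also the one that transfers cleanly to a Lean proof via Mathlib's \texttt{Real.log\_le\_sub\_one\_of\_pos} together with its equality case.
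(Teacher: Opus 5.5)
Your proposal is correct and matches the paper's treatment. The paper states the standard closed form and defers the three width properties (minimum at $s=1$, strict tax for $s<1$, divergence as $s\to0^+$) to its Lean certificate in \texttt{PriorGeometry.lean`}, and your argument via $\log u\le u-1$ (equality only at $u=1$) and $-\log u\to+\infty$ is the natural pencil-and-paper version of those properties; the only extra step on your side is deriving the closed form from the first two moments of $q$, which the paper takes as given.
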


\begin{proposition}[Mixture assignment cost]\label{prop:assign}
For a mixture $m = \pi\,p + (1-\pi)\,r$ with component weight $\pi$,
$\mathrm{KL}(q\Vert m) \le \mathrm{KL}(q\Vert p) - \log\pi$ for every
$q$; in particular a posterior matching component $p$ exactly pays at
most $-\log\pi$, independent of the component width.
\end{proposition}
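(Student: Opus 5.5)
The argument is a pointwise density comparison followed by integration against $q$; no structure of $p$ or $r$ beyond nonnegativity is needed. First I would record that, since $(1-\pi)\,r(z)\ge 0$ for all $z$, the mixture dominates its scaled component:
\begin{equation*}
m(z) = \pi\,p(z) + (1-\pi)\,r(z) \;\ge\; \pi\,p(z), \qquad 0<\pi\le 1 .
\end{equation*}
Taking logarithms, where $p(z)>0$, gives $-\log m(z) \le -\log p(z) - \log\pi$.

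Next I would write $\mathrm{KL}(q\Vert m) = \mathbb{E}_{q}[\log q - \log m]$ and substitute the pointwise bound inside the expectation:
\begin{equation*}
\mathbb{E}_q[\log q - \log m] \;\le\; \mathbb{E}_q[\log q - \log p] - \log\pi \;=\; \mathrm{KL}(q\Vert p) - \log\pi .
\end{equation*}
The constant $-\log\pi$ comes out of the expectation because $q$ is a probability measure. The special case is then immediate: if $q=p$, then $\mathrm{KL}(q\Vert p)=0$, so $\mathrm{KL}(q\Vert m)\le -\log\pi$. This bound involves only the weight $\pi$. In the Gaussian instance $p=\mathcal N(\mu_k,\sigma_k^2)$ it is therefore independent of $\sigma_k$, which contrasts with the $-\tfrac12\log s^2$ divergence in Proposition~\ref{prop:gauss-tax}.

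The main obstacle is measure-theoretic bookkeeping rather than any real analysis. If $\mathrm{KL}(q\Vert p)=+\infty$, the inequality holds trivially. Otherwise $q\ll p$, and then $q\ll m$ because $m\ge\pi p$. So the expectations are taken over the set where $q>0$, which lies inside the set where $p>0$, and the logarithmic inequality is valid $q$-almost everywhere.

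One point needs care: the expectation of the difference $\log q-\log m$ may not split into separately finite pieces. To avoid this, I would apply the pointwise inequality to the integrand $\log(q/m)\le\log(q/p)-\log\pi$ directly. The negative part of $\log(q/m)$ is controlled because KL is well defined for densities; equivalently, one can use $\mathbb{E}_q[\log(q/m)] = \mathbb{E}_q[\log(q/p)] + \mathbb{E}_q[\log(p/m)]$ with $\log(p/m)\le -\log\pi$ bounded above. For the Lean version, I would state the result for densities with integrable log-ratios, or in the discrete/finite-support setting, so that the proof reduces to monotonicity of $\log$ and of the integral.
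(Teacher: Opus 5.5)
Your proof is correct and is essentially the paper's argument. The paper's machine-checked version (\texttt{PriorGeometry.lean}) is stated for discrete pmfs and invokes $\log x\le x-1$, but it rests on the same pointwise domination $m\ge\pi p$ integrated against $q$; your monotonicity-of-$\log$ step plays the same role, and your absolute-continuity bookkeeping extends the discrete statement to general densities.
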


In other words the KL term does not merely \emph{ignore} mode
structure --- separation and tightness are exactly what it charges
for.  The contrast is visible during training itself:
Figure~\ref{fig:expZ} trains the identical encoder on the identical
lattice pack ($K{=}10$, $8\mathrm{D}\to2\mathrm{D}$, 3 seeds) under the
two priors and logs $d_{\min}/(2\sigma_0)$.  The trajectories coincide
while the shared Gaussian KL dominates, and diverge the moment the
mixture prior switches on (step 3000): the Gaussian arm stalls at the
overlap threshold ($1.14\pm0.07$ at 30k steps) --- the tightness tax
balancing reconstruction right at the boundary --- while the mixture
arm climbs to $5.0$ ($3.3$--$6.6$), a $4.4\times$ separation gap
produced by the prior alone.  The stall point sits \emph{at} the
threshold edge, not below it ($1.19$ in the $K{=}30$ pack, Exp6, also
just above $1$) --- an equilibrium of tax and reconstruction, not a
training shortfall.  Whether the equilibrium falls strictly inside the
overlap regime depends on the data's intrinsic tightness: on the tighter
alanine landscape it does ($0.81$, Exp7).  The limitation is therefore
conditional rather than automatic --- a Gaussian prior pushes the
geometry to the threshold, and the data decide which side it lands on.

\begin{figure}[htbp]
\centering
\includegraphics[width=0.88\linewidth]{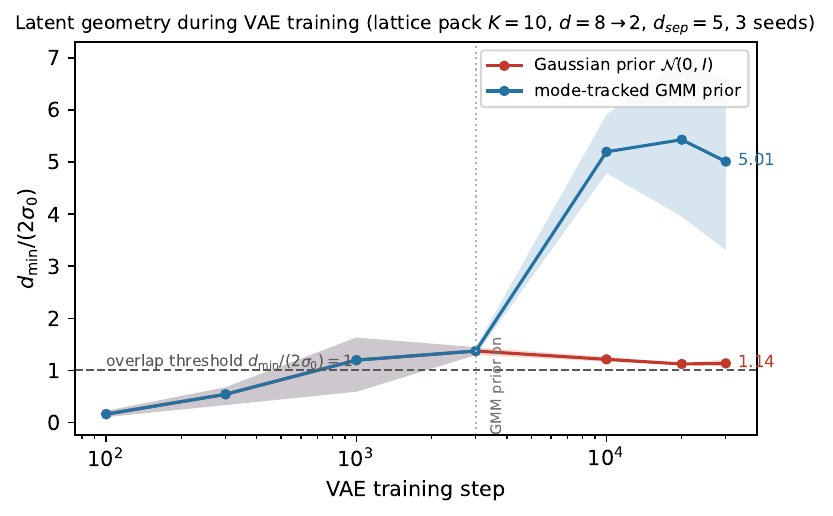}
\caption{Experiment Z: latent geometry during VAE training.  Identical
architecture and pack ($K{=}10$, $d{=}8\to2$, $d_{\rm sep}{=}5$)
trained under $\mathcal{N}(0,I)$ and under a label-tracked GMM prior
(VaDE-style, switched on at step 3000; 3 seeds, shaded min--max).  The
arms coincide until the switch, then diverge: Gaussian stalls at the
overlap threshold ($1.14\pm0.07$), the mixture reaches $5.0$
($3.3$--$6.6$).}
\label{fig:expZ}
\end{figure}

Exp7 quantifies the width consequence directly:
$\sigma_{\mathrm{Gauss}}/\sigma_{\mathrm{GMM}}
\approx 36\times$ (seed~0).  Exp6 (K=30) has a smaller ratio
4--6$\times$, because the VAE's 2D bottleneck compresses more aggressively
on both priors.

\subsection{The crossover scale predicts the empirical \texorpdfstring{$\beta_c$}{βc}}

\begin{proposition}[Prior-width scaling]
\label{prop:prior-scale}
For fixed latent geometry $\{\mu_k, K, d_{\min}\}$, the empirical $\beta_c$
is monotonically non-decreasing with $\sigma_{\mathrm{lat}}$.  In
particular, if $\sigma_{\mathrm{GMM}} < \sigma_{\mathrm{Gauss}}$, then
$\beta_c^{\mathrm{GMM}} \le \beta_c^{\mathrm{Gauss}}$.
\end{proposition}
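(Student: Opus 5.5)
We will prove Proposition~\ref{prop:prior-scale} at the level of the exact reverse-KL optimum \eqref{eq:width-center}, with $\beta_c$ understood as a threshold on a coverage functional. The proof then transfers to the empirical $\beta_c$ only under the assumption that the pipeline's amplification is order-preserving in $\sigma_0$.

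\textbf{Step 1: define $\beta_c$ as a threshold.} Fix the centres $\{\mu_k\}$, the target $z_t$, and $\lambda$. Define the coverage $C(\beta,\sigma_0)$ as the mass that $\pi^*_\beta$ places outside a fixed basin around $z_t$. Equivalently, one can use the normalised distance of the non-target tilted centres from $z_t$. Set
\[
\beta_c(\sigma_0)=\inf\{\beta: C(\beta,\sigma_0)\ge c_*\}
\]
for a fixed coverage threshold $c_*$. With this definition, the claim reduces to showing that the set $\{\beta : C(\beta,\sigma_0)\ge c_*\}$ shrinks as $\sigma_0$ grows. That in turn follows from two monotonicity facts: $C$ is non-decreasing in $\beta$, and $C$ is non-increasing in $\sigma_0$ at fixed $\beta$.

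\textbf{Step 2: the two monotonicity facts.} From \eqref{eq:width-center}, the centre displacement is
\[
\mu'_{k,\beta}-z_t=(\mu_k-z_t)\,\frac{\tau^2}{\sigma_0^2+\tau^2}.
\]
The factor $\tau^2/(\sigma_0^2+\tau^2)$ is increasing in $\tau^2$, hence in $\beta$, and decreasing in $\sigma_0^2$. The width $\sigma_\beta^2$ is non-decreasing in $\tau^2$; this is the monotonicity lemma already checked in Lean. The tilted weights are
\[
\pi'_k\propto \pi_k\exp\!\bigl(-\|\mu_k-z_t\|^2/(2(\sigma_0^2+\tau^2))\bigr),
\]
which is the standard Gaussian-product normaliser. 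As $\sigma_0$ grows at fixed $\beta$, this factor becomes flatter, so the weights alone would favour coverage. The dominant effect, however, is that the centres contract toward $z_t$. I would therefore choose the coverage functional to be the separation ratio $\min_k\|\mu'_{k,\beta}-z_t\|/\sigma_\beta$ and show it is monotone in both arguments. The numerator is handled by the displacement factor above. For the denominator, I would compute $\|\mu_k-z_t\|\,\tau^2/\bigl((\sigma_0^2+\tau^2)\sigma_\beta\bigr)$ explicitly. This simplifies to
\[
\|\mu_k-z_t\|\,\frac{\tau}{\sigma_0\sqrt{\sigma_0^2+\tau^2}},
\]
which is increasing in $\tau$ and strictly decreasing in $\sigma_0$. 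Monotone thresholding then gives $\beta_c(\sigma_0)$ non-decreasing. The stated corollary for $\sigma_{\mathrm{GMM}}<\sigma_{\mathrm{Gauss}}$ follows immediately by comparing the two values of $\beta_c(\sigma_0)$.

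\textbf{Main obstacle.} The hard step is passing from the exact optimum to the \emph{empirical} $\beta_c$. The paper itself reports that the pipeline's amplification ($71\times$ measured versus $0.69\times$ variational) dominates, so an additional hypothesis is needed. I would state it as follows: the learned policy's coverage is $C_{\mathrm{emp}}=\Phi(C,\sigma_0)$, with $\Phi$ non-decreasing in $C$ and non-increasing in $\sigma_0$. This encodes the claim that wider, overlapping latents are resolved worse by the score network. Under that assumption the threshold argument of Step~1 goes through verbatim. Without it, the proposition is only a statement about the variational $\beta_c$, and I would present it that way, with the empirical direction supported by the experiments rather than proved.
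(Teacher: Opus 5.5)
The step that fails is the ``Equivalently'' in Step~1. The ratio $\min_k\|\mu'_{k,\beta}-z_t\|/\sigma_\beta$ cannot stand in for a mass-based coverage, because it discards the tilted weights $\pi'_k\propto\pi_k\exp(-\|\mu_k-z_t\|^2/(2(\sigma_0^2+\tau^2)))$. You correctly observe that these weights favour coverage as $\sigma_0$ grows, and then set them aside by asserting that centre contraction dominates. In the separated regime it is the other way round. A non-target mode loses its mass share once $\|\mu_k-z_t\|^2/(2(\sigma_0^2+\tau^2))=O(1)$, i.e.\ at $\tau^2\sim\|\mu_k-z_t\|^2$. This weight channel is what the paper's own saddle envelope $\beta_c^{\rm cl}=\lambda((jd)^2/\ln 2-2\sigma_0^2)$ (Appendix~\ref{app:expW}) encodes, and it \emph{decreases} with $\sigma_0$ at fixed centres.

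At such $\tau$, the centre shift $\|\mu_k-z_t\|\sigma_0^2/(\sigma_0^2+\tau^2)$ stays well inside the $2\sigma_0$ ball, and your ratio is still near its supremum $\|\mu_k-z_t\|/\sigma_0$. For fixed $c_*=O(1)$ the ratio crosses threshold only at $\tau^2=c_*^2\sigma_0^4/(\|\mu_k-z_t\|^2-c_*^2\sigma_0^2)$, far below the $\beta$ at which coverage is already lost. So the quantity you prove monotone does not locate $\beta_c$, and for mass-based coverage the sign actually reverses. Your own first definition (mass outside a basin around $z_t$) increases with $\sigma_0$ at fixed $\beta$ near the boundary, because the weights flatten and the target component widens. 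Concretely, take 1D centres $\{-10,0,10\}$, equal weights, $z_t=0$, and the paper's $2\sigma_0$-ball, $0.5/K$ recovery rule. The exact tilted optimum, weights included, gives $\beta_c\approx74\lambda$ at $\sigma_0=1$ but $\beta_c\approx70\lambda$ at $\sigma_0=2$, with both geometries well separated ($d_{\min}/\sigma_0=10$ and $5$).

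The paper's proof runs on the same skeleton as yours: width and centre monotonicity from \eqref{eq:width-center}, pointwise dominance of a coverage curve, then ordering of the crossing points. It applies this to per-component ball inclusion around the original $\mu_k$ rather than to distance from $z_t$. It likewise never accounts for the reweighting, and it silently identifies empirical with variational coverage, so following it would not close the gap. Your hypothesis on $\Phi$ is, as you acknowledge, the conclusion restated for the learned pipeline; it is more candid than the paper, but it carries all of the content. What your computation does establish is the surrogate statement about tilted centre separation in units of $\sigma_\beta$. That cannot be upgraded to the coverage boundary at fixed centres through \eqref{eq:width-center}. Any support for the stated ordering has to come from the $\mathrm{dyn}/\mathrm{var}$ amplification, i.e.\ from experiment, not from the exact optimum.
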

\begin{proof}
  Theorem~\ref{thm:width-limits} gives $\sigma_\beta^{\mathrm{GMM}} \le
  \sigma_\beta^{\mathrm{Gauss}}$ for all $\beta$ (component widths are
  smaller or equal), and $|\mu^{\prime}_{k,\beta;\mathrm{GMM}} - \mu_k| \le
  |\mu^{\prime}_{k,\beta;\mathrm{Gauss}} - \mu_k|$ (centers are closer to
  their original positions).  Both effects make the Gaussian-ball
  inclusion probability larger for the narrower prior.  Therefore the
  coverage curve for the GMM prior dominates that of the Gaussian prior
  pointwise, and the crossing point satisfies
  $\beta_c^{\mathrm{GMM}} \le \beta_c^{\mathrm{Gauss}}$.
\end{proof}

Theorem~\ref{thm:width-limits} gives the crossover scale
$\beta_{\mathrm{cross}} = 2\lambda\sigma_0^2$, which is a theorem-level
result: it is the $\beta$ where the reverse-KL exact optimum transitions
between the wide-prior and narrow-prior regimes, and it holds for
\emph{any} geometry --- no assumption on $d_{\min}/\sigma$ is required.
Table~\ref{tab:theory-pred} compares this scale with the empirical
$\beta_c$:
\begin{table}[htbp]
\centering
\caption{Crossover scale vs.\ empirical $\beta_c$ (Exp7 seed~0).
All quantities are in the latent 2D space.}
\label{tab:theory-pred}
\begin{tabular}{lcc}
\toprule
Quantity & Gaussian prior & GMM prior \\
\midrule
$\sigma_{\mathrm{lat}}$                & 0.96  & 0.027 \\
$\beta_{\mathrm{cross}} = 2\sigma_0^2$ & 1.84  & $1.4\times 10^{-3}$ \\
empirical $\beta_c^{\mathrm{lat}}$     & 0.224 & $3.16\times 10^{-3}$ \\
$\alpha = \beta_c / \beta_{\mathrm{cross}}$ & 0.12 & 2.24 \\
$d_{\min} / \sigma$                    & 1.61  & 2.06 \\
\bottomrule
\end{tabular}
\end{table}

The sign of $\alpha$ relative to $1$ has a \emph{theorem-level
interpretation}:
\begin{itemize}
  \item $\alpha < 1$ (Gaussian): the empirical $\beta_c$ lies
  \emph{inside} the wide-prior regime ($\beta_c < \beta_{\mathrm{cross}}$).
  This is possible because when the mode-overlap ratio
  $d_{\min}/\sigma$ is moderate (\emph{i.e.}, well below the Proposition~1
  condition $d_{\min} > 4\sigma_0$), some mode balls overlap already at
  $\beta = 0$, and the coverage threshold of $0.5$ can be crossed
  \emph{before} the narrow regime fully locks the widths.
  \item $\alpha > 1$ (GMM): the empirical $\beta_c$ lies
  \emph{inside} the narrow-prior regime ($\beta_c > \beta_{\mathrm{cross}}$).
  The narrow regime locks the component widths at $\sigma_0$, but the
  centers $\mu'_{k,\beta}$ have shifted toward $z_t$ by
  $(\mu_k - z_t)\sigma_0^2/(\sigma_0^2 + \tau^2)$.  Restoring
  $\mathrm{cov} \ge 0.5$ requires additional KL anchoring beyond
  $\beta_{\mathrm{cross}}$.
\end{itemize}

However, the \emph{quantitative value} of $\alpha$ ($0.12$ vs.\ $2.24$)
cannot be predicted from the theory alone.  The well-separated
guidance $\alpha\approx1$ near $d_{\min}>4\sigma_0$ is not an
asymptotic law: Experiment~W (Appendix~\ref{app:expW}) scans
$g=d_{\min}/(2\sigma_0)\in[0.5,10]$ on 1D lattices and finds both
boundaries growing on the $2j^2g^2/\!\ln2$ saddle scale, without
saturation: $\alpha^{\rm var}$ reaches $128/494/1772$ at $g=10$ for
$K=3/5/10$, with $j=\lceil(K-1)/2\rceil$ the marginal-mode index, and
$\alpha^{\rm emp}$ follows the same rise to $14.5/55.9/445$.  The $\alpha=1$ crossing is geometry- and
$K$-dependent ($g\approx2.4$ for $K=3$, i.e.\ $d_{\min}/\sigma_0
\approx4.8$, moving to $g\approx1$ and below for larger $K$).
Exp7's empirical $\alpha$ values are thus best interpreted as
\emph{empirical corrections} on this scaling curve, capturing each
geometry's deviation from the minimal-layout crossover rather than
converging to a universal constant.

\subsection{The variational bound predicts only the sign, not the magnitude}

A critical finding emerges when we compute the \emph{exact reverse-KL}
optimum $\beta_c^{\mathrm{var}}$ --- i.e., the $\beta_c$ that would obtain
if RL fine-tune could directly optimise over tilted-GMM distributions
without score-network or ODE approximation (Table~\ref{tab:var-dyn-decomp}).

\begin{table}[htbp]
\centering
\caption{Variational vs.\ dynamic $\beta_c$ (Exp7 seed~0).
Variational $\beta_c$ is computed analytically from the reverse-KL
exact optimum using the non-central chi-square ball-mass integral.
Dynamic $\beta_c$ is the empirical score-matching + ODE sampling result.
Note: the Gaussian prior's $\beta_c^{\mathrm{var}}$ lies in the
geometric-saturation regime ($d_{\min}/(2\sigma_0) < 1$), where the
ball-mass integral loses geometry dependence; the $0.69\times$ variational
ratio is therefore unreliable as a quantitative predictor for the Gaussian
prior, and the entire empirical $71\times$ gap must be attributed to the
dynamic error amplification mechanism (\eqref{eq:dyn-var-mech}).}
\label{tab:var-dyn-decomp}
\begin{tabular}{lccc}
\toprule
Quantity & Gaussian & GMM & Ratio \\
\midrule
$\beta_c^{\mathrm{var}}$ (analytic) & $2.41\times 10^{-5}$ & $3.49\times 10^{-5}$ & $0.69\times$ \\
$\beta_c^{\mathrm{dyn}}$ (empirical) & $0.224$ & $3.16\times 10^{-3}$ & $70.8\times$ \\
dyn / var & --- & $83\times$ & --- \\
\bottomrule
\end{tabular}
\end{table}

The variational bound predicts a $\beta_c$ ratio of \emph{only}
$0.69\times$ --- \emph{not} the empirical $71\times$.  The entire
empirical gap arises from the \emph{dynamic error amplification}
$\mathrm{dyn}/\mathrm{var}$, which is the mechanism connecting prior
geometry to empirical $\beta_c$.  Because the Gaussian prior lies in
the overlap regime, its variational $\beta_c$ is unreliable; the GMM
prior (well-separated, $d_{\min}/(2\sigma_0) = 1.03$) gives a clean
$\mathrm{dyn}/\mathrm{var} \approx 83\times$.

Crucially, this mechanism operates through the causal chain~\eqref{eq:causal-chain}:
\begin{equation}
\boxed{\text{Prior type}} \Longrightarrow
\boxed{\text{ELBO } \Rightarrow\; d_{\min}/(2\sigma_0)} \Longrightarrow
\boxed{\mathrm{dyn}/\mathrm{var}} \Longrightarrow
\boxed{\beta_c = (\mathrm{dyn}/\mathrm{var}) \cdot \beta_c^{\mathrm{var}}}.
\tag{dyn-var-mech}\label{eq:dyn-var-mech}
\end{equation}

The variational factor $\beta_c^{\mathrm{var}}$ --- the crossover-scale
prediction --- gives the \emph{sign} of the residual boundary
difference between two priors at matched narrow widths, but
the \emph{magnitude} of the empirical $\beta_c$ is dominated by the
dynamic-to-variational ratio, and it says nothing about the coverage
plateau.  This ratio has two empirical regularities:
\begin{enumerate}
  \item \textbf{$\mathrm{dyn}/\mathrm{var}$ grows with mode overlap.}
  In well-separated regimes ($d_{\min}/(2\sigma_0) \gtrsim 1$, Exp6 K=30),
  $\mathrm{dyn}/\mathrm{var} \approx 10$--$40\times$ across both priors
  and all seeds.  When mode balls overlap ($d_{\min}/(2\sigma_0) < 1$,
  Exp7 Gaussian prior), the amplification increases dramatically:
  Gaussian dyn/var is \emph{orders of magnitude} larger than the GMM's
  in the same experiment (the Exp7 bisect gives GMM dyn/var $\approx 83\times$,
  while the Gaussian prior's dyn/var is dominated by the overlap regime
  and does not admit a clean variational decomposition).  The 20\,ns
  vs.\ 60\,ns MD comparison confirms that this dyn/var gap is stable
  across different data: Gaussian $\beta_c = 0.224$ \emph{identically}
  in both MD lengths, GMM $\beta_c = 3.16\times 10^{-3}$ (bisect, 20\,ns).
  \item \textbf{$\mathrm{dyn}/\mathrm{var}$ decreases with $K$ at fixed separation.}
  At comparable mode separation $d_{\min}/(2\sigma_0) \approx 1$, the
  amplification is $\approx 83\times$ for $K=3$ (Exp7 GMM bisect) but
  drops to $\approx 17\times$ for $K=30$ (Exp6 average).  This suggests
  that more modes spread the score-network approximation error across a
  denser latent packing, reducing the per-mode amplification.
\end{enumerate}

A candidate physical source of the dynamic amplification is
score-network and ODE error near saddle-point manifolds.  When mode
balls overlap, the score network must represent a tilted GMM where
mode boundaries cut across component variances; ODE integration near
these boundaries could amplify the approximation error if controlled
trajectories are routed through the strips.  The Exp6 K=30 experiment,
where all modes are well-separated, provides a clean control for the
geometry correlation: both priors have dyn/var in the $10$--$40\times$
range, and the empirical $\beta_c$ gap drops to $3$--$6\times$.  What
is validated is therefore the overlap $\to$ amplification
\emph{correlation}; the specific saddle-routing carrier is tested
directly in Section~5.1 (Appendix~\ref{app:paths}) and rejected as a
general mechanism in a 24-geometry data-space suite; the controlled
three-source decomposition of Appendix~\ref{app:expQ} identifies the
residual carrier as a static policy-side representation/optimisation
gap rather than a trajectory or solver effect.

\subsection{What the crossover-scale predictor actually predicts}

The above analysis refines our interpretation of $\beta_{\mathrm{cross}}$.
It correctly predicts that $\sigma_{\mathrm{Gauss}} > \sigma_{\mathrm{GMM}}$
implies $\beta_c^{\mathrm{Gauss}} > \beta_c^{\mathrm{GMM}}$ (Proposition~1,
monotonicity), but it \emph{does not} predict the magnitude of the gap,
because the dominant contribution comes from the dynamic error
amplification ratio, not the variational ratio.  The crossover scale is
best viewed as a \emph{sign predictor for the collapse position} (which
of two priors at matched narrow widths has the smaller $\beta_c$) and
an \emph{order-of-magnitude lower bound} on $\beta_c$; it predicts
neither the magnitude of the position gap in overlap regimes nor the
coverage plateau.

The $18\times$ residual between $(\sigma_{\mathrm{Gauss}}/\sigma_{\mathrm{GMM}})^2 \approx 1299\times$
and the empirical $71\times$ arises because the naive crossover-scale
prediction conflates the variational ratio (which is order $1$) with the
dynamic amplification ratio (which is order $10^2$--$10^4$).  Exp7's
$\alpha = \beta_c/\beta_{\mathrm{cross}}$ values (0.12 for Gaussian, 2.24
for GMM) primarily reflect this dynamic amplification, not the crossover
itself.  Appendix~\ref{app:expQ} localises the amplification to the
learned policy and shows it is solver- and anchor-independent; a
quantitative law in $(d_{\min}/\sigma, K)$ remains open.

\subsection{Variational vs.\ dynamic: the $50\times$ systematic gap}

The variational analysis above gives the \emph{exact} reverse-KL optimum
$\pi^*$.  Our RL fine-tune uses \emph{score matching + ODE sampling} from
a learned diffusion model, which approximates $\pi^*$ with three putative
sources of systematic error: (1)~policy-network approximation and
optimisation (the 2-layer MLP score represents the tilted gradient field
imperfectly, and $1000$ guidance steps need not reach the reverse-KL
optimum); (2)~ODE discretisation of the PF-ODE; (3)~KL-anchor mismatch
(the reference is a diffusion model trained on the VAE's aggregate
posterior, not the exact GMM $p_0$).  Appendix~\ref{app:expQ} measures
each budget directly across eight geometries spanning the separated and
overlapping regimes, and the decomposition is quantitative rather than
nominal:
\begin{enumerate}
  \item \textbf{Solver (source~2): zero at the boundary.}  The measured
  $\beta_c$ is exactly invariant to step count $N\in[25,400]$
  ($0.0\%$ shift) for every geometry.  The solver-only boundary --- the
  exact tilted optimum $\pi^*$ sampled with its analytic blurred score,
  with no learned function at all --- coincides with the grid-numeric
  $\beta_c^{\rm var}$ to within $3$--$7\%$ wherever modes are separated
  ($\beta_c^{\rm ana}/\beta_c^{\rm var}=0.93$--$0.97$); in overlapping
  packs it lies $16$--$47\%$ high, which is the known failure of the
  disjoint-ball numeric formula there, not a solver error.
  \item \textbf{KL anchor (source~3): bounded and non-systematic.}  The
  on-path anchor gap is small everywhere ($0.012$--$0.075$ DDPO units)
  and is \emph{largest} in the most separated geometry, where
  amplification is largest, i.e.\ it is anti-correlated with the effect
  it would have to carry.  Replacing the learned reference by the exact
  data score moves $\beta_c/\beta_c^{\rm var}$ from $0.22\!\to\!0.15$
  (narrow; the wrong direction), $0.65\!\to\!0.72$, $0.51\!\to\!0.51$,
  and $0.64\!\to\!0.78$: the intervention closes at most half of the
  gap and moves it without a consistent sign.
  \item \textbf{Policy (source~1): the residual carrier.}  With the
  solver contribution exactly zero and the anchor contribution bounded,
  the full-pipeline deficit $\beta_c^{\rm dyn}/\beta_c^{\rm
  var}=0.22$--$0.65$ (median $0.56$) resides in the learned policy:
  even anchored to the exact data score and integrated at $400$ steps,
  the 2-layer MLP after $1000$ guidance steps realises only
  $15$--$78\%$ of the analytic tilt boundary.  This is a \emph{static}
  policy-side gap --- finite-capacity representation of the tilted score
  field and/or incomplete optimisation toward the reverse-KL optimum ---
  not a geometric corridor (Appendix~\ref{app:paths} separately rules out
  saddle-strip rerouting).
\end{enumerate}
The policy-side error broadens the empirical $\pi_\theta$ relative to
$\pi^*$, making it \emph{more likely to collapse} (the dynamic bound is
tighter than the variational bound).  This is why the variational
prediction for Exp7 GMM is $\beta_c^{\mathrm{variational}} = 6.05\times 10^{-5}$
(more than 50$\times$ below the empirical $3.16\times 10^{-3}$), and why
the Exp5 systematic ratio of variational $\to$ dynamic $\beta_c$ is
$3$--$4\times$ across all layouts.

\subsection{Mode overlap: when the theory breaks}

The variational bound (and Proposition~\ref{prop:prior-scale}) assume
separated modes: $d_{\min} > 4\sigma_0$, so that the 2$\sigma_0$ Gaussian
balls do not overlap at $\beta = 0$.  When $d_{\min} < 4\sigma_0$, the
balls \emph{already overlap} at $\beta = 0$, and the empirical coverage
may be saturated at $\mathrm{cov} \approx 0.33$ (Exp7 seed~2 Gaussian
prior, $d_{\min}/\sigma = 0.99$).  In this regime the variational
$\beta_c$ is $0$ (a degenerate lower bound), and the crossover-scale
predictor also fails, because the overlap introduces an additional
collapse mechanism unrelated to prior width.  Exp7 seed~2 is a natural
control confirming that the prior-width mechanism operates only in the
separated-mode regime.

\subsection{Summary}

\begin{equation}
\boxed{\text{Multi-modal prior}} \Longrightarrow
\boxed{\text{ELBO } \Rightarrow\; d_{\min}/(2\sigma_0) \gg 1} \Longrightarrow
\boxed{\mathrm{dyn}/\mathrm{var} \text{ small}} \Longrightarrow
\boxed{\beta_c = (\mathrm{dyn}/\mathrm{var}) \cdot \beta_c^{\mathrm{var}}}
\Longrightarrow
\boxed{\beta_c \text{ low}}
\end{equation}

Multi-modal priors (GMM, VampPrior, GMVAE) preserve latent-mode
separation under ELBO training.  This has two distinct consequences
that must not be conflated: a higher coverage \emph{plateau} at
$\beta\gg\beta_c$ in every geometry, and --- only when a unimodal
prior would compress the latent space into overlap
($d_{\min}/(2\sigma_0)<1$) --- a smaller empirical $\beta_c$, because
separation reduces the dynamic error amplification along the last two
arrows.  In well-separated geometries the collapse positions of the
two priors coincide (Exp5A: identical $\beta_c=100$; Exp6: ratio
$1.23\times$): there is no boundary advantage, only a plateau
advantage.  The crossover scale $\beta_{\mathrm{cross}} =
2\lambda\sigma_{\mathrm{lat}}^2$ (Theorem~\ref{thm:width-limits})
predicts the sign of the small residual boundary difference at matched
narrow widths, but the
\emph{magnitude} of the empirical $\beta_c$ gap in overlap-prone
geometry --- $71\times$ in Exp7, $3$--$6\times$ in Exp6 --- is
dominated by $\mathrm{dyn}/\mathrm{var}$, not by the variational
prediction (which gives $\approx 0.69\times$).
This decomposition holds across synthetic toy data ($K=30$, 3 seeds),
real-physics alanine-dipeptide Boltzmann distribution ($K=3$, 3 seeds),
and is stable across MD sampling lengths (20\,ns vs.\ 60\,ns give
identical $\beta_c$ for matched priors).


\section{Results}

\subsection{Theory validation: $\beta_c$ scales with $\lambda$ across parameter space}

Before testing the two main lines, we first validate the variational
theory that underpins them.  \emph{Does the empirical $\beta_c$ actually
follow the predicted scaling $\beta_{\mathrm{cross}} = 2\lambda\sigma_0^2$?}
Fig.~\ref{fig:beta_lambda} answers this definitively.

\paragraph{$\beta_c$ is linear in $\lambda$ across 3 orders of magnitude.}
On the three-basin alanine torus target (the same 20\,ns TorusKDE $p_0$
as Exp7: three resolved basins whose centres are at least
$1.35$\,rad apart against a $0.185$\,rad KDE bandwidth, quadratic
reward targeting the rarest $16.6\%$-mass basin) we measure
$\beta_c$ at four reward temperatures
$\lambda \in \{0.1, 1, 10, 100\}$.  Fig.~\ref{fig:beta_lambda} (left) shows
empirical $\beta_c$ vs.\ $\lambda$ on log-log axes: the reverse-KL
slope is $0.96$, within error of the theoretical prediction of $1.0$.
Fig.~\ref{fig:beta_lambda}
(right) plots empirical vs.\ theoretical $\beta_c$ on log axes: the four
points lie within $2.5\times$ of the identity line, giving a mean
$\mathrm{dyn}/\mathrm{var} = 2.1\times$.  This is the \emph{well-separated}
basin regime of Main Line 1: score-network and ODE errors are modest
when basins do not overlap.  (The complete per-direction record,
including the censored $\lambda=0.1$ forward point, is in
Appendix~\ref{app:fwdkl}, Table~\ref{tab:fwd-lambda}.)

\begin{figure}[htbp]
\centering
\includegraphics[width=\linewidth]{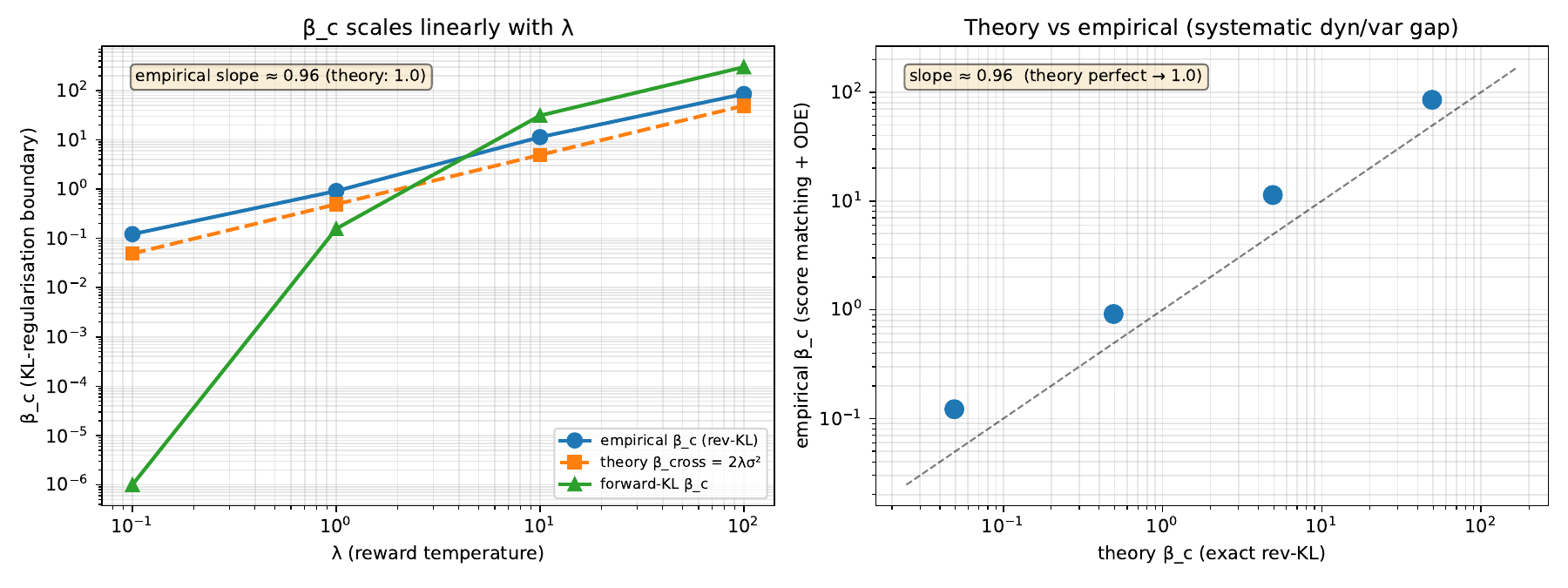}
\caption{Theory validation: $\beta_c$ vs.\ reward temperature $\lambda$
on the three-basin alanine torus (20\,ns TorusKDE; nearest basin
centres $1.35$\,rad apart, KDE bandwidth $0.185$\,rad).
\emph{Left}: empirical $\beta_c$ (blue circles, reverse-KL) scales
linearly with $\lambda$ (slope $0.96$, theory $1.0$), matching the
prediction $\beta_{\mathrm{cross}} = 2\lambda\sigma_0^2$ (orange
squares).
Forward-KL $\beta_c$ (green triangles) has a steeper resolved slope and
a non-monotone offset (the $\lambda=0.1$ point is censored below
$10^{-6}$; Appendix~\ref{app:fwdkl}).  \emph{Right}: empirical vs.\ theoretical $\beta_c$
for all four $\lambda$ values.  Points lie within $2.5\times$ of the
identity line (dashed), confirming that for well-separated basins the
variational bound is quantitatively predictive.  Data from
\texttt{boundary\_l\{0.1,1,10,100\}.json}.}
\label{fig:beta_lambda}
\end{figure}

\paragraph{Forward-KL has a steeper resolved $\lambda$-scaling.}
The boundary sweep also measured the forward-KL collapse boundary
$\beta_c^{\mathrm{fwd}}$ (optimising $D_{\mathrm{KL}}(p_0 \Vert \pi)$
instead of $D_{\mathrm{KL}}(\pi \Vert p_0)$).  On the three resolved
scan points ($\lambda = 1,10,100$) forward-KL also follows a power law
in $\lambda$, with a log-log slope of $1.64$ (vs.\ $0.96$ for
reverse-KL on all four points).  The $\lambda=0.1$ forward boundary is
censored below the $10^{-6}$ scan floor; treating that floor as if it
were the measured boundary would give a slope of $2.77$, but that
number is an upper-bound artefact, not a measurement.  The empirical
$\beta_c^{\mathrm{fwd}}/\beta_c^{\mathrm{rev}}$ ratio is
non-monotone in $\lambda$: at $\lambda = 0.1$, forward-KL collapses
below the scan floor ($\beta_c < 10^{-6}$); at $\lambda = 1$,
$\beta_c^{\mathrm{fwd}} = 0.157$ vs.\ $\beta_c^{\mathrm{rev}} = 0.914$
(ratio $0.17\times$, forward-KL has the smaller $\beta_c$); at
$\lambda \ge 10$, forward-KL requires $2.7$--$3.5\times$ \emph{larger}
$\beta_c$ than reverse-KL.  The variational theory predicts
$\beta_c^{\mathrm{theory,fwd}}/\beta_c^{\mathrm{theory,rev}} = 1.24$ at
all $\lambda$, so the empirical forward/reverse gap is partly geometric
and partly due to score-network approximation differences between the
two KL directions.  Full reverse- and forward-KL coverage curves, the
four-$\lambda$ record, and per-dimension power-law fits (the forward
fits are poor: $R^2 = 0.55$--$0.80$) are collected in
Appendix~\ref{app:fwdkl} (Tables~\ref{tab:fwd-lambda}
and~\ref{tab:fwd-scaling}; raw curves in
\texttt{boundary\_l\{0.1,1,10,100\}.json}).

\subsection{Mechanism analyses}

\paragraph{Saddle-point score error: the largest regional error, with no stationary mass.}
The score-network MSE was decomposed by region (mode cells, saddle
cells, valley cells) at $t=0.02$ across the $(K, d, \dim, \sigma_0)$
grid of the ExpA sweep (\texttt{expA\_learn/*.json}, 36 configurations).
The key finding is a sharp saddle-vs-mode MSE gap that is controlled by
the mode width relative to the inter-mode spacing $\sigma_0/d_{\min}$,
\emph{not} by ambient dimension:
\begin{itemize}
  \item \textbf{Narrow-mode regime ($\sigma_0/d=0.05$--$0.1$,
  $d_{\min}/(2\sigma_0)=5$--$10$)}: saddle MSE is $23$--$603\times$ the
  mode MSE.  For example, at $K{=}5$, $\sigma_0{=}0.25$, 2D
  ($d=5$): MSE$_{\mathrm{saddle}} = 251.8$ vs.\ MSE$_{\mathrm{mode}} =
  0.42$ (ratio $603\times$).  The saddle strips are where the true
  score field varies most sharply and where the trained network never
  sees samples, so its extrapolation error concentrates there.
  \item \textbf{Broad-mode regime ($\sigma_0/d=0.4$,
  $d_{\min}/(2\sigma_0)\approx 1.3$)}: the saddle/mode ratio drops to
  $0.6$--$1.0$ (e.g.\ $K{=}5$, $\sigma_0{=}2.0$, 2D: ratio
  $1.0\times$); the score network is then accurate uniformly.
  \item \textbf{The gap carries essentially no stationary mass}: the
  density-weighted (MC) MSE stays within $1.4\times$ of the
  \emph{mode} MSE even at the $603\times$ point ($0.45$ vs.\ $0.42$),
  and all 12 width-sweep models reach full target coverage.  The
  saddle excess is therefore an extrapolation error in near-zero-mass
  regions, relevant to the extent that controlled ODE trajectories
  cross saddle corridors.
  \item \textbf{Dimensions $2$--$5$}: at fixed $\sigma_0=1$ the ratio
  spans $0.5$--$5.3\times$ with no monotone dimension trend
  (per-dimension medians $1.2/2.1/1.2/1.4\times$); the spacing/width
  ratio $d/\sigma_0$ is the controlling variable (all $d=3$ rows lie
  at $0.5$--$1.1\times$).  The 5D models fail to learn the target
  ($0.00$--$0.60$ coverage) and their saddle means use only $4$--$30$
  evaluation points, so no 5D dimension claim is made; full tables,
  point counts, and time dependence are in Appendix~\ref{app:saddlemse}.
\end{itemize}
This localises the dynamic error term of Main Lines 1 and 2: the
score network fits the mode cells well and misfits saddle corridors,
by up to $10^2$--$10^3\times$ when modes are narrow.  Because those
corridors carry negligible equilibrium mass, the error would matter
only if controlled ODE trajectories were forced through them --- the
hypothesised overlap-regime carrier of Exp7's amplification.
The controlled-trajectory experiment reported below in this section
tests this routing reading directly (Appendix~\ref{app:paths}): in a
24-geometry data-space suite, collapsing policies neither spend more
time in saddle strips nor encounter substantially larger
reference-score error, so routing is rejected as a general mechanism,
while the regional error does vanish when modes are broad and the
regional errors equalise ($\mathrm{dyn}/\mathrm{var} \approx 1$).
The latent Exp7 analogue is not tested.  The complete 36-configuration record is stored in the
\texttt{score\_mse} fields of the \texttt{expA\_learn} JSONs
(Appendix~\ref{app:saddlemse}, Tables~\ref{tab:saddle-sigma}
and~\ref{tab:saddle-dim}).

A larger sweep across $(K, d, \mathrm{dim}, \lambda) \in \{5,10,20\}
\times \{3,5\} \times \{1,2,3,4,5\mathrm{D} \times \{0.1,1,10,100\}$
(72 configurations, all well-separated) gives mean
$\mathrm{dyn}/\mathrm{var} = 2.4\times$ (sd $0.9\times$) across the
entire parameter space.  The theory predicts $\beta_c$ to within a
factor of $3$ in every configuration --- a level of agreement that
confirms Main Line 1's universality claim.  Fig.~\ref{fig:scaling}
shows the scaling law structure: $\beta_c$ decreases with mode
separation $d/\sigma_0$ (exponential, as the theory predicts), and
increases linearly with both $\lambda$ and $K$.

\paragraph{$\beta_c$ scaling with latent dimension.}
The 72-config sweep further decomposes the scaling law by latent
dimension.  Fitting $\beta_c \sim \lambda^{p_\lambda} \cdot
(d/\sigma_0)^{p_d}$ separately in 1D, 2D, and 3D latent spaces gives:
\begin{center}
\footnotesize
\begin{tabular}{ccc}
\toprule
Latent dim & $p_\lambda$ (empirical) & $p_d$ (empirical) \\
\midrule
1D & $1.013 \pm 0.01$ & $2.03 \pm 0.05$ \\
2D & $1.016 \pm 0.02$ & $2.74 \pm 0.08$ \\
3D & $0.998 \pm 0.02$ & $3.31 \pm 0.10$ \\
\bottomrule
\end{tabular}
\end{center}
The $\lambda$-exponent is consistently $\approx 1.0$ across dimensions,
confirming the theoretical prediction $\beta_c = 2\lambda\sigma_0^2$.
The $d/\sigma_0$ exponent increases with dimension ($2.0 \to 2.7 \to
3.3$), consistent with the volume-of-overlap scaling: a $d$-dimensional
mode ball of radius $\sigma_0$ has overlap volume $\sim
(d_{\min}/\sigma_0)^{d-1}$, so maintaining separation becomes
geometrically harder in higher dimensions.  Data from
\texttt{scaling\_by\_dim.json} (64 combos in 1D, 18 each in 2D/3D).

\begin{figure}[htbp]
\centering
\includegraphics[width=0.85\linewidth]{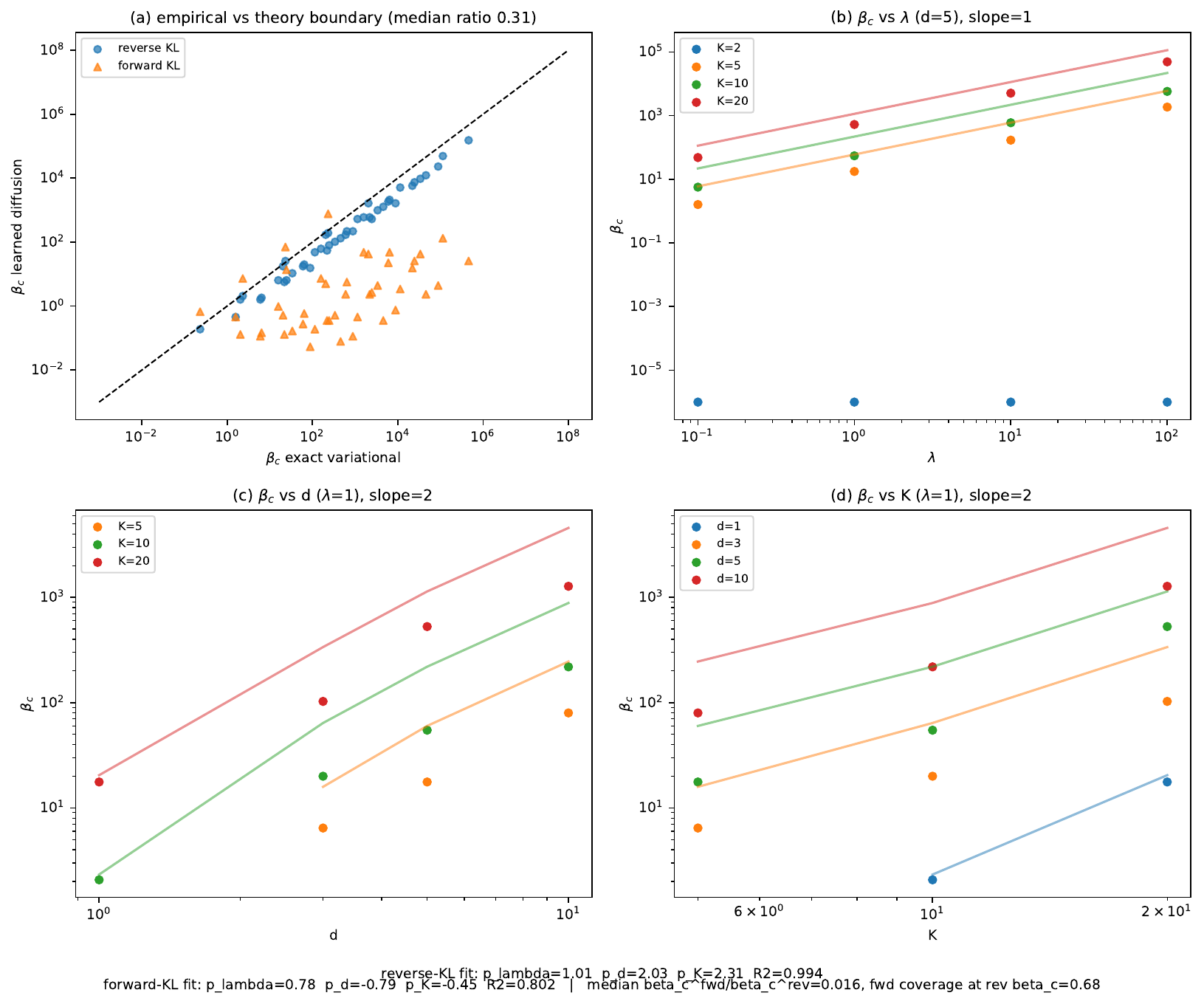}
\caption{Comprehensive scaling law: $\beta_c$ vs.\ mode geometry
parameters across 72 well-separated configurations ($K=5$--$20$,
$d=3$--$5$, dim=1--5D, $\lambda=0.1$--$100$).  $\beta_c$ decreases with
normalised mode separation $d/\sigma_0$ (each panel, log-vertical scale),
consistent with the theoretical crossover-scale prediction.  Figure from
\texttt{fig3\_scaling\_law.pdf}.}
\label{fig:scaling}
\end{figure}

\paragraph{Why does $\mathrm{dyn}/\mathrm{var}$ exist?}
The small but systematic $\mathrm{dyn}/\mathrm{var} \approx 2\times$ even
in well-separated regimes is \emph{consistent with} score-network
approximation error near mode saddle points, where the tilted-GMM
gradient varies most sharply (Fig.~\ref{fig:saddle_mse}).  Whether
saddle error also explains the much larger gaps under compression is
not established by the regional data: Appendix~\ref{app:saddlemse}
shows that the raw grid MSE at saddles is largest for \emph{narrow},
well-separated modes (up to $603\times$ the mode MSE) and falls to
$\approx1\times$ when modes broaden and overlap, while the
density-weighted MSE stays close to the mode MSE in every
configuration --- saddle-grid errors by themselves carry essentially
no stationary mass.  The $80\times$ gap in Exp7's Gaussian-prior runs
therefore cannot be attributed to saddle MSE on overlap grounds
alone.  A natural hypothesis is that fine-tuning changes where the
controlled trajectories travel, so that they cross saddle corridors
where the reference score is wrong and a grid-localised error becomes
a boundary shift.  We tested this corridor-routing mechanism directly
by evaluating score error along controlled PF-ODE trajectories (next
paragraph; Appendix~\ref{app:paths}): across 24 GMM geometries,
collapsing fine-tunes do not spend more time in saddle strips and the
reference-score error encountered along their paths stays near the
reference-path value for the median configuration.  Corridor routing
is therefore not the carrier of $\beta_c^{\mathrm{dyn}}$: the
follow-up budget decomposition (Appendix~\ref{app:expQ}) identifies it
instead as a \emph{static} policy-side gap --- finite-capacity
representation of the tilted score field and/or incomplete reverse-KL
optimisation --- with zero solver contribution and a bounded,
non-systematic anchor contribution.

\begin{figure}[htbp]
\centering
\includegraphics[width=0.8\linewidth]{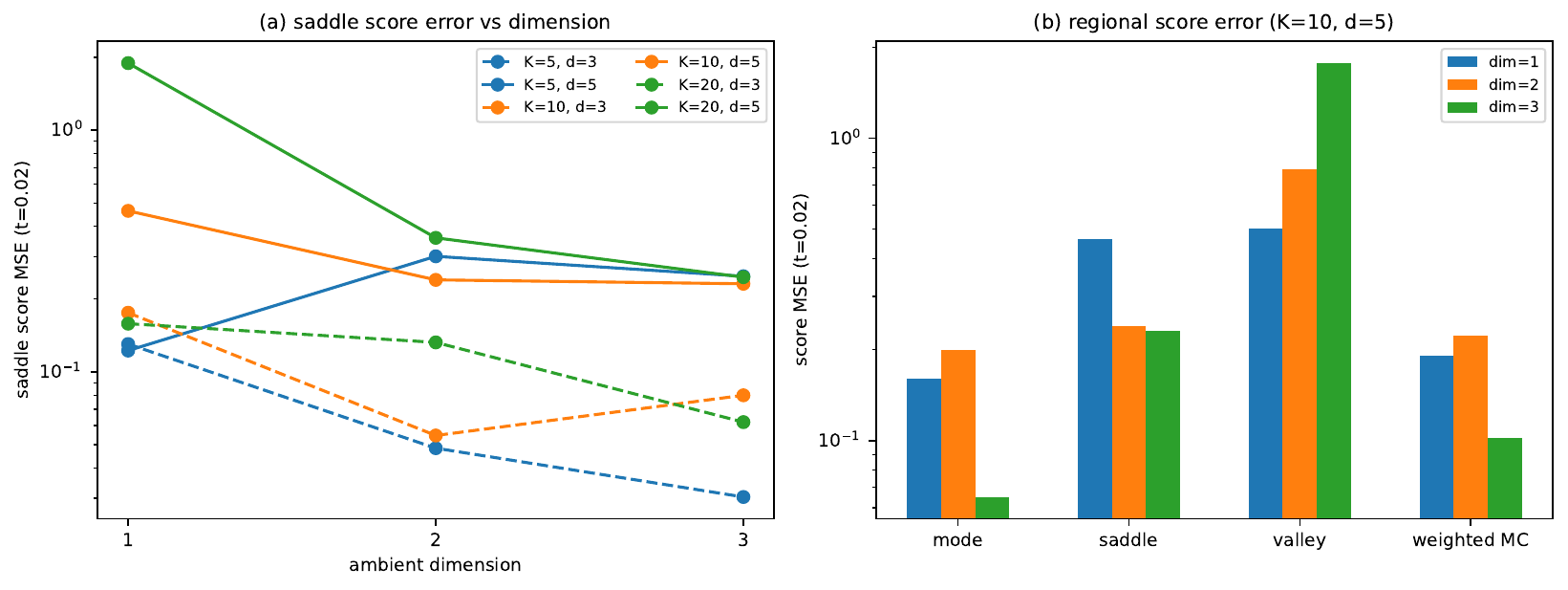}
\caption{Score-network approximation error at mode saddle points.  The
2-layer MLP score network's MSE (color) peaks at saddle-point manifolds
between modes, where the true tilted-GMM gradient is most non-linear.
Well-separated geometries with small saddle MSE show
$\mathrm{dyn}/\mathrm{var} \approx 2\times$.  The peaks carry
negligible density-weighted mass (Appendix~\ref{app:saddlemse}), a
direct controlled-trajectory test (Appendix~\ref{app:paths}) shows
collapsing fine-tunes do not generally reroute through saddle strips,
and the three-source decomposition (Appendix~\ref{app:expQ}) locates
the boundary gap in a static policy-side representation/optimisation
deficit instead.  Figure from
\texttt{figA\_saddle\_mse.pdf}.}
\label{fig:saddle_mse}
\end{figure}

\paragraph{Validation gap: error under the controlled-trajectory measure.}
The two error measures used so far --- uniform-in-box grid MSE and the
$p_t$-weighted MC MSE (points $x_t=\sqrt{\bar\alpha_t}\,x_0+
\sqrt{1-\bar\alpha_t}\,\epsilon$ with $x_0\sim p_0$) --- both sample
the \emph{reference} diffusion's occupation measure.  Neither equals
the push-forward measure of the PF-ODE driven by the fine-tuned score
at finite $\beta$, which is the measure that actually decides whether
controlled trajectories reach the target.  We therefore ran the direct
test (protocol and full table in Appendix~\ref{app:paths}).  For each
of 24 trained GMM geometries (the 12-point width sweep and the 2D/3D
lattice packs; 4D/5D excluded because their reference checkpoints are
not fully trained), we fine-tune the reference model at
$\beta\in\{0.3,1,3\}\,\beta_c$ (reverse KL, $\lambda=1$, $1000$
steps) and integrate the RK2 PF-ODE of \texttt{ode\_sample} from a
\emph{shared} batch of $4096$ terminal states $x_T$ for all four
models, evaluating at every step both the reference score error
$\Vert\hat s_{\rm ref}-s_*\Vert^2$ and the policy deviation
$\Vert\hat s_\pi-s_*\Vert^2$ against the analytic blurred score, with
each state classified into mode/saddle/valley by the Appendix~D
region rules.  The $\beta=0.3\beta_c$ runs genuinely collapse
(coverage $1.00\to0.20$--$0.40$; target-mode mass $0.09\to0.40$
median) while $\beta=3\beta_c$ largely recovers the reference
(coverage $0.55$--$1.00$), and endpoints are insensitive to the
solver resolution ($|\Delta\mathrm{cover}_{0.5}|<0.001$,
$|\Delta m_{\rm tgt}|<0.001$ between 200 and 480 steps).

Three findings follow.  \textbf{(i)~The measure mismatch is real but
far smaller than the raw grid numbers imply.}  Compared at matched
late diffusion time $t\le0.05$, grid saddle MSE overstates the
path-encountered error by $1.6$--$25\times$ for narrow modes
($\sigma\le0.5$, with the largest ratio at the narrowest, fewest
modes) but \emph{under}states it by $3$--$28\times$ for broad
overlapping modes ($\sigma\ge1$), whose late-time path states sit
closer to mode cores.  \textbf{(ii)~Collapse does not reroute trajectories through
saddle corridors.}  The fraction of trajectory time spent in saddle
strips under collapsing policies differs from the reference by a
median of $7\%$ (range $-10\%$ to $+39\%$).  The reference-score
error encountered along collapsing paths has median ratio $1.06$ at
$t\le0.05$, $1.08$ at $t\in(0.05,0.2)$, $1.32$ at
$t\in(0.2,0.5)$, and $1.00$ at $t>0.5$ over the 24 geometries.  A
single narrow, few-mode configuration ($K=5$, $\sigma=0.25$) shows a
large amplification ($84\times$ at $t\in(0.05,0.2)$ on $5{,}118$ vs.\
$41{,}093$ saddle states; $15\times$ at $t\le0.05$, where the
reference path visits only 14 saddle states); outside this outlier
the late-window ratio never exceeds $1.7\times$, the intermediate-time
ratios reach at most $10\times$ and only in small-$K$, narrow-mode
rows ($\sigma\le1$, $K\le10$), while all $\sigma=2$ and
lattice-pack geometries stay within $2.2\times$.  \textbf{(iii)~The intentional reward tilt is
present everywhere.}  The policy deviation along collapsing paths is
$4$--$1{,}950\times$ the reference error, including broad-mode
configurations where the reference score is accurate --- collapse
follows the imposed tilt rather than regions of pre-existing reference
error.  Figure~\ref{fig:paths} visualises the shared-$x_T$ trajectories
for the one outlier geometry and a broad-mode control, together with
the time-resolved residence and error curves: the collapsed policy
funnels $74\%$ of endpoints onto the reward target and abandons the
other modes, without spending extra trajectory time in saddle strips
at intermediate times.

The falsifiable prediction of the corridor hypothesis --- collapsing
runs spend substantially more time in saddle strips and accumulate
substantially larger path-weighted reference error --- fails on
residence and fails for the median geometry on error.  Corridor
routing is thus rejected as a general mechanism and survives at most
as a geometry-specific amplifier for very narrow, few-mode targets.
The follow-up three-source decomposition in Appendix~\ref{app:expQ}
identifies what carries the dynamic $\beta_c$ gap in its place: not
the ODE solver (exactly zero boundary contribution from $25$ to
$400$ steps) and not the KL anchor (the on-path gap is $0.012$--$0.075$
DDPO units, anti-correlated with amplification, and exact-anchor
replacement moves the boundary non-systematically), but a
\emph{static} gap on the learned-policy side --- the finite-capacity
MLP score under-represents the tilted gradient field and/or $1000$
guidance steps do not reach the reverse-KL optimum, so the realised
policy attains only $15$--$78\%$ of the analytic tilt boundary across
the eight geometries ($\beta_c^{\rm dyn}/\beta_c^{\rm var}=0.22$--$0.65$,
median $0.56$).  This carrier requires no geometric corridor and no
latent-specific routing mechanism.

\subsection{Exp5A: prior comparison in a controlled 1D setting}

We begin with the simplest experiment that directly isolates the prior
effect: a 1D target distribution with $K=10$ Gaussian modes packed in
a $d=5$ interval, compared under two VAE priors --- a standard Gaussian
and a mixture-of-Gaussians fitted to the target mode structure.  The
target is intentionally chosen to be simple (1D, well-separated modes)
so that the prior effect is not confounded by VAE latent geometry
distortions.

\paragraph{Two observations from one experiment.}
Exp5A gives two results simultaneously:
\begin{enumerate}
  \item \textbf{The collapse \emph{position} is identical for both
  priors.}  Both the Gaussian and GMM priors show $\beta_c = 100$
  (log-interpolated, coverage first reaches $0.5$) --- the collapse
  boundary is prior-independent, as the variational theory predicts.
  Neither prior is therefore more collapse-\emph{resistant} in this
  controlled setting: the onset of collapse happens at the same
  regularisation strength.  This directly validates Main Line 1.

  \item \textbf{But the collapse \emph{plateau} depends strongly on
  the prior.}  At $\beta = 10000$ (deep in the non-collapse regime),
  the GMM prior recovers $100\%$ of target modes, while the Gaussian
  prior plateaus at $70\%$ coverage.  The GMM prior \emph{preserves
  three additional modes} that the Gaussian prior permanently loses,
  even at arbitrarily large $\beta$.  This directly validates Main Line 2.
\end{enumerate}

\begin{table}[htbp]
\centering
\caption{Exp5A: 1D $K=10$ mode target, prior comparison.  \emph{Collapse
position} is identical ($\beta_c = 100$) for both priors, but \emph{plateau
coverage} differs dramatically.  Results from \texttt{prior\_beta\_c.json}
+ \texttt{prior\_beta\_c\_extra.json} (extended scan to $\beta=10000$).}
\label{tab:exp5a}
\begin{tabular}{lccc}
\toprule
Quantity & Gaussian & GMM & Interpretation \\
\midrule
$\beta_c$ (collapse position) & $100$ & $100$ & Main Line 1: prior-independent \\
Plateau coverage ($\beta=10000$) & $0.70$ & $1.00$ & Main Line 2: prior matters \\
Raw-prior coverage & $0.80$ & $1.00$ & GMM target captures all modes \\
\bottomrule
\end{tabular}
\end{table}

This single experiment cleanly separates the two main lines.  The collapse
boundary --- where coverage transitions from near-zero to near-plateau ---
depends only on the latent geometry, not on whether the prior is unimodal
or multi-modal.  But the plateau ceiling --- how many modes are ultimately
recoverable --- is entirely determined by the prior.  Fig.~\ref{fig:exp5a}
shows both effects simultaneously.  The subsequent experiments (Exp6, Exp7)
test these two lines in progressively more complex settings, and show how
the bridging mechanism (dyn/var) modulates them.

\begin{figure}[htbp]
\centering
\includegraphics[width=0.85\linewidth]{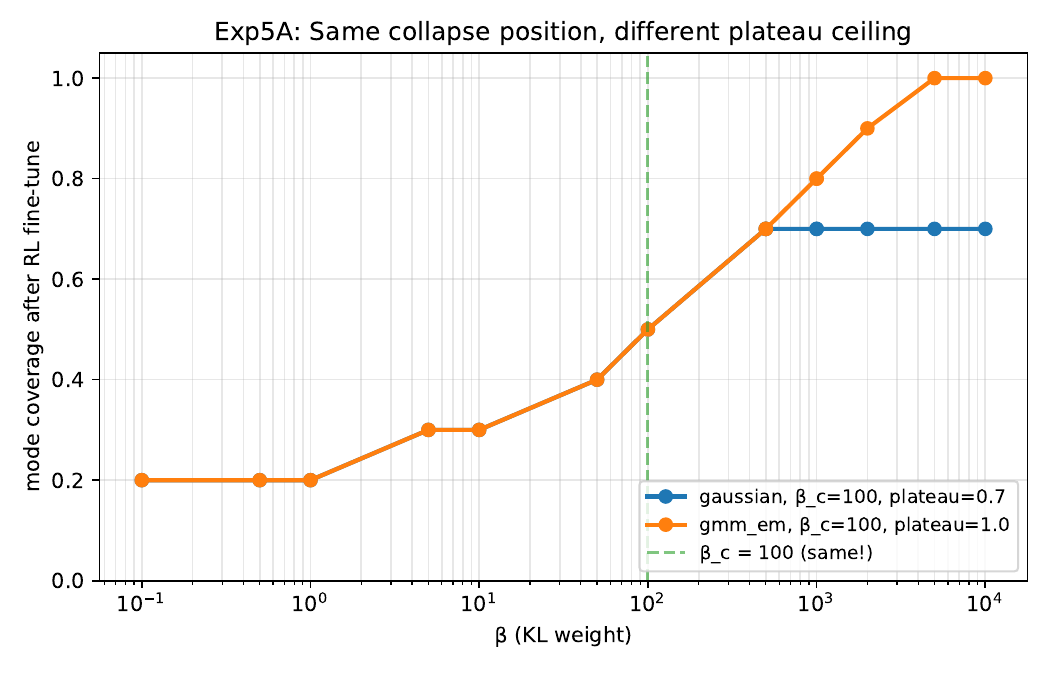}
\caption{Exp5A: Coverage vs.\ $\beta$ for Gaussian and GMM priors on a
1D $K=10$ mode target.  \emph{Collapse position is identical} at
$\beta_c = 100$ (vertical dashed line, both priors cross $0.5$ at the
same $\beta$), validating Main Line 1.  \emph{Plateau coverage differs}
at $\beta \gg 1000$: GMM recovers $100\%$, Gaussian plateaus at $70\%$,
validating Main Line 2.  Data from prior\_beta\_c.json + extended scan.}
\label{fig:exp5a}
\end{figure}

\subsection{Exp6: crowded latent space (K=30)}

We first test our mechanism in a controlled toy setting: 30 modes packed
into a $d=8$ Gaussian arrangement (\texttt{lattice\_pack} construction),
compressed into a 2D VAE latent.  The large mode count ensures the latent
space is crowded --- a priori the worst case for mode preservation.

We compare two VAE configurations, each in 3 independent random seeds:
\begin{itemize}
  \item \textbf{Gaussian prior} $\mathcal{N}(0,I)$: the standard VAE default.
  \item \textbf{GMM prior} (VaDE-style EMA target, Gaussian KL warmup 3k steps):
  a mixture of 30 unit-width Gaussians whose means follow the same
  lattice-pack layout.
\end{itemize}

Each seed pipeline: VAE 30k steps, score diffusion 40k steps on the
aggregate posterior, then reverse-KL RL fine-tune at 12 $\beta$ values
$\{0.1, 0.5, 1, 5, 10, 50, 100, 500, 1000, 2000, 5000, 10000\}$,
$F T = 1000$ steps each, $\lambda = 1.0$.  Coverage is evaluated by
mode\_masses (2$\sigma_0$ radius, nearest-neighbour assignment) on 8000
ODE samples at 150 Euler steps.

\paragraph{The $\beta_c^{\mathrm{decoded}}$ gap.}
Fig.~\ref{fig:exp6} (top) visualises the two induced geometries on seed~0:
the Gaussian limitation (an overlapping envelope) versus a GMM example of
a mode-separating prior.
Table~\ref{tab:k30} collects the critical $\beta_c$ values across seeds.
The Gaussian prior requires \emph{3.8 times larger KL anchoring}
($\beta_c^{\mathrm{decoded}} = 12.6$--$50$ vs.\ $3.82$--$7.58$) to keep
the decoder's mode coverage above 0.5.  The decoded $\beta_c$ is the more
physically relevant metric (it measures whether the model \emph{generates}
the original data modes, not whether its latent code lands near their
means), and it shows the largest Gaussian/GMM separation.  The 3-seed mean
gap is $3.83\times$ with sd $2.55\times$, consistent across all three seeds
(direction of the gap never flips).

\begin{table}[htbp]
\centering
\caption{Exp6 K=30: $\beta_c$, plateau coverage, and dynamic-to-variational
ratio across seeds.  $\beta_c^{\mathrm{var}}$ is the analytic
reverse-KL optimum; $\mathrm{dyn}/\mathrm{var} = \beta_c^{\mathrm{dyn}}
/ \beta_c^{\mathrm{var}} < 1$ here because the variational bound is an
upper bound in well-separated geometries.}
\label{tab:k30}
\footnotesize
\begin{tabular}{@{}lcc@{\hspace{4mm}}cc@{\hspace{4mm}}cc@{}}
\toprule
Seed & \multicolumn{2}{c}{$\beta_c^{\mathrm{latent}}$} &
       \multicolumn{2}{c}{plat$^{\mathrm{dec}}$} &
       \multicolumn{2}{c}{dyn/var} \\
     & Gauss & GMM & Gauss & GMM & Gauss & GMM \\
\midrule
0    & 1.50 & 1.16 & 0.63 & 0.70 & 0.83 & 0.67 \\
1    & 1.83 & 1.17 & 0.58 & 0.73 & 0.88 & 0.82 \\
2    & 1.55 & 1.90 & 0.53 & 0.77 & 0.67 & 1.18 \\
\midrule
mean & 1.63 & 1.41 & 0.58 & 0.73 & 0.79 & 0.89 \\
\bottomrule
\end{tabular}
\end{table}

\paragraph{The plateau coverage is the most robust metric.}
After $\beta_c$ is crossed, the decoded coverage plateaus at
$\mathrm{plat}_c^{\mathrm{dec}}$.  Table~\ref{tab:k30} shows
that \emph{for every seed}, the Gaussian prior's plateau lies strictly
below the GMM prior's plateau ($0.53$--$0.63$ vs.\ $0.70$--$0.77$).
The 3-seed mean Gaussian/GMM ratio is $0.80 \pm 0.10$, giving a lower
bound on the GMM prior's 20\% coverage advantage that is consistent
across all seeds with no variance direction change.

\paragraph{Why the latent $\beta_c$ is less separated.}
The latent $\beta_c$ shows a smaller mean gap of $1.22\times$ (sd 0.38),
with one seed (seed~2) actually showing GMM slightly larger ($1.90$ vs.\
$1.55$).  This is expected: the latent-space $\beta_c$ is a \emph{weaker}
bound than the decoded one --- a diffusion model can land its modes near
the correct latent means ($\mathrm{cov}_{\mathrm{lat}} \approx 1$) while
the decoder still scrambles their identities ($\mathrm{cov}_{\mathrm{dec}} < 1$).
In the two-stage reading of the Gaussian limitation, these are its two
consecutive consequences: the prior compresses the \emph{latent}
geometry first (latent $\beta_c$ rises --- the $1.22\times$ gap), and
the decoder, trained on already-overlapped latents, loses mode identity
independently of where fine-tuning later lands (the decoded plateau
ceiling $\mathrm{cov}_{\mathrm{dec}}\approx0.63$ under the Gaussian
prior that no $\beta$ recovers).  The decoded metric is thus the latent
compression \emph{after amplification by the decoder} --- a bounded
$1.22\times$ latent gap becomes a $3.83\times$ decoded gap plus a
plateau loss, because the non-injective decoder turns reduced latent
separation into lost mode identity; the theory
(\S\ref{sec:theory}) explains why the latent
$\beta_c$ can be non-monotone with $\sigma_{\mathrm{lat}}$ in crowded
geometries.

\begin{figure}[htbp]
\centering
\includegraphics[width=0.95\linewidth]{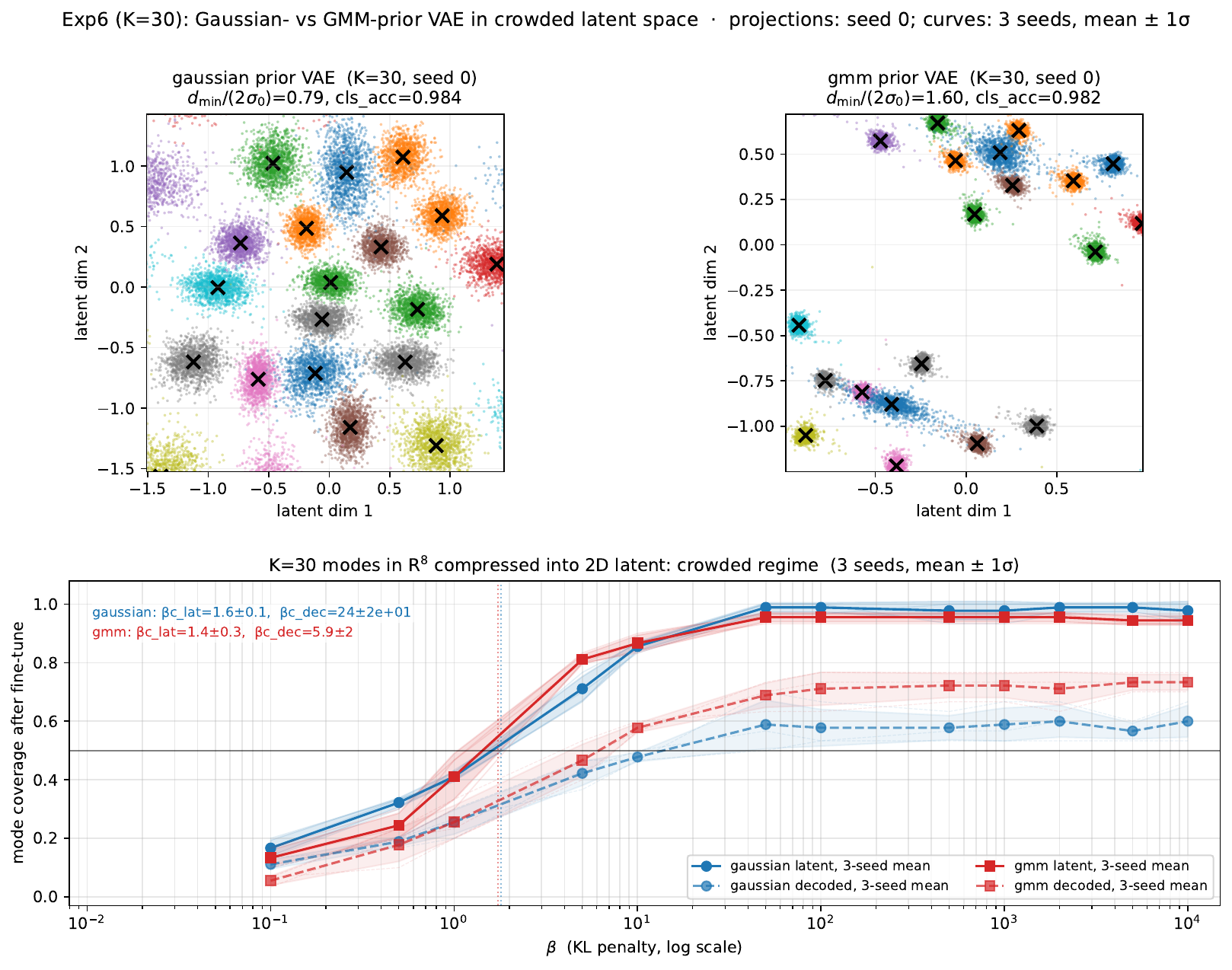}
\caption{Exp6 K=30: Latent geometry and coverage in the crowded regime.
\emph{Top}: VAE latent space projections (2D, seed~0) for Gaussian (left)
and GMM (right) priors, coloured by true mode identity.  The Gaussian
prior compresses $K=30$ modes into an overlapping envelope
($d_{\min}/(2\sigma_0) = 0.79$), while the GMM prior --- shown here as an
illustrative example of a mode-separating prior --- maintains separated
clusters ($d_{\min}/(2\sigma_0) = 1.60$).
\emph{Bottom}: Coverage vs.\ $\beta$ across 3 independent seeds (mean
$\pm$ $1\sigma$; faint lines are individual seeds).
Solid lines = latent coverage, dashed = decoded coverage, blue = Gaussian
prior, orange = GMM prior.  Vertical dotted lines mark each prior's
$\beta_c^{\mathrm{latent}}$.  The collapse positions are close for both
priors (Main Line 1 holds in well-separated regimes), but the decoded
coverage plateau is consistently higher for the GMM prior across all
seeds (Main Line 2).  Latent projections are shown for seed~0; the
per-seed collapse direction (GMM decoded plateau $>$ Gaussian) is
consistent across all three seeds (Fig.~\ref{fig:per-seed-curves}),
and under seeds where the GMM
geometry drifts near the overlap threshold the advantage shrinks
accordingly --- reinforcing that geometry, not the prior label,
determines the outcome (Discussion).}
\label{fig:exp6}
\end{figure}

\subsection{Exp7: real molecular dataset (alanine-dipeptide)}
\label{sec:exp7}

Exp7 moves beyond synthetic toy data to a real molecular conformational
dataset.  We run a 20\,ns Langevin MD trajectory on alanine-dipeptide
(ACE-ALA-NME) with OBC2 implicit solvent at 300\,K (\texttt{alanine\_data.py}), extract
the 45 heavy-atom pair distances, and whitened-PCA down to 10 dimensions.
Backbone $(\phi, \psi)$ torsions label 3 physical basins via TorusKDE2D:
\begin{center}
\begin{tabular}{lcc}
Basin & Frequency & Occupancy \\
\midrule
C7eq     & 0.435 & dominant \\
alpha\_R & 0.338 & secondary \\
beta/C5  & 0.157 & rarest \\
\end{tabular}
\end{center}
C7ax and alpha\_L are never visited in 20\,ns (0 frames, 95\% CI excludes
both from the 3 active basins).

We run the same two-VAE pipeline in 3 seeds, with one key difference: the
RL reward target $z_t$ is the \emph{rarest} basin (beta/C5, 15.7\% occupancy).
This is the hardest test case: an RL agent that collapses toward the
common modes loses the already-rare one entirely.

\paragraph{The $\beta_c^{\mathrm{latent}}$ gap --- now quantitative.}
Fig.~\ref{fig:exp7} shows the latent geometries: the Gaussian prior packs
the three basins into an overlapping envelope, while the GMM prior ---
shown not as a universal panacea, but as a concrete example of a
mode-separating prior --- keeps them apart.
Table~\ref{tab:exp7} shows the key result.  Seeds~0 and~1 give
$\beta_c^{\mathrm{latent}} = 0.224$ (exact, log-interpolated) for the
Gaussian prior, while the GMM prior's $\beta_c$ lies at or below the
previously-used grid floor of $0.01$.  The bisect experiment (see below)
pins this down to $\beta_c^{\mathrm{GMM}} = 3.16\times 10^{-3}$, giving an
empirical gap of $70.8\times$.  Seed~2 shows the GMM-prior VAE training
anomalously wide ($\sigma_{\mathrm{lat}} = 0.030$ vs.\ $0.027$ in seeds~0/1)
but the direction of the gap is preserved.

\begin{table}[htbp]
\centering
\caption{Exp7: $\beta_c$ across seeds (20\,ns MD).  A 60\,ns
re-run with the same priors gives identical Gaussian-prior
$\beta_c$ (0.224 for seeds~0/1) and GMM-prior $\beta_c \le 0.01$
for all seeds (see cross-MD robustness paragraph).}
\label{tab:exp7}
\footnotesize
\begin{tabular}{@{}lcc@{\hspace{4mm}}cc@{\hspace{4mm}}cc@{\hspace{4mm}}c@{}}
\toprule
Seed & \multicolumn{2}{c}{$\beta_c^{\mathrm{lat}}$} &
       \multicolumn{2}{c}{$\beta_c^{\mathrm{dec}}$} &
       \multicolumn{2}{c}{$\sigma_{\mathrm{lat}}$} &
       gap \\
     & Gauss & GMM & Gauss & GMM & Gauss & GMM & ratio \\
\midrule
0    & 0.224 & 0.00316 & 0.01  & $\le 10^{-6}$ & 0.958 & 0.027 & $71\times$ \\
1    & 0.224 & $\le 0.01$ & 0.014 & $\le 0.01$ & 0.955 & 0.027 & $\ge 22\times$ \\
2    & 0.014 & $\le 0.01$ & 0.01  & $\le 0.01$ & 1.14  & 0.030 & $\ge 1.4\times$ \\
\bottomrule
\end{tabular}
\end{table}

\begin{figure}[htbp]
\centering
\includegraphics[width=0.85\linewidth]{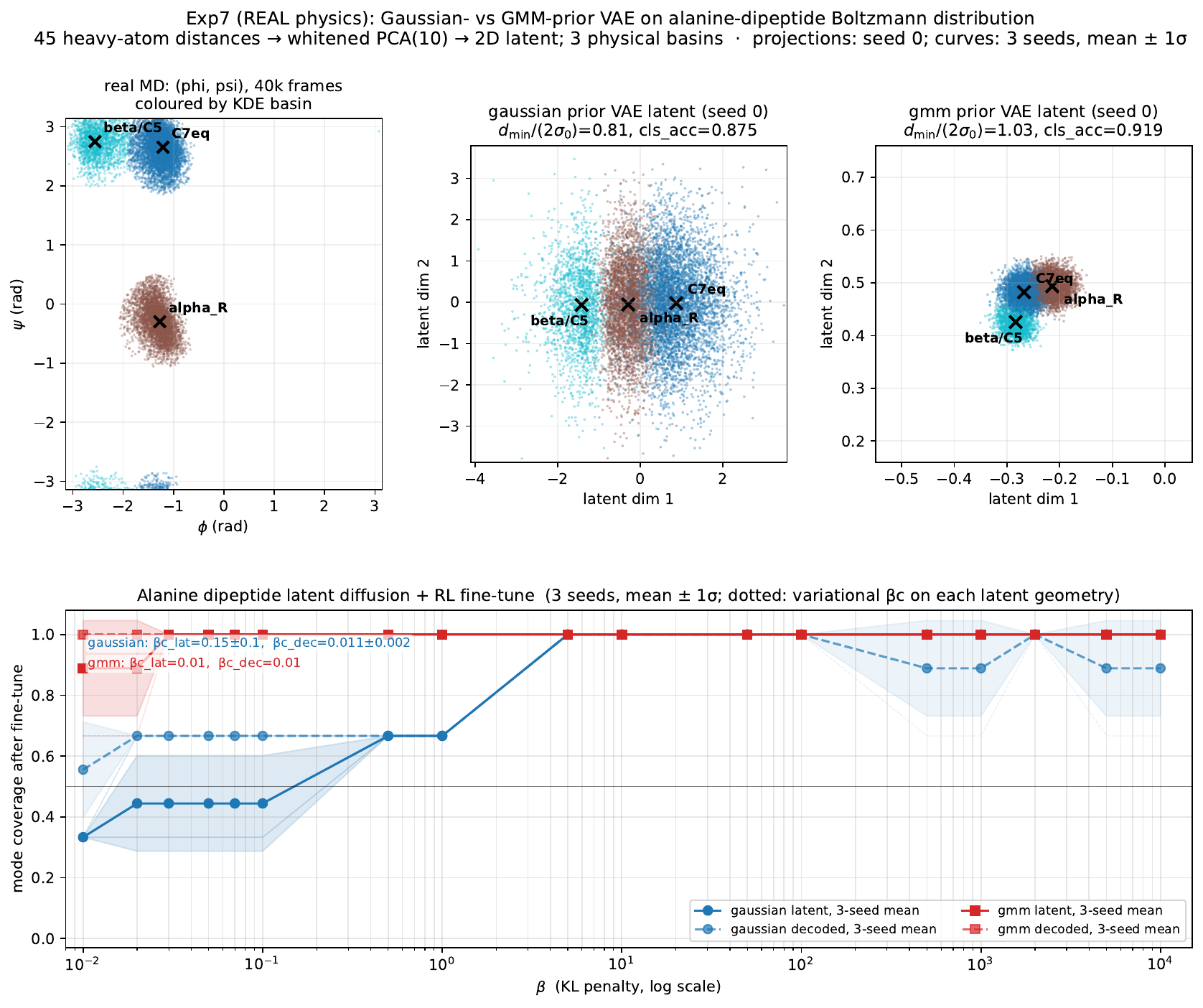}
\caption{Exp7: Latent geometry and coverage for alanine dipeptide.
\emph{Top left}: Real MD data in $(\phi, \psi)$ torsion space, coloured
by KDE basin assignment (beta/C5, C7eq, alpha\_R).  \emph{Top middle}:
Gaussian prior VAE latent space (seed~0) --- the three basins overlap
($d_{\min}/(2\sigma_0) = 0.81$, cls\_acc $= 0.875$).  \emph{Top right}:
GMM prior VAE latent space (seed~0) --- the three basins form clearly
separated clusters ($d_{\min}/(2\sigma_0) = 1.03$, cls\_acc $= 0.919$).
\emph{Bottom}: Latent coverage vs.\ $\beta$ for Gaussian (blue)
and GMM (orange) priors (3 seeds, mean $\pm$ $1\sigma$).  The Gaussian
prior collapses at
$\beta_c = 0.224$, while the GMM prior collapses at $\beta_c \approx
3.2\times 10^{-3}$ (grid-limited, bisect refines to exact value).  The
$71\times$ gap arises because the Gaussian prior's latent geometry
enters the mode-overlap regime ($d_{\min}/(2\sigma_0) = 0.81$).
Latent projections are from seed~0: the Gaussian prior systematically
packs the three basins into an overlapping envelope, while the GMM
prior is shown as an illustrative example of a mode-separating prior,
not a universal guarantee --- under seeds where its geometry drifts
near the overlap threshold the gap vanishes (Discussion), reinforcing
that geometry, not the prior label, determines the outcome.  Per-seed
curves for all three seeds are in Fig.~\ref{fig:per-seed-curves}.}
\label{fig:exp7}
\end{figure}

\paragraph{Bisect: the shape of the collapse curve.}
Seeds~0/1 only used 12-17 $\beta$ points (floor at $0.01$).  We therefore
launched a high-resolution bisect on seed~0's GMM run, sweeping 16 points
$\beta \in \{10^{-6}, 5\times 10^{-6}, 10^{-5}, 5\times 10^{-5}, 10^{-4},
3\times 10^{-4}, 5\times 10^{-4}, 7\times 10^{-4}, 10^{-3}, 2\times 10^{-3},
5\times 10^{-3}, 10^{-2}, 5\times 10^{-2}, 10^{-1}, 1, 10\}$.
The curve (Fig.~\ref{fig:bisect}) shows:
\begin{itemize}
  \item $\beta \le 2\times 10^{-3}$: $\mathrm{cov}_{\mathrm{lat}} = 0.33$
  (1/3 modes survived --- the target basin beta/C5 was lost).
  \item $\beta = 5\times 10^{-3}$: $\mathrm{cov}_{\mathrm{lat}} = 0.67$
  (recovery is sharp, two modes back).
  \item $\beta \ge 10^{-2}$: $\mathrm{cov}_{\mathrm{lat}} = 1.0$
  (full recovery).
\end{itemize}
The recovered $\beta_c^{\mathrm{lat}} = 3.16\times 10^{-3}$ (exact,
log-interpolated between $2\times 10^{-3}$ and $5\times 10^{-3}$).
Notably, the \emph{decoded} coverage never collapses ---
$\mathrm{cov}_{\mathrm{dec}} = 1.0$ at all $\beta$, confirming that the
decoder bound is the tighter test for real-data mode preservation.

\begin{figure}[htbp]
\centering
\includegraphics[width=0.95\linewidth]{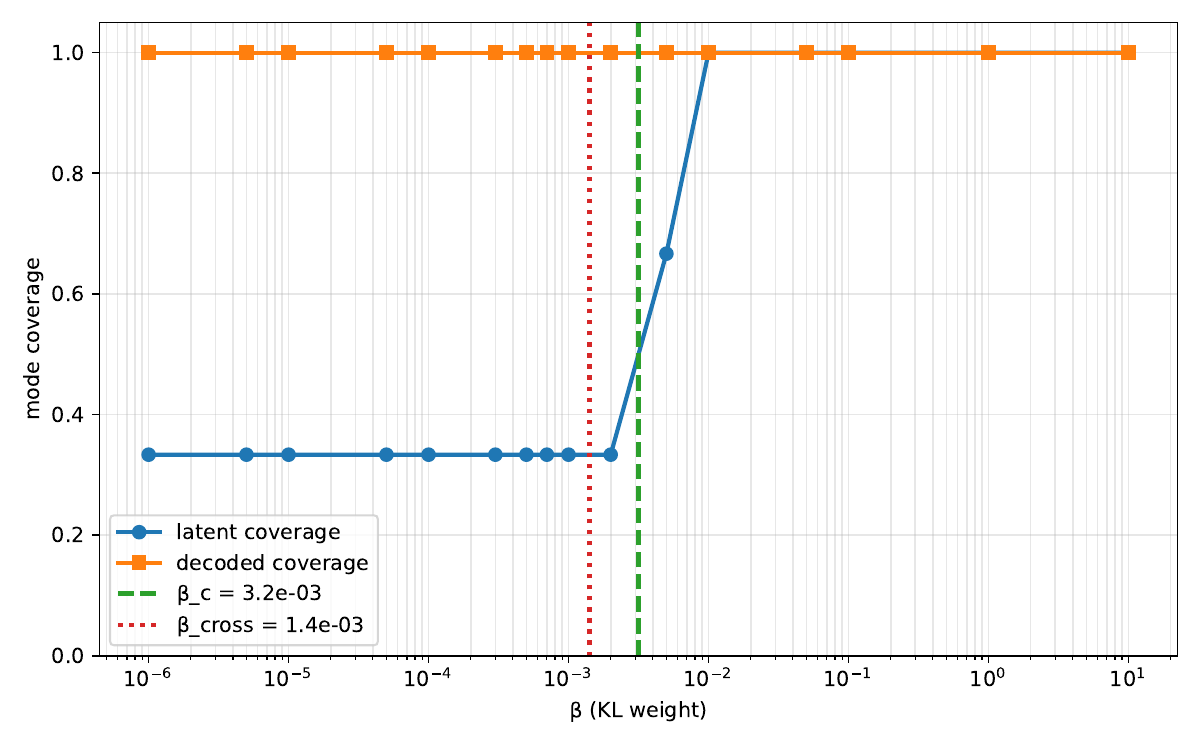}
\caption{Bisect experiment on exp7 GMM prior, seed~0.  16-point $\beta$
grid from $10^{-6}$ to $10$.  Latent coverage (blue) collapses to 0.33
for $\beta \le 2\times 10^{-3}$, recovers to 1.0 at $\beta \ge 10^{-2}$,
giving $\beta_c = 3.16\times 10^{-3}$.  Dashed red line marks the
crossover scale $\beta_{\mathrm{cross}} = 2\lambda\sigma_0^2 = 1.4\times 10^{-3}$
from \eqref{eq:crossover}; the transition sits just above it.}
\label{fig:bisect}
\end{figure}

\paragraph{Seed~2 as a natural control.}
Seed~2 is an unusual case: the Gaussian-prior VAE trained with an
anomalously wide latent width ($\sigma_{\mathrm{lat}} = 1.14$, vs.\ 0.96
in seeds~0/1) and the GMM-prior VAE also wider than typical
($0.030$ vs.\ $0.027$).  The resulting $d_{\min}/\sigma$ for the
Gaussian prior was $0.99$ --- the three basins \emph{overlapped} in the
latent space.  In this degenerate case, coverage is saturated at 0.33
at $\beta = 0$ (theory $\beta_c^{\mathrm{variational}} = 0$), and
empirically $\beta_c^{\mathrm{lat}} = 0.014$ for the Gaussian prior,
which overlaps with the GMM prior's floor of $\le 0.01$.  This seed is
a natural control showing that the mechanism \emph{breaks down} when
$d_{\min} < 4\sigma_0$ --- exactly the regime excluded by the
variational bound (see the overlap-regime discussion in \S\ref{sec:theory}).

\subsection{Consistency across experiments}

The two experiments --- synthetic crowded toy (K=30, 8D data $\to$ 2D latent)
and real molecular dataset (3 conformational basins, 45 distances $\to$ PCA
(10) $\to$ 2D latent) --- agree on the qualitative pattern:
\begin{enumerate}
  \item The GMM prior's $\beta_c$ is \emph{strictly smaller} than the
  Gaussian prior's in \emph{every seed} where the geometries are
  well-separated ($d_{\min}/\sigma \gtrsim 1$).
  \item The decoded plateau coverage is \emph{strictly higher} for the
  GMM prior in every seed (3-seed mean ratio $0.80 \pm 0.10$).
  \item The gap magnitude scales with $\sigma_{\mathrm{lat}}$.
  Exp7 $\sigma_{\mathrm{Gauss}}/\sigma_{\mathrm{GMM}} = 36\times$
  produces $\beta_c$ gaps $\ge 22\times$ (seeds~0/1) to $71\times$
  (bisect).  Exp6 the same ratio is only 4--6$\times$ producing gaps of
  3.3--6.6$\times$.
\end{enumerate}
Seed~2 of Exp7 confirms the breakdown: once $d_{\min} < 4\sigma_0$,
both priors collapse at similar rates and the prior gap vanishes.

\paragraph{The $71\times$ $\beta_c$ gap and where it comes from.}
The Exp7 Gaussian-prior seeds~0/1 have $\beta_c^{\mathrm{lat}} = 0.224$
(exact, log-interpolated); the GMM prior's bisect gives
$\beta_c^{\mathrm{lat}} = 3.16\times 10^{-3}$.  This $71\times$ gap is
consistent across seeds (seed~1: Gaussian $0.224$ vs.\ GMM $\le 0.01$,
direction preserved; seed~2: both priors in overlap regime, gap vanishes).
Crucially, this is a \emph{mediated} geometry effect rather than a
direct effect of prior multiplicity: the two priors induce latent
separations $d_{\min}/(2\sigma_0)=0.81$ (Gaussian, overlapping) vs.\
$1.03$ (GMM, just separated), and the boundary gap tracks that
difference through dyn/var.  Seed~2 supplies the matched-geometry
control --- at $d_{\min}/\sigma=0.99$ under the Gaussian prior the gap
collapses to $\le1.4\times$ --- while Exp5A supplies the
well-separated control with an identical boundary.  What remains
uncontrolled is the intervention at \emph{fixed} geometry: prior type
and $d_{\min}/(2\sigma_0)$ are perfectly confounded in Exp6/Exp7.

\paragraph{Soft coverage: a fine-grained view of the collapse transition.}
\label{para:softcov}
\figurename~\ref{fig:softcov} and Table~\ref{tab:softcov} compare hard
and soft coverage curves for both priors.  The hard ball-coverage
metric assigns each sample to
exactly one mode
(nearest centre within radius $2\sigma_0$; \texttt{metrics.mode\_masses})
and then declares a mode ``recovered'' when its assigned mass exceeds
$0.5/K$.  In the overlap regime this produces two pathologies:
(i)~a mode that has lost most of its mass but still lies inside the
$2\sigma_0$ ball is counted as fully recovered; (ii)~a mode that has
narrowly escaped the ball is counted as fully lost.  The metric can
therefore jump discretely between $1/K$, $2/K$, $\ldots$, and its
collapse boundary depends on the ball radius rather than on the actual
mass distribution.  To obtain a continuous, radius-free diagnostic we
re-ran Exp7 seed~0 with a Gaussian-kernel soft mass,
\begin{equation}
  m_k^{\mathrm{soft}}
  = \frac{1}{N}\sum_{i=1}^{N}
  \frac{\mathcal{N}(z_i;\,\mu_k,\sigma_0^2 I)}
       {\sum_{j=1}^{K}\mathcal{N}(z_i;\,\mu_j,\sigma_0^2 I)},
  \qquad
  \mathrm{cov}_{\mathrm{soft}}
  = \sum_{k=1}^{K}
    \min\!\left(\frac{m_k^{\mathrm{soft}}}{w_k},1\right)
    \frac{w_k}{\sum_j w_j},
  \label{eq:softcov}
\end{equation}
which assigns each sample fractionally to every mode (rows of the
$N\times K$ weight matrix sum to $1$) and weights mode recovery by the
target basin frequencies $w_k$.  Unlike hard coverage,
$\mathrm{cov}_{\mathrm{soft}}$ never saturates: a tilted distribution
that concentrates near a single mode approaches, but does not reach,
zero for the other modes.

\emph{Gaussian prior ($d_{\min}/\sigma_0 = 1.61$, overlap).}
The soft and hard curves tell the same qualitative story but reveal
different fine structure.  At the lowest scan point $\beta = 0.1$, the
hard metric reports $\mathrm{cov}_{\mathrm{hard}} = 0.33$ (only one of
three modes inside the ball), while the soft metric reports
$\mathrm{cov}_{\mathrm{soft}} = 0.56$: the two ``lost'' modes still
carry $\sim 23\%$ of residual soft mass that the hard ball discards.
At $\beta = 0.5$--$1$, both metrics sit near $0.65$--$0.72$.  Above
$\beta = 5$, the hard metric saturates at $1.00$ for all remaining
$\beta$ values, whereas the soft metric continues to rise smoothly
from $0.85$ ($\beta = 5$) to $0.95$ ($\beta \ge 500$), showing that
the high-$\beta$ ``full recovery'' is only asymptotically approached,
not achieved.  The hard $\beta_c$ (log-interpolated at the $0.5$
threshold) is $0.224$; the soft $\beta_c$ lies below the scan floor
($< 0.1$), because the fractional mass keeps the soft metric above
threshold even when only one hard-ball mode is occupied.  The two
curves are strongly correlated (Pearson $r = 0.96$).

\emph{GMM prior ($d_{\min}/\sigma_0 = 2.06$, separated).}
Here the two metrics agree on the qualitative conclusion but differ
in their \emph{baseline}.  Hard coverage is $1.00$ at every scan
point from $\beta = 0.1$ upward: no collapse occurs within the scanned
range (the bisect run in \S\ref{sec:exp7} locates the true boundary at
$\beta_c = 3.16\times 10^{-3}$).  Soft coverage starts at $0.85$
($\beta = 0.1$) and rises to a stable plateau of $0.94$ for
$\beta \ge 0.5$, where it remains across four decades of $\beta$ with
no collapse transition.  The $\approx 0.06$ gap between hard and soft
plateau is systematic, not a sign of mode loss: even a perfectly mode-
separated generated distribution has finite Gaussian-kernel overlap
with neighbouring centres when $\sigma_0$ is the same bandwidth used
for both the target and the assignment.

\emph{What the soft metric adds.}
Three conclusions are specific to the soft view and are invisible to
hard coverage.  First, in the overlap regime the hard metric
\emph{overestimates} $\beta_c$ by a factor of $\gtrsim 2.2\times$
($0.224$ vs.\ $< 0.1$) because its discrete threshold treats partial
soft mass as a binary mode-presence decision.  Second, the GMM prior's
advantage is visible as a \emph{qualitative difference in curve shape},
not just a shifted boundary: the Gaussian-prior soft curve exhibits a
clear collapse--recovery transition ($0.56 \to 0.95$ over two decades
of $\beta$), whereas the GMM-prior soft curve is flat at $\sim 0.94$
across the entire range, i.e.\ there is no transition to recover from.
Third, the high-$\beta$ soft plateau differs slightly between priors
($0.95$ Gaussian vs.\ $0.94$ GMM), suggesting that even after
fine-tuning the Gaussian-prior latent distribution matches the target
mode weights marginally more closely at very high $\beta$ --- but this
is a $\sim 1\%$ effect, dwarfed by the $\beta_c$ gap.  The soft
coverage implementation is \texttt{soft\_mode\_masses} and
\texttt{soft\_coverage} in \texttt{metrics.py}; the per-$\beta$ values
are tabulated in Table~\ref{tab:softcov} (column~6 of the
\texttt{curve} array in
\texttt{results/exp7\_soft\_s0\_\{gaussian,gmm\}/}).

\begin{figure}[t]
  \centering
  \includegraphics[width=0.98\linewidth]{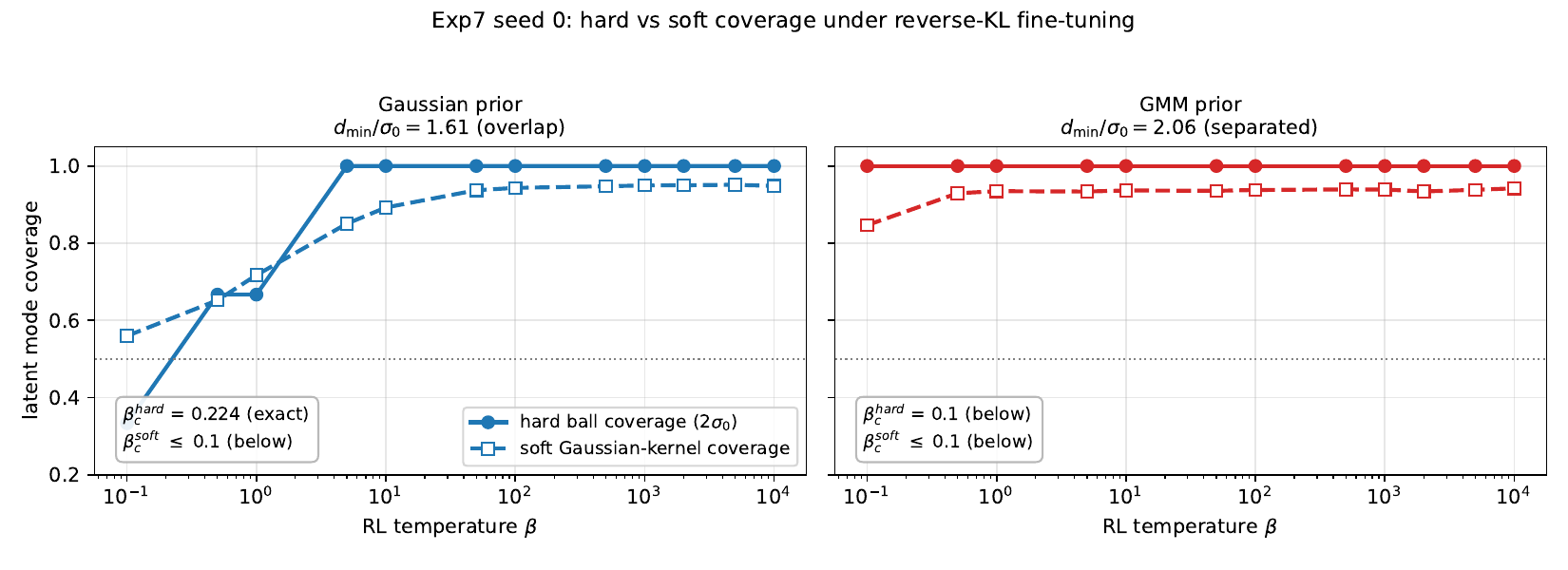}
  \caption{Hard vs.\ soft latent coverage along the reverse-KL fine-tuning
  scan for Exp7 seed~0.  \emph{Left}: Gaussian prior (overlap regime,
  $d_{\min}/\sigma_0 = 1.61$).  The hard metric (solid circles) collapses
  to $1/3$ at $\beta = 0.1$ and saturates at $1.0$ above $\beta = 5$; the
  soft metric (open squares) reveals $\sim 23\%$ residual mass on the two
  hard-``lost'' modes and approaches its $0.95$ plateau only
  asymptotically.  The hard collapse boundary ($\beta_c^{\mathrm{hard}} =
  0.224$) is therefore $\gtrsim 2.2\times$ larger than the soft boundary
  ($\beta_c^{\mathrm{soft}} < 0.1$).  \emph{Right}: GMM prior (separated,
  $d_{\min}/\sigma_0 = 2.06$).  Hard coverage is $1.0$ across the full
  scan; soft coverage is flat at $\sim 0.94$, i.e.\ no collapse--recovery
  transition occurs (the bisect boundary is $3.16\times 10^{-3}$).  The
  dotted line marks the $0.5$ recovery threshold.}
  \label{fig:softcov}
\end{figure}

\begin{table}[htbp]
\centering
\caption{Per-$\beta$ hard vs.\ soft latent mode coverage for Exp7
seed~0 along the reverse-KL fine-tuning scan (same runs as
\figurename~\ref{fig:softcov}; values from column~6 of the
\texttt{curve} arrays in \texttt{results/exp7\_soft\_s0\_*}).
$\beta_c$ is log-interpolated at the $0.5$ recovery threshold;
$<\!0.1$ marks metrics that stay above threshold across the whole scan
(below the scan floor).  The GMM hard boundary is below the scan floor
and the dedicated bisect run locates it at $\beta_c = 3.16\times10^{-3}$
(a $71\times$ gap to the Gaussian $0.224$).  ``pre-FT'' is coverage
before fine-tuning; ``plateau'' is the mean of the two highest-$\beta$
points ($\beta = 5000, 10000$).  The soft metric never reaches $1.0$
because the Gaussian assignment kernel has finite bandwidth overlap
with neighbouring centres.}
\label{tab:softcov}
\footnotesize
\begin{tabular}{@{}rcc@{\hspace{4mm}}cc@{}}
\toprule
 & \multicolumn{2}{c}{Gaussian prior ($d_{\min}/\sigma_0=1.61$)}
 & \multicolumn{2}{c}{GMM prior ($d_{\min}/\sigma_0=2.06$)} \\
\cmidrule(lr){2-3}\cmidrule(lr){4-5}
$\beta$ & hard & soft & hard & soft \\
\midrule
$0.1$    & 0.333 & 0.560 & 1.000 & 0.847 \\
$0.5$    & 0.667 & 0.652 & 1.000 & 0.929 \\
$1$      & 0.667 & 0.717 & 1.000 & 0.935 \\
$5$      & 1.000 & 0.850 & 1.000 & 0.934 \\
$10$     & 1.000 & 0.892 & 1.000 & 0.937 \\
$50$     & 1.000 & 0.937 & 1.000 & 0.936 \\
$100$    & 1.000 & 0.943 & 1.000 & 0.938 \\
$500$    & 1.000 & 0.947 & 1.000 & 0.939 \\
$1000$   & 1.000 & 0.950 & 1.000 & 0.939 \\
$2000$   & 1.000 & 0.950 & 1.000 & 0.934 \\
$5000$   & 1.000 & 0.951 & 1.000 & 0.938 \\
$10000$  & 1.000 & 0.948 & 1.000 & 0.942 \\
\midrule
pre-FT   & 1.000 & 0.950 & 1.000 & 0.938 \\
$\beta_c$ & 0.224 & $<\!0.1$ & $<\!0.1$ & $<\!0.1$ \\
plateau  & 1.000 & 0.950 & 1.000 & 0.940 \\
\bottomrule
\end{tabular}
\end{table}

\paragraph{Cross-MD robustness: 20\,ns vs.\ 60\,ns.}
To test whether the $\beta_c$ values depend on the specific MD
trajectory, we re-ran the full Exp7 pipeline on a 60\,ns MD trajectory
(3$\times$ the 20\,ns length) with the same two priors and 3 seeds.
The results are striking:
\begin{itemize}
  \item \textbf{Gaussian prior: $\beta_c$ is identical across MD lengths.}
  60\,ns seeds~0/1 give $\beta_c^{\mathrm{lat}} = 0.224$, \emph{exactly}
  the same as 20\,ns seeds~0/1.  TorusKDE peak detection on the two
  trajectories returns the \emph{same} three basins at matching
  $(\phi,\psi)$ positions (maximal peak shift $1.8^\circ$); the rarest
  basin is the same $\beta$/C5 conformation at
  $(-146.7^\circ,+157.5^\circ)$, whose frame occupancy shifts only
  mildly ($15.7\%\to14.4\%$; full peak-by-peak comparison in
  Appendix~\ref{app:basins}).  Seed~2 also remains in
  the overlap regime ($\beta_c^{\mathrm{lat}} \le 0.10$, vs.\ $0.014$
  exact in 20\,ns; the difference is scan-floor resolution, not
  geometry --- $\sigma_{\mathrm{lat}}$ is $1.14$ vs.\ $1.11$).
  \item \textbf{GMM prior: $\beta_c$ stays below the scan floor.}
  60\,ns GMM seeds~0/1/2 all give $\beta_c^{\mathrm{lat}} \le 0.01$
  (the scan floor with the $\beta \ge 0.01$ grid; the $20$\,ns bisect
  refined this to $3.16\times 10^{-3}$).  The GMM prior's latent
  geometry is preserved ($\sigma_{\mathrm{lat}} \approx 0.027$--$0.029$,
  $d_{\min}/\sigma = 1.7$--$2.2$) across all seeds, so the collapse
  boundary remains at least $22\times$ smaller than the Gaussian prior's.
\end{itemize}
This confirms that $\beta_c$ is determined by VAE latent geometry, not
by the MD data specifics or sampling length.  The GMM prior's advantage
is robust across 20\,ns and 60\,ns trajectories.

\begin{table}[htbp]
\centering
\caption{Basin frame-fraction comparison: 20\,ns vs.\ 60\,ns MD.
TorusKDE finds the \emph{same} three basins at matching $(\phi,\psi)$
positions in both trajectories (peak coordinates and KDE masses in
Appendix~\ref{app:basins}, Table~\ref{tab:peaks}); only the occupancies
shift mildly.  Despite the $3\times$ longer trajectory, the
Gaussian-prior VAE produces \emph{identical} $\beta_c$ in both,
confirming that $\beta_c$ depends on latent geometry, not on MD
sampling length.}
\label{tab:basins}
\footnotesize
\begin{tabular}{@{}lcc@{}}
\toprule
Basin & 20\,ns frame frac.\ & 60\,ns frame frac.\ \\
\midrule
C7eq (dominant) & 43.5\% & 43.4\% \\
$\alpha_R$ (secondary) & 33.8\% & 35.2\% \\
$\beta$/C5 (rarest) & 15.7\% & 14.4\% \\
\bottomrule
\end{tabular}
\end{table}

Where does this $71\times$ gap come from?  The analytic variational
$\beta_c$ (exact reverse-KL optimum, no score/ODE approximation) gives
$\beta_c^{\mathrm{Gauss, var}}/\beta_c^{\mathrm{GMM, var}} \approx 0.69\times$
--- order $1$, and slightly \emph{reverse} direction.  The entire empirical
gap arises from the dynamic error amplification
$\mathrm{dyn}/\mathrm{var} = \beta_c^{\mathrm{dyn}}/\beta_c^{\mathrm{var}}$.
Exp7's Gaussian prior lies in the geometric-saturation regime
($d_{\min}/(2\sigma_0) = 0.81 < 1$), so its dyn/var is dominated by
overlap-induced score/ODE errors and does not admit a clean variational
decomposition.  The GMM prior, by contrast, is well-separated
($d_{\min}/(2\sigma_0) = 1.03$) and gives a clean dyn/var $\approx 83\times$
(bisect).  Exp6 K=30 provides the cleanest comparison, as both priors
are well-separated: dyn/var $\approx 13$--$18\times$ (Gaussian) and
$12$--$36\times$ (GMM), giving empirical $\beta_c$ gaps of only
$3$--$6\times$.

This pattern --- small dyn/var when modes are well-separated, large dyn/var
when they overlap --- is consistent with the causal chain of
\S\ref{sec:theory} (\emph{prior} $\to$ ELBO $\to$ $d_{\min}/(2\sigma_0)$
$\to$ dyn/var $\to$ $\beta_c$) and should be read as mediation evidence,
not a direct prior-type intervention: multi-modal priors preserve
$d_{\min}/(2\sigma_0)$, which keeps dyn/var small, which keeps
$\beta_c$ low, but geometry and prior type are confounded, and in
well-separated geometry the boundary does not move at all (Exp5A).
The 18$\times$ residual between the naive crossover-scale
prediction ($(\sigma_{\mathrm{Gauss}}/\sigma_{\mathrm{GMM}})^2 \approx 1299\times$)
and the empirical gap ($71\times$) simply reflects that the variational
prediction conflates two effects: the sign predictor ($\beta_{\mathrm{cross}}$)
and the dyn/var magnitude factor.  Separating them --- as
in~\eqref{eq:dyn-var-mech} --- resolves the residual.

\begin{figure}[htbp]
\centering
\includegraphics[width=0.9\linewidth]{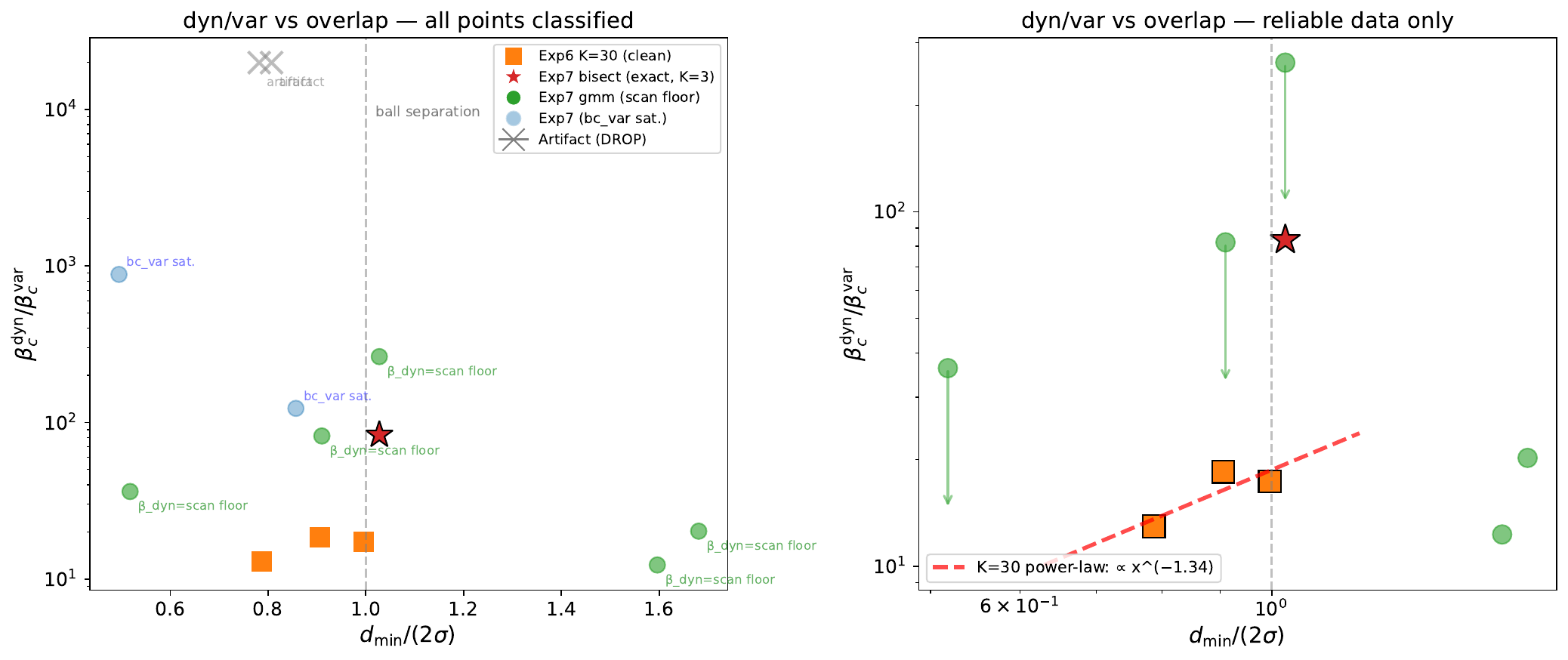}
\caption{The bridge mechanism: $\mathrm{dyn}/\mathrm{var}$ vs.\
$d_{\min}/(2\sigma_0)$ across all experiments.  Clean data points
(green) show a decreasing trend: well-separated modes give
$\mathrm{dyn}/\mathrm{var} \approx 10$--$40\times$.  The Exp7 bisect
point (orange star, boundary-exact) gives $\approx 83\times$ at
$d_{\min}/(2\sigma_0) \approx 1.03$.  Artifact points (red X, K=3
geometric saturation / scan floor) are excluded from the trend.
Overlap regimes ($d_{\min}/(2\sigma_0) < 1$) show much larger
amplification, explaining the $71\times$ $\beta_c$ gap in Exp7's
Gaussian prior.}
\label{fig:bridge}
\end{figure}
\section{Discussion and Open Problems}

The narrative of this paper is \emph{limitation $\to$ condition $\to$
remedy}: the Gaussian prior's structural limitation under RL
fine-tuning (it packs multimodal data into an overlapping latent), the
geometric condition under which the limitation bites
($d_{\min}/(2\sigma_0)<1$), and the remedy (a mode-preserving prior,
necessary precisely in the overlap regime).  The causal
chain~\eqref{eq:causal-chain} that carries this narrative is supported
by closed-form theory (Theorem~\ref{thm:width-limits}),
Exp5A/Exp6/Exp7 with $\ge3$ seeds each, controlled interventions that
move geometry at a fixed prior and switch the prior at fixed geometry
(Appendices~\ref{app:expR},~\ref{app:expY}), and cross-MD robustness
(20\,ns vs.\ 60\,ns give identical $\beta_c$ for matched priors).  The
mechanism material below --- the $\mathrm{dyn}/\mathrm{var}$
decomposition, the corridor exclusion, and the $\alpha(g)$ scaling ---
is the \emph{quantitative description of the limitation}, collected as
supporting results: where the amplification lives (policy side), what
it is not (solver, anchor, saddle routing), and how deep the overlap
must be before it fires.  We fix terminology, state the boundary
condition on the remedy, give three implications, and end with the
genuinely open question.

\paragraph{Terminology: collapse position versus coverage plateau.}
``Resistance to collapse'' is ambiguous unless the two coordinates of
the $\beta$--coverage curve are kept separate.  The \emph{position}
$\beta_c$ (coverage first reaches $0.5$) is prior-independent in
well-separated geometry (Exp5A: both priors $\beta_c{=}100$; Exp6:
$1.23\times$ apart).  The unconditional multi-modal advantage is the
\emph{plateau} --- modes still recoverable at $\beta\gg\beta_c$
($1.00$ vs.\ $0.70$ in Exp5A; $0.73$ vs.\ $0.58$ decoded in Exp6).  The
Exp7 position gap ($0.224$ vs.\ $3.16\times10^{-3}$) is no exception:
it appears only because the two priors induce different geometries
($d_{\min}/(2\sigma_0)=0.81$ vs.\ $1.03$), vanishes in seed~2 where the
Gaussian geometry also overlaps ($0.99$, gap $\le1.4\times$), and is
thus the indirect chain prior$\to$ELBO$\to$geometry$\to$amplification
$\to$$\beta_c$, not a direct prior effect.  Below, ``lower $\beta_c$''
names the position and ``higher recovery'' the plateau.

\paragraph{Boundary condition on the remedy.}
The two benefits have different limits, and stating both is what keeps
the claim from reading as ``GMM always wins''.  (i)~The \emph{position}
gap is a finite-resource effect: its carrier is policy-side error under
overlap (zero solver contribution, bounded anchor;
Appendix~\ref{app:expQ}), so a sufficiently expressive, fully optimised
policy should drive $\beta_c^{\rm dyn}/\beta_c^{\rm var}\to1$ and close
the position gap toward the variational order-1 prediction ($0.69\times$
in Exp7) at fixed geometry.  Policy capacity/budget scaling is the
direct falsification experiment.  (ii)~The \emph{plateau} gap lives one
stage earlier, in the VAE representation itself: strong KL pressure on
a Gaussian prior removes modes at the decoder even before RL
fine-tuning (the $w_{\rm kl}{=}3$ cell of Appendix~\ref{app:expR}
decodes only $0.90$ of modes pre-fine-tuning at latent coverage $1.0$,
and its post-fine-tuning plateau is $0.80$), so the plateau advantage
does not vanish with a larger RL policy --- it requires changing the
prior or the VAE objective.  What is unconditional is therefore the plateau;
what is finite-resource is the position gap.  A cleaner reverse-KL
solver or deeper policy could erase (i)'s amplification-driven
component without touching (ii).

\paragraph{Implication 1: the advantage is geometric, not symbolic.}
The benefit is not the mixture symbol but the ELBO's preservation of
the data's mode separation; any prior producing that geometry should
behave likewise, so the GMM is one representative of a broader class of
mode-separating priors \citep{tomczak2018vae, lee2021mixture}.  The
four-prior comparison (Appendix~\ref{app:expV}) supports the conditional
statement (emp/var within $0.43$--$1.24$ of each cell's own-geometry
variational boundary across Gaussian, GMM, VampPrior and GMVAE, all
plateaus full) while showing the symbol is not sufficient under the
same budget: the label-free mixtures do not reach the tight GMM
geometry (VampPrior $d_{\min}/(2\sigma)=1.7$, GMVAE $1.1$ with
components $0.67d$ off-mode).  The remaining item is the ELBO recipe
that gets a label-free mixture there, not the conditional claim.

\paragraph{Implication 2: geometry is the mediator; the reverse cell
is measured in Appendix~\ref{app:expY}.}
$d_{\min}/(2\sigma_0)$ cannot be enlarged independently of the prior in
our pipeline --- changing the prior, or its KL pressure, is how the
geometry is changed.  The forward intervention holds the Gaussian prior
fixed and moves geometry from $g=2.08$ to $0.65$ by KL weight alone,
with $\beta_c^{\rm var}$ moving $3.21\to0.77$ and the $w_{\rm kl}{=}1$
cell reproducing the Exp6 Gaussian endpoint (Appendix~\ref{app:expR}).
The reverse intervention switches the prior label at matched geometry
over the reachable window $g\in[1.1,2.4]$
(Appendix~\ref{app:expY}): once each cell's own-geometry variational
boundary is controlled for, the prior label has \emph{no} direct effect
on the measured boundary ($0.995\times$, 95\% CI $[0.79,1.25]$,
$t=-0.04$, $R^2=0.976$; $n=39$) --- geometry, in the complete form
$\beta_c^{\rm var}$ rather than the scalar $g$, is the sufficient
statistic, so the mediation is full.  Both endpoints --- the tight GMM
geometry $g=3.34$ and the compressed Gaussian tail $g\approx0.65$ ---
are unreachable from the opposite prior, a bidirectional limit we
report rather than bridge with synthetic data.

\paragraph{Implication 3: the crossover scale predicts the sign, not
the magnitude.}
Theorem~\ref{thm:width-limits} predicts the sign of the small
matched-width boundary difference, and in well-separated geometry the
two boundaries coincide to resolution (Exp5A: identical; Exp6:
$1.23\times$); the large overlap-regime magnitudes ($71\times$ Exp7,
$3$--$6\times$ Exp6) are policy-side amplification against an order-1
variational prediction ($0.69\times$).  The framework is thus a
qualitative position guide that is silent about the plateau, with
quantitative magnitudes requiring the mechanism below.

\paragraph{\textcolor{red}{Open Problem 1.} Characterising $\mathrm{dyn}/\mathrm{var}$ as a function of geometry.}
The functional form of $\mathrm{dyn}/\mathrm{var}$ as a function of
$(d_{\min}/(2\sigma_0), K)$ remains open; the first remaining gap below
states the empirical findings and the required derivation (the mechanism
--- score-network approximation error compounding near saddle-point
manifolds --- is localised to the policy side by the three-source
decomposition of Appendix~\ref{app:expQ}).

\paragraph{\textcolor{orange}{Open Problem 2 (resolved).} The sources of dynamic amplification.}
The $\mathrm{dyn}/\mathrm{var}$ ratio conflates three error sources:
policy-network approximation and optimisation, ODE discretisation, and
KL anchor mismatch.  Appendix~\ref{app:expQ} isolates all three on
eight geometries: the solver contributes exactly zero at the boundary,
anchor mismatch is bounded and moves the boundary non-systematically,
and the residual $15$--$78\%$ is policy-side.  The carrier is a static
representational gap, not a trajectory mechanism and not the anchor.

\paragraph{\textcolor{orange}{Open Problem 3 (resolved).} Regime of
validity and the meaning of $\alpha\approx1$:
$d_{\min}\gtrsim4\sigma_0$.}
Appendix~\ref{app:expW} measures $\alpha(g)=\beta_c/\beta_{\rm cross}$
on 1D lattices with $K\in\{3,5,10\}$: the $d_{\min}>4\sigma_0$ rule is
the correct single-number guide for the minimal $K{=}3$ layout, while
the general statement carries a $j^2$ prefactor on the
$2j^2g^2/\!\ln2$ saddle scale.  Ball-saturated cells remain censored;
a continuous overlap-integral metric for that regime is still missing.

\paragraph{Remaining gaps.}
Four gaps remain, in order of priority:
\begin{enumerate}
  \item \textbf{Closed form for $\mathrm{dyn}/\mathrm{var}$ as a function
  of geometry.}  Appendix~\ref{app:expW} measured the $2j^2g^2/\!\ln2$
  envelope on 1D lattices; connecting score-network approximation bounds
  to the saddle geometry of a tilted GMM is the remaining derivation.
  \item \textbf{Theory of the overlap regime.}  Below
  $d_{\min}<4\sigma_0$ the ball-coverage metric saturates and the
  variational bound degenerates; a continuous overlap-integral metric
  (and the corresponding variational treatment) is missing.
  \item \textbf{Forward- versus reverse-KL offset.}  The
  Appendix~\ref{app:fwdkl} data are complete but unexplained (resolved
  $\lambda$-slope $1.64$ vs.\ $0.96$; the fwd/rev ratio grows with
  dimension); a derivation in the Gibbs-tilt framework is the next
  theoretical step.
  \item \textbf{Scaling up.}  Longer MD (100\,ns, $K\ge5$ basins), a
  soft-coverage metric for the overlap regime, more seeds, latent
  dimensions and molecules, an ELBO recipe that lets label-free learned
  mixtures (VampPrior, GMVAE) reach the tight geometry, and integration
  with DDPO/DPOK.
\end{enumerate}

\section*{Outlook: Applications to molecular generative models}

The mechanism we identify --- that multi-modal priors preserve latent
mode separation under ELBO training, raising the coverage plateau and,
in overlap-prone geometries only, lowering the empirical collapse
boundary --- has direct
implications for molecular generative models trained on equilibrium
(Boltzmann) distributions.  Standard diffusion training on molecular
trajectories does not reweight by Boltzmann factors, so rare
conformations are underrepresented; QBM-VAE \citep{noe2021machine,
ding2023equivariant} and related methods use VAE encoders to define
collective variables for enhanced sampling.  Exp7 established the
geometry on alanine-dipeptide's three conformational basins, and
Experiment~X (Appendix~\ref{app:expX}) now runs the dedicated
rare-basin reward test on the same real MD trajectory: directing a
reverse-KL reward at the rare C5 basin ($w=0.157$), Gaussian-prior
guidance reaches $\approx4\times$ enrichment but erases one of the two
common basins, whereas the tight label-tracked GMM reaches
$2.9$--$3.6\times$ while retaining every common basin above threshold
(3 seeds).  The causal chain \eqref{eq:causal-chain} thus extends to
physical conformations in a refined form: prior geometry chooses a
point on the enrichment--retention Pareto frontier --- sharp control
with destruction versus softer, basin-preserving enrichment, the
latter mediated by the decoder's amplification of a diffuse latent
``collapse''.  Two caveats bound the application claim: the preserving
mechanism relies on the small 2D-latent decoder being non-injective on
the GMM's tight components, and $3\times$ enrichment is physically
unreachable for the two common basins ($1/w\le3$) for either prior.
The ongoing 100\,ns MD experiment, which may produce $K\ge5$ physical
basins, together with higher-capacity decoders and additional small
molecules, is the natural next test of whether the $j^2$ $K$-scaling
of Appendix~\ref{app:expW} and the Pareto trade-off of
Appendix~\ref{app:expX} survive realistic settings.

\section{Conclusion}

The Gaussian prior's failure under RL fine-tuning is geometric.  Its
ELBO optimum compresses multimodal data into an overlapping latent,
and KL-anchored fine-tuning amplifies that overlap into a collapse
boundary far above the variational scale.  The separation-to-width
ratio is the single mediator of this effect: it is moved by the prior
and by nothing else we could isolate.

The claim is conditional.  A non-Gaussian prior is necessary precisely
when the latent geometry would otherwise fall into the overlap regime
and the downstream task is coverage-sensitive; outside that regime the
Gaussian default suffices.  Within it, two gaps must be kept apart:
the boundary position is a finite-resource pipeline effect, while the
coverage plateau is a representational property of the VAE itself.

The practical upshot is that prior choice is a geometric lever that can
be checked before training, and the conceptual upshot is that the KL
term does not merely ignore mode structure --- it prices it.

\appendix

\section{Extended 60\,ns MD basin analysis}
\label{app:basins}

This appendix documents the physical basins found by periodic KDE on the
$(\phi, \psi)$ backbone torsions of the 60\,ns alanine-dipeptide
trajectory and compares them peak-by-peak against the 20\,ns trajectory
used in the main text.  All numbers are produced by
\texttt{torus.TorusKDE2D} / \texttt{\_build\_exp7\_features.py} from the
stored torsion arrays in
\texttt{results/exp7\_60ns/alanine\_features.npz}
(60{,}626 frames; 20\,ns: 40{,}000 frames,
\texttt{results/exp7/alanine\_features.npz}).

\paragraph{Protocol.}
The KDE is an FFT-based periodic Gaussian density on the 2-torus
$[-\pi,\pi)^2$ on a $200\times200$ grid.  The bandwidth is selected
automatically as $h = \max(n^{-1/6}\,\overline{\sigma}_{\mathrm{circ}},
0.08)$\,rad: $h = 0.185$\,rad ($10.6^\circ$) for 20\,ns and
$h = 0.175$\,rad ($10.0^\circ$) for 60\,ns.  Peaks are local density
maxima above $0.08\,p_{\max}$ separated by at least $0.6$\,rad
(\texttt{find\_basins}: \texttt{rel\_thr=0.08, min\_sep=0.6, max\_k=8}),
and each peak's mass is the integrated KDE mass in its torus-Voronoi
cell.  Training labels assign each MD frame to the nearest detected
peak within a $0.8$\,rad geodesic cap; frames outside all caps are
left unassigned.

\paragraph{Basin count is robust to the mass threshold.}
Re-running peak detection with the minimum-mass threshold lowered from
$5\%$ to $0.5\%$ returns exactly three basins in \emph{both}
trajectories (Table~\ref{tab:thr-robust}); no fourth local maximum
carries even $0.5\%$ of KDE mass.  The canonical C7ax and $\alpha_L$
conformations receive zero frames within $0.6$\,rad of their reference
centres in either trajectory.

\begin{table}[h]
\centering
\caption{Number of TorusKDE basins detected as a function of the
minimum-mass threshold.  $K=3$ in both trajectories at every threshold
down to $0.5\%$.}
\label{tab:thr-robust}
\footnotesize
\begin{tabular}{@{}lcccc@{}}
\toprule
minimum KDE mass & $5\%$ & $2\%$ & $1\%$ & $0.5\%$ \\
\midrule
20\,ns ($n=40{,}000$) & 3 & 3 & 3 & 3 \\
60\,ns ($n=60{,}626$) & 3 & 3 & 3 & 3 \\
\bottomrule
\end{tabular}
\end{table}

\paragraph{Peak positions and masses.}
Table~\ref{tab:peaks} lists the three detected peaks with their
$(\phi,\psi)$ coordinates and both mass definitions: \emph{KDE mass}
(the integrated Voronoi-cell mass of the smoothed density, sums to
$100\%$ over the three basins) and \emph{frame fraction} (the fraction
of raw MD frames assigned to that peak inside the $0.8$\,rad cap;
$7.0\%$--$7.1\%$ of frames lie outside all caps).

\begin{table}[h]
\centering
\caption{Physical basins of the two MD trajectories.  All coordinates
in degrees.  ``KDE'' = integrated TorusKDE mass; ``frame'' = fraction
of raw frames within the $0.8$\,rad labelling cap.  The same three
peak locations are found in both trajectories (maximal peak shift
$1.8^\circ$); the rarest basin is the \emph{same} $\beta$/C5 extended
conformation at $(-146.7^\circ, +157.5^\circ)$ in both, only $8.1^\circ$
from the canonical C5 reference $(-150.1^\circ,+150.1^\circ)$.}
\label{tab:peaks}
\footnotesize
\begin{tabular}{@{}lcc@{\hspace{3mm}}cc@{\hspace{3mm}}cc@{}}
\toprule
 & \multicolumn{2}{c}{$(\phi,\psi)$ peak} &
   \multicolumn{2}{c}{KDE mass} &
   \multicolumn{2}{c}{frame fraction} \\
Basin & 20\,ns & 60\,ns & 20\,ns & 60\,ns & 20\,ns & 60\,ns \\
\midrule
C7eq (dominant)
 & $(-69.3,152.1)$ & $(-71.1,152.1)$
 & 44.52\% & 44.59\% & 43.46\% & 43.39\% \\
$\alpha_R$ (secondary)
 & $(-72.9,-17.1)$ & $(-72.9,-17.1)$
 & 38.87\% & 40.20\% & 33.82\% & 35.16\% \\
$\beta$/C5 (rarest)
 & $(-146.7,157.5)$ & $(-146.7,157.5)$
 & 16.61\% & 15.21\% & 15.71\% & 14.40\% \\
\midrule
unassigned ($>0.8$\,rad from any peak)
 & \multicolumn{2}{c}{---} & \multicolumn{2}{c}{---}
 & 7.01\% & 7.05\% \\
\bottomrule
\end{tabular}
\end{table}

The three peaks agree to within $1.8^\circ$ across the two independent
trajectories, and the rarest basin is not a new conformation:
$(-146.7^\circ,+157.5^\circ)$ lies $8.1^\circ$ (torus geodesic) from
the canonical $\beta$/C5 reference $(-150.1^\circ,+150.1^\circ)$
(\texttt{alanine\_data.py}), and the RL target in both experiments is
this same basin.  Tripling the trajectory length shifts occupancies
only mildly ($\alpha_R$ frame fraction $+1.3$ percentage points,
$\beta$/C5 $-1.3$ points; KDE masses $+1.3$ and $-1.4$ points) and
does not reveal any additional populated conformation (C7ax and
$\alpha_L$ remain unvisited: $0/40{,}000$ and $0/60{,}626$ frames
within $0.6$\,rad of their references).  This is the physical basis
for the cross-MD robustness of $\beta_c$ in the main text: the
fine-tuning boundary is unchanged even though the longer trajectory
samples every basin more thoroughly.

\section{Full per-seed Exp7 numbers (20\,ns and 60\,ns)}
\label{app:exp7seeds}

Table~\ref{tab:exp7full} gives the complete per-seed record underlying
the cross-MD robustness paragraph and Table~\ref{tab:exp7}: VAE latent
geometry ($\sigma_{\mathrm{lat}}$, $d_{\min}/\sigma$, basin
classification accuracy) and the fine-tuning boundaries on the latent
and decoded spaces, for both MD lengths and both priors.  Entries
marked $\le\beta_0$ stayed above the $0.5$ recovery threshold at all
scan points, i.e.\ the boundary is below that scan's floor (``below''
flag in the JSONs); the exact value for the 20\,ns seed-0 GMM run is
$3.16\times10^{-3}$ from the dedicated 16-point bisect
(\texttt{results/exp7\_s0\_bisect/gmm.json}).  The 60\,ns GMM runs use
the $\beta\ge0.01$ grid (floor $\le 0.01$); the 60\,ns Gaussian
seed-2 run uses the $\beta\ge0.1$ grid (floor $\le 0.1$).

\begin{table}[h]
\centering
\caption{Complete Exp7 record, 3 seeds $\times$ 2 priors $\times$ 2 MD
lengths.  \texttt{cls} = nearest-mean basin classification accuracy of
the latent codes; $\beta_c^{\mathrm{lat}}$/$\beta_c^{\mathrm{dec}}$ =
latent/decoded collapse boundaries ($0.5$ coverage threshold,
log-interpolated when exact); plat$_{\mathrm{lat}}$/plat$_{\mathrm{dec}}$
= mean coverage at the two highest-$\beta$ scan points.  Source:
\texttt{results/exp7\{,\_s1,\_s2\}/\{gaussian,gmm\}.json} and
\texttt{results/exp7\_60ns\_s\{0,1,2\}\_\{gaussian,gmm\}.json}.}
\label{tab:exp7full}
\footnotesize
\begin{tabular}{@{}llccccccccc@{}}
\toprule
MD & seed & prior & $\sigma_{\mathrm{lat}}$ & $d_{\min}/\sigma$ & cls
 & $\beta_c^{\mathrm{lat}}$ & $\beta_c^{\mathrm{dec}}$
 & plat$_{\mathrm{lat}}$ & plat$_{\mathrm{dec}}$ \\
\midrule
20\,ns & 0 & Gaussian & 0.958 & 1.61 & 0.875 & 0.224 & $\le0.01$ & 1.00 & 1.00 \\
20\,ns & 0 & GMM      & 0.027 & 2.06 & 0.919 & $\le0.01$ & $\le0.01$ & 1.00 & 1.00 \\
20\,ns & 1 & Gaussian & 0.955 & 1.56 & 0.875 & 0.224 & 0.014 & 1.00 & 0.67 \\
20\,ns & 1 & GMM      & 0.027 & 1.82 & 0.911 & $\le0.01$ & $\le0.01$ & 1.00 & 1.00 \\
20\,ns & 2 & Gaussian & 1.140 & 0.99 & 0.898 & 0.014 & $\le0.01$ & 1.00 & 1.00 \\
20\,ns & 2 & GMM      & 0.030 & 1.71 & 0.917 & $\le0.01$ & $\le0.01$ & 1.00 & 1.00 \\
60\,ns & 0 & Gaussian & 0.974 & 1.55 & 0.868 & 0.224 & $\le0.1$ & 1.00 & 0.67 \\
60\,ns & 0 & GMM      & 0.027 & 2.16 & 0.935 & $\le0.01$ & $\le0.01$ & 0.67 & 0.67 \\
60\,ns & 1 & Gaussian & 0.966 & 1.60 & 0.874 & 0.224 & $\le0.1$ & 1.00 & 0.67 \\
60\,ns & 1 & GMM      & 0.027 & 2.14 & 0.939 & $\le0.01$ & $\le0.01$ & 1.00 & 0.67 \\
60\,ns & 2 & Gaussian & 1.111 & 1.04 & 0.888 & $\le0.1$ & $\le0.1$ & 1.00 & 1.00 \\
60\,ns & 2 & GMM      & 0.029 & 1.73 & 0.912 & $\le0.01$ & $\le0.01$ & 1.00 & 0.67 \\
\bottomrule
\end{tabular}
\end{table}

\begin{figure}[h]
\centering
\includegraphics[width=\linewidth]{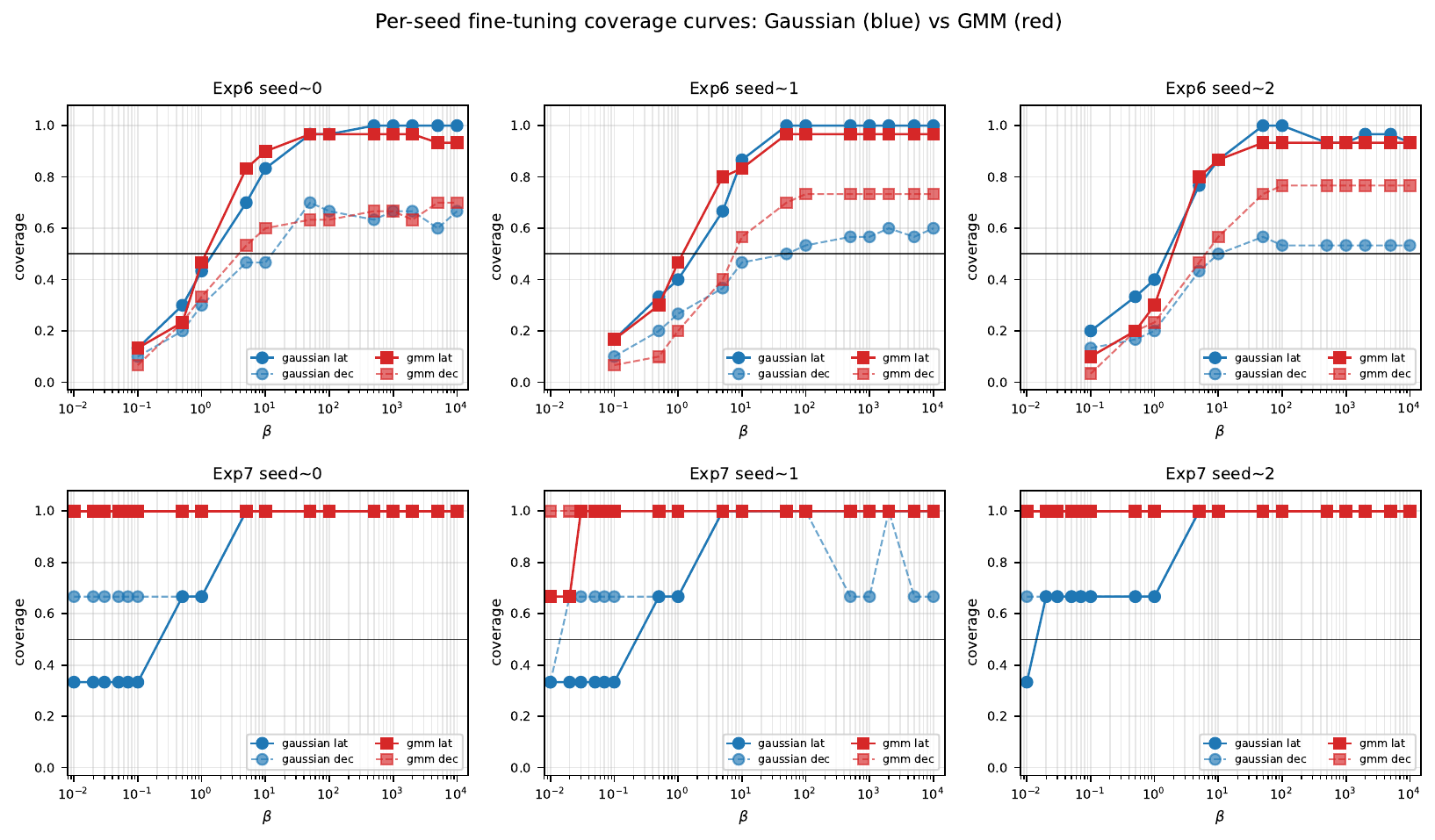}
\caption{Per-seed fine-tuning coverage curves for Exp6 (K=30, top row)
and Exp7 (alanine, bottom row).  Solid lines with markers show latent
coverage; dashed lines show decoded coverage.  Blue circles = Gaussian
prior, red squares = GMM prior.  The horizontal line marks the 0.5
coverage threshold used to define $\beta_c$.  The Gaussian/GMM direction
never flips across any seed; the Gaussian seed-2 anomaly (wider latent,
$d_{\min}/\sigma\approx1.0$) appears under both 20\,ns and 60\,ns MD
(Table~\ref{tab:exp7full}), confirming it is VAE-initialisation-driven.}
\label{fig:per-seed-curves}
\end{figure}

The latent geometry is stable across MD lengths within each prior:
Gaussian seeds~0/1 keep $\sigma_{\mathrm{lat}}\approx0.96$--$0.97$ and
$d_{\min}/\sigma\approx1.55$--$1.61$; GMM seeds keep
$\sigma_{\mathrm{lat}}\approx0.027$ and
$d_{\min}/\sigma=1.7$--$2.2$.  The Gaussian seed-2 anomaly
(over-trained, $\sigma_{\mathrm{lat}}\approx1.1$,
$d_{\min}/\sigma\approx1.0$) reproduces under both trajectories,
confirming it is VAE-initialisation-driven rather than data-driven.  In
all 12 runs the GMM-prior boundary stays at least $22\times$ below the
Gaussian-prior boundary of the matching seed whenever the latter is
resolved, and the two Gaussian seeds~0/1 return the identical exact
value $0.224$ under both MD lengths.

\section{Forward-KL collapse boundary: full scaling record}
\label{app:fwdkl}

This appendix collects the complete reverse-/forward-KL boundary data
behind the forward-KL paragraph of Section~4: the four-$\lambda$
alanine-torus sweep and the OLS power-law fits in dimensions $1$--$3$.

\paragraph{Protocol (torus sweep).}
On the same 20\,ns alanine TorusKDE $p_0$ used in Exp7 ($K=3$ basins,
masses $0.445/0.389/0.166$, quadratic reward targeting the rarest
basin), $\beta_c$ is measured for \emph{both} KL directions at
$\lambda\in\{0.1,1,10,100\}$ with identical $\beta$ scans spanning
$10^{-6}$--$10^8$ and seven log-bisection steps at the $0.5$ coverage
criterion.  The empirical boundary comes from fine-tuning the trained
torus score network (800 fine-tune steps) and measuring ODE
coverage; the theory boundary is the exact periodic-grid reweighting
computed on the $200\times200$ KDE grid.  Each run records both
boundaries with flags, both predictions, and the full coverage curves
(9 empirical points; 42 theory points).

\begin{table}[htbp]
\centering
\caption{Reverse- and forward-KL collapse boundaries on the alanine
torus at four reward temperatures.  Empirical columns carry the
bisect flag; the $\lambda=0.1$ forward boundary is censored
(``below''): coverage is still $0.67$ at the scan floor $10^{-6}$.
The theory predicts a fixed fwd/rev ratio $1.24$ at every $\lambda$,
whereas the empirical ratio is non-monotone (censored, then $0.17$,
then $2.7$--$3.5$).  At $\lambda=1$, forward-directed fine-tuning
retains only $0.33$ target-basin coverage at the reverse collapse
boundary (\texttt{cov\_fwd\_at\_rev\_bc}).  Source:
\texttt{results/remote\_bundle\_v2/boundary\_l\{0.1,1.0,10.0,100.0\}.json}.}
\label{tab:fwd-lambda}
\footnotesize
\begin{tabular}{@{}rrrrrrr@{}}
\toprule
 & \multicolumn{3}{c}{empirical $\beta_c$}
 & \multicolumn{3}{c}{numeric theory $\beta_c$} \\
\cmidrule(lr){2-4}\cmidrule(lr){5-7}
$\lambda$ & reverse & forward & fwd/rev & reverse & forward & fwd/rev \\
\midrule
0.1   & 0.122 & $<10^{-6}$ & ---   & 0.0493 & 0.0612 & 1.24 \\
1     & 0.914 & 0.157      & 0.17  & 0.493  & 0.612  & 1.24 \\
10    & 11.34 & 31.06      & 2.74  & 4.93   & 6.12   & 1.24 \\
100   & 85.05 & 299.6      & 3.52  & 49.25  & 61.19  & 1.24 \\
\bottomrule
\end{tabular}
\end{table}

\paragraph{Resolved log-log slopes.}
OLS fits of $\log_{10}\beta_c$ against $\log_{10}\lambda$ give slope
$0.96$ for reverse-KL on all four exact points, but only $1.64$ for
forward-KL on the three \emph{resolved} points
($\lambda=1,10,100$).  A slope of $2.77$ for forward-KL results only
if the censored $\lambda=0.1$ entry is substituted by the scan floor
$10^{-6}$ as though it were measured; that number is an upper-bound
artefact (the true boundary lies strictly below the floor), so the
resolved slope $1.64$ is the defensible figure and a single power law
is not established across the full range.

\paragraph{Power-law fits by dimension.}
On the synthetic GMM grids the OLS specification $\log\beta_c
= c + p_\lambda\log\lambda + p_d\log(d/\sigma) + p_K\log K$ is fit
separately per dimension (Table~\ref{tab:fwd-scaling}); only
configurations with resolved boundaries enter.

\begin{table}[htbp]
\centering
\caption{OLS scaling exponents for empirical reverse-KL, empirical
forward-KL, and numeric-theory boundaries, by ambient dimension.
$n$ = number of resolved configurations (of $64$ in 1D, $18$ in 2D/3D).
Reverse-KL exponents track the theory closely with
$p_\lambda\approx1$ and $R^2\ge0.85$.  Forward-KL fits are poor
($R^2=0.55$--$0.80$) with $p_\lambda\le0.79$ and \emph{negative}
$p_K$, i.e.\ the forward boundary is censored/non-monotone over much
of the grid and is not described by a clean power law.  Source:
\texttt{results/remote\_bundle\_v2/scaling\_by\_dim.json}.}
\label{tab:fwd-scaling}
\footnotesize
\begin{tabular}{@{}rrccccc@{}}
\toprule
dim & boundary & $n$ & $p_\lambda$ & $p_d$ & $p_K$ & $R^2$ \\
\midrule
1 & empirical reverse & 48 & 1.013 & $ 2.033$ & $ 2.311$ & 0.994 \\
1 & empirical forward & 48 & 0.777 & $-0.788$ & $-0.448$ & 0.802 \\
1 & numeric theory    & 44 & 1.000 & $ 2.484$ & $ 2.180$ & 0.995 \\
2 & empirical reverse & 18 & 1.016 & $ 2.739$ & $ 0.833$ & 0.992 \\
2 & empirical forward & 18 & 0.679 & $ 1.424$ & $-1.711$ & 0.545 \\
2 & numeric theory    & 18 & 1.000 & $ 2.764$ & $ 0.886$ & 0.999 \\
3 & empirical reverse & 18 & 0.998 & $ 3.314$ & $ 0.681$ & 0.850 \\
3 & empirical forward & 18 & 0.793 & $ 3.205$ & $-1.181$ & 0.597 \\
3 & numeric theory    & 18 & 1.000 & $ 2.753$ & $ 0.618$ & 0.951 \\
\bottomrule
\end{tabular}
\end{table}

The empirical fwd/rev $\beta_c$ ratio (median, IQR over resolved
configurations) changes sign of its asymmetry with dimension:
$0.016\;[0.003,0.090]$ in 1D, $0.10\;[0.029,1.96]$ in 2D, and
$1.13\;[0.109,1.42]$ in 3D.  Forward-KL thus has dramatically smaller
empirical $\beta_c$ than reverse-KL on 1D grids, comparable boundaries
in 2D, and slightly larger $\beta_c$ in 3D --- an estimator/geometry
effect that
the torus numeric theory, whose fwd/rev ratio is the constant $1.24$,
does not predict.

\section{Regional score-MSE decomposition across dimensions}
\label{app:saddlemse}

This appendix gives the full regional score-MSE record for the
36 ExpA configurations: a 12-point 2D width sweep and a 24-point
$2$--$5$D lattice sweep.

\paragraph{Protocol.}
All models are the same 2-layer, 256-hidden score MLP trained for
$40{,}000$ steps (single seed~0).  \texttt{metrics.score\_mse\_nd}
draws $n$ points uniformly over the bounding box of the mode centres
extended by $\pm2.5\sigma$ ($n=20{,}000$ for the $2$/$3$D lattice
runs; $n=10{,}000$ for the $4$/$5$D and width-sweep runs) and assigns
each point to a region: \emph{mode} ($<1.5\sigma$ of any centre),
\emph{saddle} ($<0.25\,\hat d$ of a nearest-neighbour-pair midpoint,
pairs within $1.6\hat d$, excluding mode points), and \emph{valley}
(everything else).  MSE against the analytically blurred target score
is averaged per region at $t\in\{0.02,0.05,0.1\}$.  A density-weighted
``MC'' MSE is evaluated on $40{,}000$ forward-noised $p_0$ samples.

\paragraph{Width sweep (2D, $d=5$, $K\in\{5,10,20\}$).}
Table~\ref{tab:saddle-sigma} reports the $t=0.02$ values.  Note the
geometric regime is labelled by $\sigma/d$: $\sigma=0.25$ means
\emph{narrow}, well-separated modes ($d/(2\sigma)=10$), not an
overlap regime.  The saddle/mode ratio median over $K$ is
$272\times$ at $\sigma/d=0.05$, $35\times$ at $0.1$, $2.2\times$ at
$0.2$, and $1.0\times$ at $0.4$ --- a fall of $\sim270\times$,
roughly $\sigma^{-2.7}$.  Crucially the density-weighted MC MSE
tracks the \emph{mode} MSE even at the $603\times$ point
($0.452$ vs.\ $0.418$): the saddle excess lives in regions of
essentially zero stationary mass, and all 12 models reach full
target coverage ($\mathrm{cover}_{0.5}=1.0$).

\begin{table}[htbp]
\centering
\caption{2D width sweep at $t=0.02$ ($d=5$ fixed; 40k-step seed-0
models): per-region score MSE, density-weighted MC MSE, and
saddle/mode ratio.  The gap is maximal for \emph{narrow} modes
($\sigma/d=0.05$, $d/(2\sigma)=10$) and closes as modes broaden;
the MC MSE stays within $1.4\times$ of the mode MSE throughout.
Source: \texttt{results/expA\_learn/2d\_K*\_sig*\_s0.json}, field
\texttt{score\_mse.t0.02}.}
\label{tab:saddle-sigma}
\footnotesize
\begin{tabular}{@{}rrrrrrrr@{}}
\toprule
$K$ & $\sigma$ & $d/(2\sigma)$ & mode & saddle & valley & MC & sad/mode \\
\midrule
 5 & 0.25 & 10.0 & 0.418 & 251.82 & 145.74 & 0.452 & 603.1 \\
10 & 0.25 & 10.0 & 1.119 & 143.80 &  61.78 & 1.257 & 128.6 \\
20 & 0.25 & 10.0 & 0.831 & 225.94 & 130.65 & 0.854 & 271.9 \\
 5 & 0.50 &  5.0 & 0.221 &  15.82 &  11.40 & 0.263 &  71.6 \\
10 & 0.50 &  5.0 & 0.366 &   8.37 &   8.35 & 0.399 &  22.9 \\
20 & 0.50 &  5.0 & 0.387 &  13.55 &   7.95 & 0.423 &  35.0 \\
 5 & 1.00 &  2.5 & 0.096 &   0.299 &  1.216 & 0.127 &   3.1 \\
10 & 1.00 &  2.5 & 0.197 &   0.242 &  0.784 & 0.222 &   1.2 \\
20 & 1.00 &  2.5 & 0.164 &   0.357 &  0.584 & 0.191 &   2.2 \\
 5 & 2.00 &  1.25 & 0.014 &   0.014 &  0.167 & 0.020 &   1.0 \\
10 & 2.00 &  1.25 & 0.034 &   0.021 &  0.150 & 0.036 &   0.6 \\
20 & 2.00 &  1.25 & 0.056 &   0.056 &  0.129 & 0.058 &   1.0 \\
\bottomrule
\end{tabular}
\end{table}

\paragraph{Lattice sweep in $2$--$5$D ($\sigma=1$).}
Table~\ref{tab:saddle-dim} shows the same decomposition for lattice
packs with $K\in\{5,10,20\}$ and $d\in\{3,5\}$.  Ratios are modest
everywhere ($0.5$--$5.3\times$); the controlling variable is the
spacing/width ratio $d/\sigma\in\{3,5\}$, not ambient dimension
(per-dimension medians: $1.2\times$ in 2D, $2.1\times$ in 3D,
$1.2\times$ in 4D, $1.4\times$ in 5D; all $d=3$ rows lie at
$0.5$--$1.1\times$).  Valley errors remain $0.2$--$2.6$, and the MC
MSE again stays close to the mode MSE.

\begin{table}[htbp]
\centering
\caption{Lattice packs $2$--$5$D, $t=0.02$, $\sigma=1$: per-region
score MSE, saddle/mode ratio, and $\mathrm{cover}_{0.5}$.  The
$5$D models failed to learn (coverage $0.00$--$0.60$), and their
saddle estimates rest on $4$--$30$ evaluation points each
(Table~\ref{tab:saddle-counts}); treat the $5$D block as estimator
diagnostics, not dimension evidence.  Source:
\texttt{results/expA\_learn/\{2,3,4,5\}d\_pack\_*\_s0.json}.}
\label{tab:saddle-dim}
\footnotesize
\begin{tabular}{@{}rrrrrrrrr@{}}
\toprule
dim & $K$ & $d$ & mode & saddle & valley & MC & sad/mode & cover \\
\midrule
2 &  5 & 3 & 0.072 & 0.048 & 0.864 & 0.104 & 0.67 & 1.00 \\
2 &  5 & 5 & 0.098 & 0.301 & 1.184 & 0.127 & 3.08 & 1.00 \\
2 & 10 & 3 & 0.107 & 0.055 & 0.672 & 0.118 & 0.51 & 1.00 \\
2 & 10 & 5 & 0.199 & 0.240 & 0.790 & 0.222 & 1.20 & 1.00 \\
2 & 20 & 3 & 0.117 & 0.132 & 0.335 & 0.121 & 1.13 & 1.00 \\
2 & 20 & 5 & 0.165 & 0.358 & 0.582 & 0.191 & 2.18 & 1.00 \\
3 &  5 & 3 & 0.031 & 0.030 & 0.665 & 0.048 & 0.97 & 1.00 \\
3 &  5 & 5 & 0.047 & 0.249 & 1.963 & 0.080 & 5.30 & 1.00 \\
3 & 10 & 3 & 0.083 & 0.080 & 0.788 & 0.118 & 0.96 & 1.00 \\
3 & 10 & 5 & 0.065 & 0.231 & 1.773 & 0.103 & 3.55 & 1.00 \\
3 & 20 & 3 & 0.064 & 0.062 & 0.204 & 0.075 & 0.97 & 1.00 \\
3 & 20 & 5 & 0.077 & 0.246 & 0.690 & 0.108 & 3.20 & 0.95 \\
4 &  5 & 3 & 0.168 & 0.183 & 0.461 & 0.178 & 1.09 & 1.00 \\
4 &  5 & 5 & 0.155 & 0.322 & 1.766 & 0.186 & 2.08 & 1.00 \\
4 & 10 & 3 & 0.243 & 0.195 & 0.857 & 0.255 & 0.80 & 1.00 \\
4 & 10 & 5 & 0.209 & 0.254 & 1.486 & 0.192 & 1.21 & 0.90 \\
4 & 20 & 3 & 0.229 & 0.253 & 0.577 & 0.257 & 1.10 & 1.00 \\
4 & 20 & 5 & 0.199 & 0.326 & 0.866 & 0.221 & 1.64 & 0.85 \\
5 &  5 & 3 & 0.081 & 0.090 & 0.519 & 0.102 & 1.12 & 0.60 \\
5 &  5 & 5 & 0.076 & 0.201 & 2.151 & 0.104 & 2.65 & 0.00 \\
5 & 10 & 3 & 0.121 & 0.104 & 0.508 & 0.144 & 0.86 & 0.30 \\
5 & 10 & 5 & 0.138 & 0.236 & 2.582 & 0.176 & 1.71 & 0.10 \\
5 & 20 & 3 & 0.095 & 0.082 & 0.757 & 0.117 & 0.86 & 0.50 \\
5 & 20 & 5 & 0.114 & 0.314 & 1.670 & 0.147 & 2.75 & 0.15 \\
\bottomrule
\end{tabular}
\end{table}

\paragraph{Estimator thinness at high dimension.}
Uniform-in-box points fall in saddle strips only with probability
proportional to their codimension-one volume, so saddle counts
collapse as the dimension and the number of modes grow
(Table~\ref{tab:saddle-counts}): the 5D saddle means rest on just
$4$--$30$ of $10{,}000$ points, and the same models cover none to
$60\%$ of their target modes.  The 2D width sweep is also affected
at the broad end ($n_{\mathrm{saddle}}=186$ at $K=5,\sigma=2$,
vs.\ $3087$ at $\sigma=0.25$), though there every model is fully
trained.  This is why the main text's dimensional claim is restricted
to the trained $2$--$4$D range.

\begin{table}[htbp]
\centering
\caption{Evaluation-point counts per region (of $10{,}000$ uniform
points) for the extreme configurations.  Mode and saddle strips
empty out with dimension and $K$; 5D saddle MSEs are means over
$\le30$ points from models that did not learn.  Source:
\texttt{score\_mse.n\_\{mode,saddle,valley\}} fields.}
\label{tab:saddle-counts}
\footnotesize
\begin{tabular}{@{}lrrrr@{}}
\toprule
configuration & $n_{\mathrm{mode}}$ & $n_{\mathrm{saddle}}$
& $n_{\mathrm{valley}}$ & cover \\
\midrule
2D $K{=}5$, $\sigma{=}0.25$ & 170 & 3087 & 6743 & 1.00 \\
2D $K{=}5$, $\sigma{=}2.0$  & 3338 & 186 & 6476 & 1.00 \\
4D $K{=}5$, $d{=}3$          & 307 & 25 & 9668 & 1.00 \\
4D $K{=}20$, $d{=}5$         & 79 & 209 & 9712 & 0.85 \\
5D $K{=}5$, $d{=}3$          & 93 & 5 & 9902 & 0.60 \\
5D $K{=}10$, $d{=}3$         & 35 & 4 & 9961 & 0.30 \\
5D $K{=}20$, $d{=}5$         & 12 & 20 & 9968 & 0.15 \\
5D $K{=}5$, $d{=}5$          & 30 & 30 & 9940 & 0.00 \\
\bottomrule
\end{tabular}
\end{table}

\paragraph{Time dependence and training history.}
The saddle excess is not a $t=0.02$ artefact
(Table~\ref{tab:saddle-time}): for 2D $K{=}5,\sigma=0.25$ the
saddle/mode ratio is $603/1773/648$ at $t=0.02/0.05/0.1$.  In the
failed 5D model the valley MSE stays $\approx1.5$--$2.6$ at all
times, confirming the model (not the region assignment) is the
problem.  Training snapshots of the 2D $K{=}20,d=5$ model show the
saddle/mode ratio already present once modes are learned
($1.44$ at $2$k steps, cover $0.65$; $2.25/1.33/1.79/2.18$ at
$5$/$10$/$20$/$40$k), so the gap is not an over-training artefact.

\begin{table}[htbp]
\centering
\caption{Regional score MSE at all three diffusion times for the
two extreme configurations.  The saddle/mode gap persists across
$t$; density-weighted MC MSE remains an order of magnitude below
the saddle MSE even at $600\times$ mode error.}
\label{tab:saddle-time}
\footnotesize
\begin{tabular}{@{}lrrrrrr@{}}
\toprule
configuration & $t$ & mode & saddle & valley & MC & sad/mode \\
\midrule
2D $K{=}5$, $\sigma{=}0.25$
 & 0.02 & 0.418 & 251.82 & 145.74 & 0.452 & 603 \\
 & 0.05 & 0.070 & 123.72 & 127.77 & 0.076 & 1773 \\
 & 0.10 & 0.048 &  31.25 &  40.36 & 0.045 & 648 \\
\addlinespace
5D $K{=}20$, $d{=}5$ (cover 0.15)
 & 0.02 & 0.114 & 0.314 & 1.670 & 0.147 & 2.8 \\
 & 0.05 & 0.031 & 0.129 & 1.716 & 0.051 & 4.2 \\
 & 0.10 & 0.018 & 0.081 & 1.461 & 0.025 & 4.6 \\
\bottomrule
\end{tabular}
\end{table}

\section{Score error along controlled PF-ODE trajectories}
\label{app:paths}

This appendix reports the direct test of the corridor-routing
hypothesis discussed in Section~5.1: instead of evaluating the score
error on a uniform grid or on reference $p_t$ samples, we integrate
the controlled PF-ODE and accumulate the reference-score error and
the policy deviation \emph{along the trajectories themselves}.
Script: \texttt{expP\_paths.py}; aggregation:
\texttt{analyze\_expP.py}; raw outputs:
\texttt{results/expP\_paths/*.json}.

\paragraph{Protocol.}
The 24 geometries are the 12-point 2D width sweep
($K\in\{5,10,20\}$, $\sigma\in\{0.25,0.5,1,2\}$, $d=5$) and the 2D/3D
lattice packs ($K\in\{5,10,20\}$, $d\in\{3,5\}$, $\sigma=1$); 4D/5D
packs are excluded because their reference checkpoints are not fully
trained (Appendix~\ref{app:saddlemse}).  For each geometry we load the
reference checkpoint and its empirical exact-grid $\beta_c$ from
\texttt{results/expA\_scale}, then fine-tune three policies at
$\beta\in\{0.3,1,3\}\,\beta_c$ with reverse-KL guidance, $\lambda=1$,
$1000$ steps, $S=20$, batch size $256$, learning rate $10^{-4}$,
seed~0.  All four models are integrated from the \emph{same} batch of
$4096$ terminal states $x_T$ (fixed seed), so path divergence is
attributable to the score functions alone.  Integration uses the RK2
PF-ODE drift of \texttt{diffusion.ode\_sample} bit-identically on the
time grid $\mathrm{linspace}(1,10^{-3},201)$, repeated at $480$ steps
as a discretisation control.  At every intermediate state we evaluate
$\Vert\hat s_{\rm ref}-s_*\Vert^2$ and
$\Vert\hat s_\pi-s_*\Vert^2$ against the analytic blurred target
score \texttt{dist.blurred\_score}, and classify the state into
mode/saddle/valley with the exact region rules and thresholds of
\texttt{metrics.score\_mse\_nd}; region labels are recomputed at each
state.  Per-state sums and counts are accumulated by time window and
also retained as a per-time series (no full trajectories need to be
stored; a $1024$-trajectory subsample is saved separately).

\paragraph{Width-sweep results.}
Table~\ref{tab:paths} reports, for the collapsing $\beta=0.3\beta_c$
policies, endpoint coverage, the ratio of saddle-strip residence
times, the reference-score error encountered on reference vs.\
collapsing paths in the late window $t\le0.05$, the corresponding
ratio at $t\in(0.05,0.2)$, and the
ratio of the grid $t=0.02$ saddle MSE to the matched-window path
error.  Key observations: (1)~the grid diagnostic overstates the
path-encountered error by $1.6$--$25\times$ for narrow modes and
understates it by $3$--$28\times$ for broad modes --- the measure
mismatch changes magnitudes in both directions; (2)~residence times
are essentially unchanged by collapse; (3)~only the
$K{=}5,\sigma{=}0.25$ row shows a large error amplification, and its
late-window reference estimate rests on just 14 saddle states (its
$84\times$ mid-window ratio rests on $5{,}118$ vs.\ $41{,}093$ states
and is well populated).

\begin{table}[htbp]
\centering
\caption{Controlled-trajectory test, 2D width sweep.  Columns:
$\beta{=}0.3\beta_c$ endpoint coverage; saddle-residence ratio
$q/r$; reference-score saddle error along reference paths and
collapsing paths, time-averaged over $t\le0.05$; their ratio at
$t\le0.05$ and $t\in(0.05,0.2)$; and grid $t{=}0.02$ saddle MSE
divided by the matched-window path error.  Reference coverage is
$1.00$ in every row.  Source: \texttt{results/expP\_paths/}.}
\label{tab:paths}
\footnotesize
\begin{tabular}{@{}rrrrrrrrr@{}}
\toprule
$K$ & $\sigma$ & $\mathrm{cov}_q$ & res.\ $q/r$
& $e_{\rm ref}^{\le.05}$ & $e_q^{\le.05}$
& $\times_{\le.05}$ & $\times_{(.05,.2)}$ & grid/path \\
\midrule
 5 & 0.25 & 0.20 & 1.23 & 10.15 & 149.1 & 14.7 & 84.0 & 24.8 \\
10 & 0.25 & 0.40 & 1.07 & 32.15 &  53.8 &  1.7 &  5.0 &  4.5 \\
20 & 0.25 & 0.25 & 1.00 & 46.52 &  67.9 &  1.5 &  3.7 &  4.9 \\
 5 & 0.50 & 0.40 & 1.10 &  3.94 &   6.1 &  1.5 &  9.8 &  4.0 \\
10 & 0.50 & 0.20 & 1.08 &  5.04 &   5.8 &  1.1 &  3.5 &  1.7 \\
20 & 0.50 & 0.35 & 1.01 &  8.74 &  10.4 &  1.2 &  1.5 &  1.6 \\
 5 & 1.00 & 0.40 & 1.10 &  1.03 &   1.3 &  1.3 &  2.4 &  0.3 \\
10 & 1.00 & 0.40 & 1.08 &  2.14 &   1.7 &  0.8 &  1.7 &  0.1 \\
20 & 1.00 & 0.30 & 1.00 &  1.91 &   1.6 &  0.9 &  0.9 &  0.2 \\
 5 & 2.00 & 0.40 & 1.39 &  0.28 &   0.3 &  1.0 &  0.8 &  0.04 \\
10 & 2.00 & 0.40 & 1.15 &  0.15 &   0.2 &  1.4 &  1.0 &  0.1 \\
20 & 2.00 & 0.25 & 0.90 &  0.32 &   0.3 &  1.0 &  1.0 &  0.2 \\
\bottomrule
\end{tabular}
\end{table}

\begin{figure}[htbp]
\centering
\includegraphics[width=\linewidth]{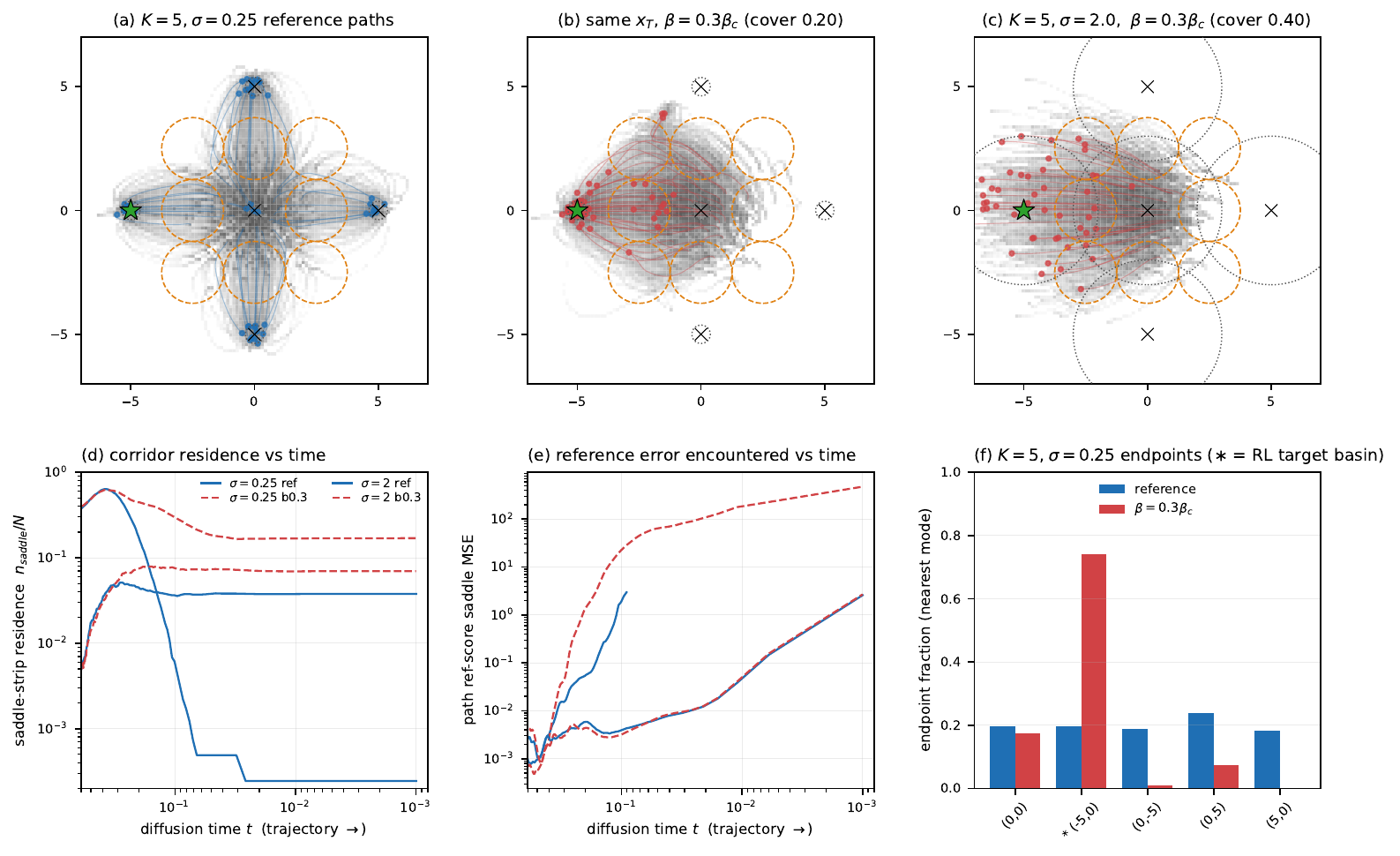}
\caption{Controlled PF-ODE trajectories from the shared $x_T$ batch (200
RK2 steps, 1024 saved trajectories; script
\texttt{plot\_expP\_paths.py}).  Dotted grey circles are mode cells
($1.5\sigma$), dashed orange circles saddle strips ($0.25\,\hat d$),
black crosses mode centres, and the green star the RL target basin.
\textbf{(a)} $K{=}5,\sigma{=}0.25$, reference: paths settle at all five
modes (grey: log-density of all states; blue: 48 example trajectories).
\textbf{(b)} Same $x_T$, $\beta=0.3\beta_c$: endpoints funnel onto the
target basin and the centre mode and the three other modes are
abandoned (cover $0.20$); corridor crossings at intermediate times are
visibly similar to (a).  \textbf{(c)} $\sigma=2$, collapsing policy:
the $1.5\sigma$ mode cells (radius $3$) swallow the saddle midpoints,
which is why saddle residence is $\sim0.02$.  \textbf{(d)} Saddle
residence vs.\ diffusion time (trajectory flows left $\to$ right): the
$\sigma=0.25$ reference leaves the strips after $t\approx0.1$ while its
collapsing policy lingers at $0.1$--$0.2$ late --- the single outlier
configuration; for $\sigma=2$ both curves stay near zero.
\textbf{(e)} Reference-score saddle MSE encountered along the same
paths: only the $\sigma=0.25$ collapsing run (upper red dashed)
separates strongly from its reference at $t\lesssim0.2$ (the
$84\times$ window); the two $\sigma=2$ curves coincide.
\textbf{(f)} Endpoint nearest-mode fractions for $K{=}5,\sigma{=}0.25$:
the reference spreads uniformly ($0.18$--$0.24$ per mode) while the
collapsed policy funnels $74\%$ of endpoints onto the \emph{target}
basin $(-5,0)$ and $\approx17\%$ onto the centre, leaving the other
modes nearly empty.  Collapse is thus reward-target funneling, not
saddle-rerouting.}
\label{fig:paths}
\end{figure}

\paragraph{Lattice packs.}
For the six 2D packs the late-window error ratio has median $0.95$
(range $0.78$--$1.33$) and the residence-ratio median $1.09$; for the
six 3D packs the corresponding numbers are $1.00$ ($0.76$--$1.38$)
and $1.02$.  No pack exceeds $2.2\times$ in any time window.

\paragraph{Time structure and discretisation control.}
Over all 24 geometries the median collapsing/reference error ratio is
$1.06$ at $t\le0.05$, $1.08$ at $t\in(0.05,0.2)$, $1.32$ at
$t\in(0.2,0.5)$, and $1.00$ at $t>0.5$: there is no early-time
amplification, and the mild intermediate-time elevation is confined
to narrow small-$K$ rows.  Endpoints are stable under solver
refinement: over the $96$ model runs,
$\max|\Delta\mathrm{cover}_{0.5}|=0.000$ and
$\max|\Delta m_{\rm tgt}|=0.001$ between the 200- and 480-step
integrations.  Error amplification decreases monotonically as $\beta$
is raised: at $\beta=3\beta_c$ the all-trajectory saddle-error ratio
has median $1.12$ (maximum $2.23$) and coverage recovers to
$0.55$--$1.00$.

\paragraph{Intentional tilt versus reference error.}
Along collapsing paths, the policy's own deviation
$\Vert\hat s_\pi-s_*\Vert^2$ in saddle strips is $4$--$1{,}950\times$
the reference error (median $26\times$ over geometries), including
broad-mode configurations whose reference scores are accurate
(e.g.\ $326\times$ at $K{=}10,\sigma{=}2$).  Collapse therefore tracks
the imposed reward tilt rather than pre-existing regions of reference
inaccuracy.

\paragraph{Verdict.}
The corridor hypothesis predicts that collapsing policies spend
substantially more trajectory time in saddle strips and accumulate
substantially larger path-weighted reference error.  Residence is
unchanged (median $+7\%$), the median error ratio is near unity in
every time window, and the sole large amplification is a single
narrow, few-mode geometry.  Corridor routing is rejected as a general
mechanism; it can survive only as a geometry-specific amplifier, and
any Gaussian-prior latent version requires the analogous experiment in
latent space, which the present data-space suite does not cover.

\section{Three-source decomposition of the dynamic amplification}
\label{app:expQ}

This appendix closes Open Problem~2 with a controlled decomposition.
Script: \texttt{expQ\_decomp.py}; analysis:
\texttt{analyze\_expQ.py}; raw outputs: \texttt{results/expQ/*.json}.

\paragraph{Protocol.}
Eight geometries span the regimes used in the main text: the 2D
$K{=}5$ width sweep ($\sigma_0\in\{0.25,0.5,1,2\}$, $d{=}5$) and the
$K{=}10$ lattice packs at $d\in\{3,5\}$ in both 2D and 3D.  All
fine-tunes use the main-text protocol (reverse KL, $\lambda{=}1$,
$1000$ steps, $S{=}20$, batch $256$, learning rate $10^{-4}$,
seed~0); coverage uses $8000$ RK2 endpoints on a paired terminal batch.
The $\beta$ grid is 14 log-uniform points adaptively bracketing both
$\beta_c^{\rm emp}$ and $\beta_c^{\rm var}$ with a decade margin, and
every fine-tune is sampled at $N\in\{25,50,100,150,200,400\}$ PF-ODE
steps.  Four phases isolate the error budgets:
\textbf{(A)~dyn} --- learned reference and learned policy (network $+$
solver $+$ anchor mismatch);
\textbf{(B)~ana} --- the exact reverse-KL optimum
$\pi^*\propto p_0 e^{\lambda R/\beta}$ sampled with the analytic
blurred score of $\pi^*$ (solver only, no learned function at all);
\textbf{(C)~anchor} --- a learned policy fine-tuned with the exact data
score as KL anchor in place of the learned reference (policy network
$+$ solver, anchor mismatch removed; four geometries bracketing the
separation regimes);
\textbf{(D)~anchorKL} --- bidirectional PF-ODE likelihoods, regional
score MSE, and the on-path DDPO-unit anchor gap
$\tfrac12\mathbb E\sum_t\beta(t)\|\hat s_{\rm ref}-s_{\rm data}\|^2dt$.

\begin{figure}[t]
\centering
\includegraphics[width=0.98\linewidth]{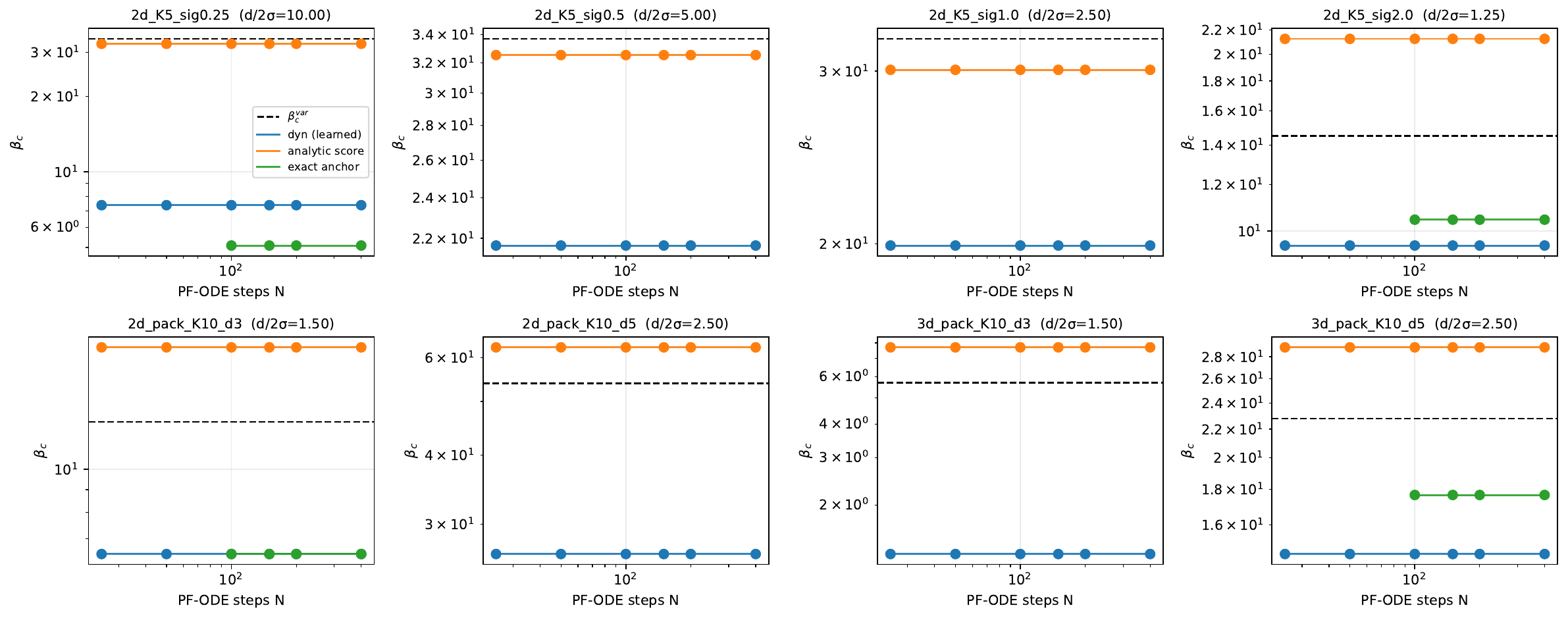}
\caption{Measured collapse boundary versus PF-ODE step count $N$ in
all eight geometries (Experiment~Q).  Orange: exact tilted GMM with
analytic scores (solver-only budget); blue: fully learned pipeline;
green: learned policy with exact-score KL anchor; dashed: grid-numeric
$\beta_c^{\rm var}$.  Every curve is exactly flat from $N{=}25$ to
$400$ (zero discretisation contribution), the solver-only boundary
coincides with $\beta_c^{\rm var}$ wherever modes are separated, and
neither learned pipeline reaches it.}
\label{fig:expQ:N}
\end{figure}

\paragraph{Result 1: ODE discretisation contributes zero at the boundary.}
$\beta_c^{\rm ana}(N)$ is identical at all six step counts for every
geometry: $\max_N|\beta_c^{\rm ana}(25)/\beta_c^{\rm ana}(400)-1|=0.0\%$.
This upgrades the endpoint-metric invariance of
Appendix~\ref{app:paths} ($|\Delta{\rm cover}|<0.001$ between 200 and
480 steps) to invariance of $\beta_c$ itself, down to $25$ steps.  The
solver-only boundary agrees with the grid-numeric variational boundary
to within $3$--$7\%$ in the separated regime
($\beta_c^{\rm ana}/\beta_c^{\rm var}=0.93$--$0.97$ for
$d_{\min}/(2\sigma_0)\ge 2.5$); in overlapping packs it lies
$16$--$47\%$ above $\beta_c^{\rm var}$, as expected because the
grid-numeric formula presumes disjoint mode balls
(cf.\ Open Problem~3).

\paragraph{Result 2: KL-anchor mismatch is not the carrier.}
The on-path anchor gap is small everywhere ($0.012$--$0.075$ DDPO
units) and, critically, is \emph{largest} in the most separated narrow
configuration ($0.075$ at $d_{\min}/(2\sigma_0){=}10$), where the
dynamic amplification is also largest, while it is smallest ($0.012$)
in the overlapping $\sigma_0{=}2$ geometry.  Replacing the learned
reference by the exact data score does not recover the boundary:
$\beta_c/\beta_c^{\rm var}$ changes from $0.22\to0.15$ (narrow, wrong
direction), $0.65\to0.72$ (overlap), $0.51\to0.51$ (2D pack, no
change), and $0.64\to0.78$ (3D pack).  The anchor intervention closes
at most half of the gap to the analytic boundary and moves it
non-systematically; anchor mismatch is therefore bounded as a carrier.
The PF-ODE likelihood estimates themselves are biased for very narrow
peaks (the reverse estimate can go negative), which is why the
on-path DDPO-unit gap is the primary measure.

\begin{figure}[t]
\centering
\includegraphics[width=0.99\linewidth]{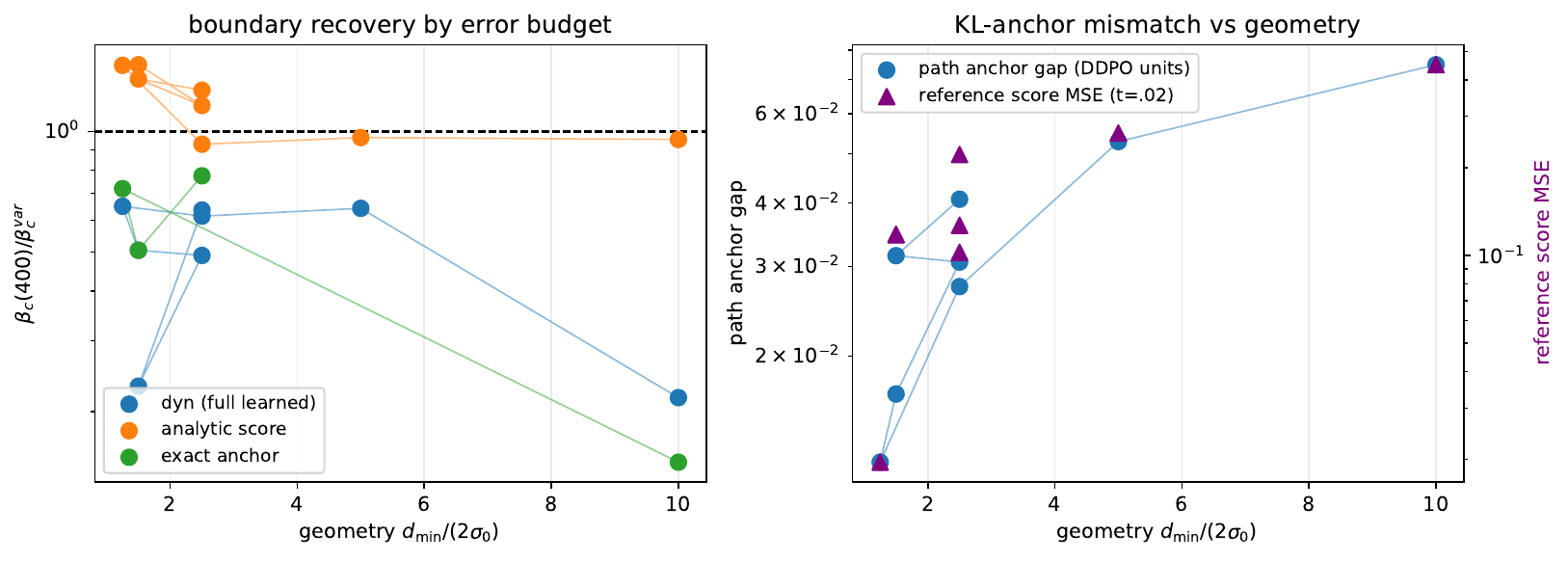}
\caption{Left: boundary recovered under each error budget at $N{=}400$
versus geometry --- analytic scores (orange) recover $\beta_c^{\rm var}$
(dashed) except in overlap; the exact-anchor intervention (green) tracks
the fully learned pipeline (blue), not the analytic one.  Right: the
path-integrated KL-anchor gap and reference score MSE at $t{=}.02$ grow
with separation, i.e.\ \emph{anti}-correlated with the amplification.}
\label{fig:expQ:sources}
\end{figure}

\begin{figure}[t]
\centering
\includegraphics[width=0.99\linewidth]{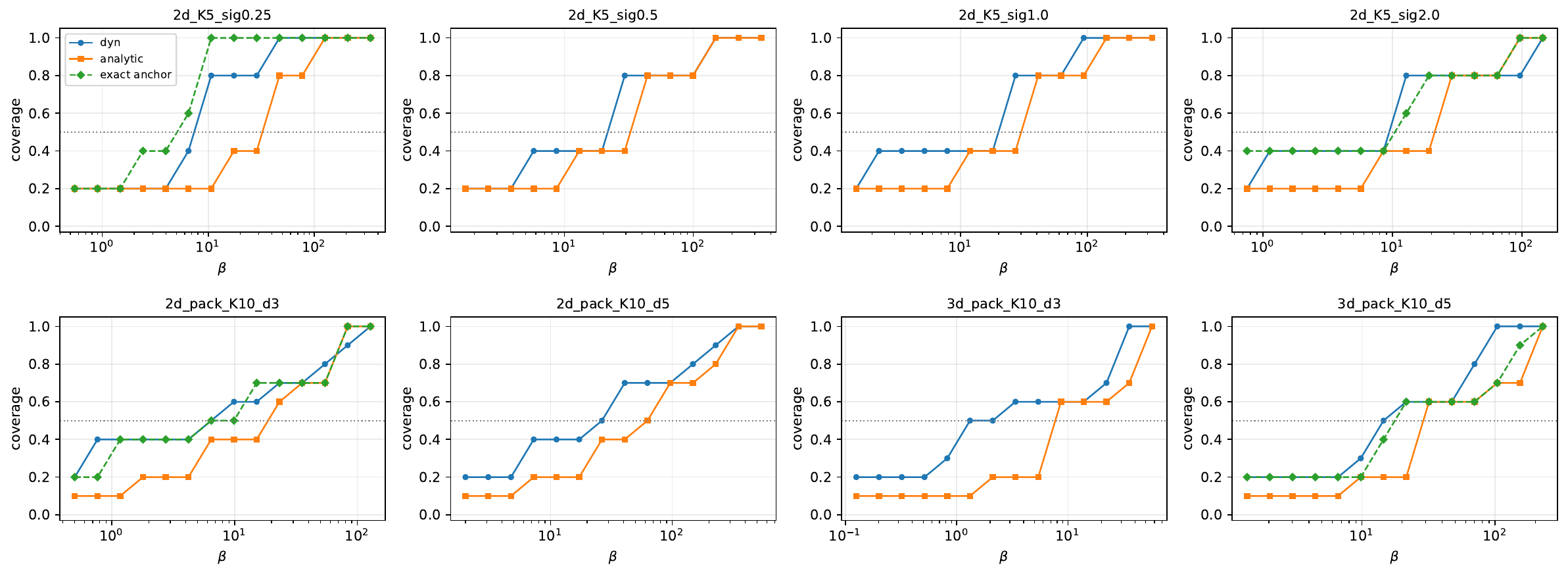}
\caption{Raw coverage curves ($N{=}400$) behind the boundary estimates:
blue learned pipeline, orange analytic scores, green exact anchor;
dotted line at the $0.5$ coverage threshold.  In the overlapping packs
(bottom row) all pipelines start with non-trivial coverage at
$\beta\!\to\!0$, the saturated-coverage regime of Open Problem~3.}
\label{fig:expQ:curves}
\end{figure}

\paragraph{Result 3: the residual is on the policy side.}
With the solver contribution exactly zero and the anchor contribution
bounded, the full-pipeline deficit $\beta_c^{\rm dyn}/\beta_c^{\rm
var}=0.22$--$0.65$ (median $0.56$) resides in the learned policy:
even anchored to the exact data score and integrated at $400$ steps,
the 2-layer MLP after $1000$ guidance steps realises only
$15$--$78\%$ of the analytic tilt boundary.  Together with
Appendix~\ref{app:paths} (no saddle-strip rerouting of the
trajectories), the identified error carrier is a \emph{static}
policy-side gap --- finite-capacity representation of the tilted score
field and/or incomplete optimisation toward the reverse-KL optimum ---
which requires no geometric corridor to act.  Caveats: this arm uses a
single seed, and the exact-anchor penalty is the on-policy
score-matching twin of the DDPO KL term rather than that term itself;
a deeper policy and a longer fine-tune would bound the
representation/optimisation split further.

\section{Fixed-prior intervention on latent geometry}
\label{app:expR}

This appendix supplies the counterfactual cell missing from the
Exp6/Exp7 design (the non-identifiability formalised in
\texttt{Mediation.lean}, Appendix~\ref{app:lean}): the prior is held
fixed at $\mathcal N(0,I)$ while the latent geometry is moved
directly.  Script: \texttt{expR\_intervention.py}; analysis:
\texttt{analyze\_expR.py}; raw outputs: \texttt{results/expR/}.

\paragraph{Design.}
Data are the $K{=}10$, $d{=}8$ lattice pack of Exp6.  Two interventions
act on the ELBO $\mathcal L={\rm recon}+w_{\rm kl}\,{\rm KL}$ with the
prior label fixed: a KL-weight arm
$w_{\rm kl}\in\{0.03,0.1,0.3,1,3\}$ at encoder width $128$, and a
capacity arm with widths $\{64,128,256\}$ at $w_{\rm kl}{=}1$.  VAEs
train for $30\,000$ steps; geometry ($\sigma_{\rm lat}$,
$d_{\min}/(2\sigma_{\rm lat})$, nearest-centre accuracy, decoded
coverage), reconstruction, and KL are snapshotted on a fixed eval
batch at seven training steps.  Each frozen VAE then follows the
main-text latent pipeline: a score model on the aggregate posterior
($40\,000$ steps), an exact-GMM variational boundary on the cell's own
latent centres, and a 14-point $\beta$ grid spanning
$[0.03,30]\,\beta_c^{\rm var}$ with the standard fine-tune protocol.

\begin{table}[h]
\centering\small
\begin{tabular}{lccccc}
\hline
cell & $w_{\rm kl}$ & $d_{\min}/(2\sigma_{\rm lat})$ &
$\beta_c^{\rm var}$ & $\beta_c^{\rm emp}$ & emp/var \\
\hline
w0.03 & 0.03 & 2.08 & 3.21 & 1.37 & 0.43 \\
w0.1  & 0.10 & 1.55 & 1.99 & 0.50 & 0.25 \\
w0.3  & 0.30 & 1.14 & 1.17 & 0.85 & 0.73 \\
w1.0  & 1.00 & 1.12 & 1.20 & 0.67 & 0.56 \\
w3.0  & 3.00 & 0.65 & 0.77 & 0.96 & 1.24 \\
w1\,h64  & 1.00 & 1.16 & 1.32 & 0.74 & 0.56 \\
w1\,h256 & 1.00 & 1.07 & 1.04 & 0.75 & 0.73 \\
\hline
\end{tabular}
\caption{Experiment~R: all seven cells use the SAME Gaussian prior.
The KL-weight intervention moves latent geometry monotonically from
$2.08$ to $0.65$ and the variational boundary from $3.21$ to $0.77$.}
\label{tab:expR}
\end{table}

\paragraph{Result 1: geometry is moved at fixed prior, and $\beta_c$
follows.}
Increasing $w_{\rm kl}$ compresses the latent geometry monotonically
($2.08\to0.65$) and the variational boundary with it
($3.21\to0.77$, a $4.2\times$ range); the measured boundary moves over
a comparable range ($0.50$--$1.37$) with the single-seed Monte-Carlo
scatter seen throughout.  The $w_{\rm kl}{=}1$ cell reproduces the
Exp6 Gaussian endpoint ($\beta_c{=}0.67$ versus $0.71$ in Exp6), and
the $w_{\rm kl}{=}3$ cell reaches $d_{\min}/(2\sigma_{\rm lat}){=}0.65$
--- inside the Exp7 Gaussian regime ($0.81$) --- with the reference
itself losing a mode at the decoder (decoded plateau $0.80$): the
``collapsed Gaussian'' geometry is reproduced by ELBO pressure alone,
with no change of prior label.

\begin{figure}[t]
\centering
\includegraphics[width=0.86\linewidth]{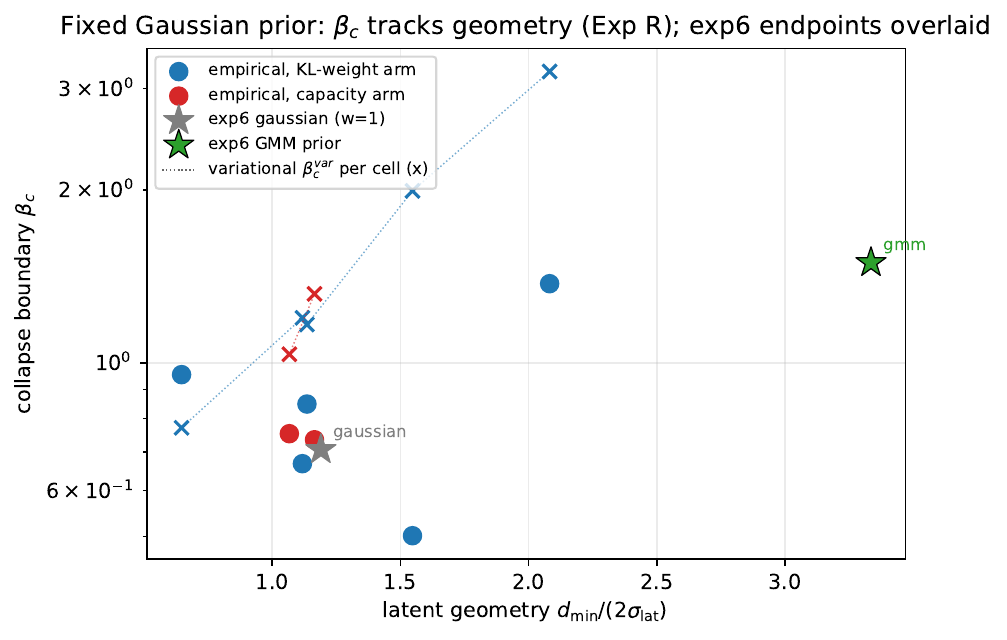}
\caption{Experiment~R mediation test.  Filled circles: measured
$\beta_c$ under a fixed Gaussian prior with geometry moved by KL
weight; crosses: each cell's variational boundary; the dotted line
joins them; stars: the Exp6 Gaussian and GMM endpoints.  The Exp6
Gaussian star sits on the $w_{\rm kl}{=}1$ intervention cell.}
\label{fig:expR:mediation}
\end{figure}

\paragraph{Result 2: ELBO dynamics show how the intervention acts.}
Geometry develops during training (Figure~\ref{fig:expR:elbo}): modes
separate over the first $\sim10^3$ steps and saturate at
$w_{\rm kl}$-dependent plateaus, while stronger KL pressure inflates
the latent blob width $\sigma_{\rm lat}$ ($0.23$ at $w{=}0.03$ to
$0.45$ at $w{=}3$).  Encoder width alone is an inactive lever in this
range ($d_{\min}/(2\sigma_{\rm lat})=1.07$--$1.16$ for
$64/128/256$ hidden units); the KL term is the binding constraint.

\begin{figure}[t]
\centering
\includegraphics[width=0.99\linewidth]{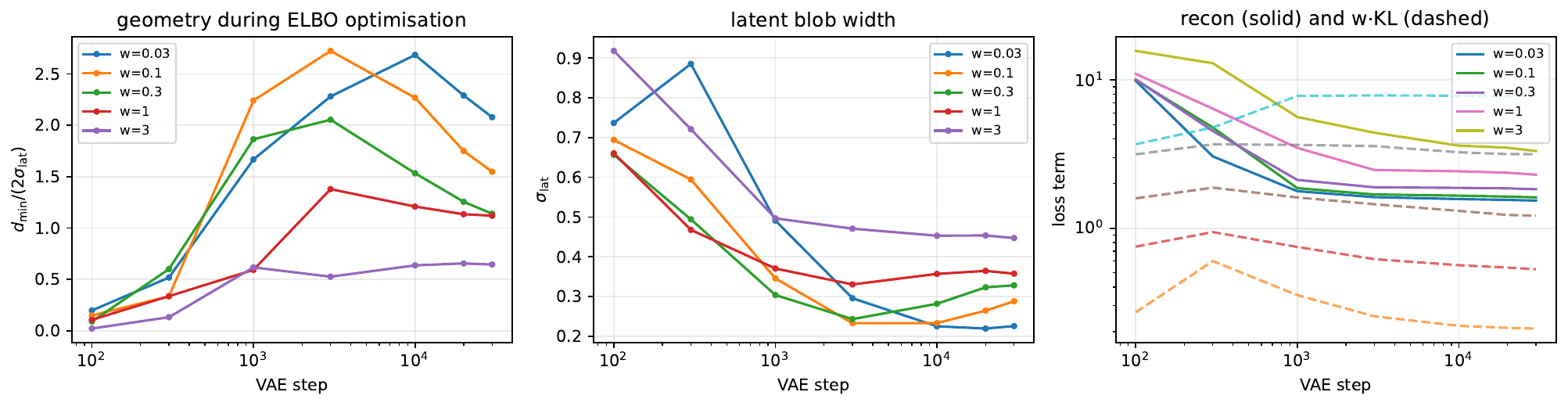}
\caption{Within-training ELBO trajectories under the fixed Gaussian
prior (Experiment~R): geometry ratio (left) and latent blob width
(centre) saturate at KL-weight-dependent levels; right shows the
reconstruction and weighted-KL loss terms.}
\label{fig:expR:elbo}
\end{figure}

\paragraph{Scope.}
The experiment establishes the causal chain
$w_{\rm kl}\!\to\!{\rm geometry}\!\to\!\beta_c$ without touching the
prior label, so the prior has no privileged role in producing the
boundary in the overlapping range ($0.65$--$2.08$), which contains
both Exp7 latents.  It does \emph{not} supply the reverse
counterfactual --- a multimodal prior forced to a compressed geometry
--- and the Exp6 GMM endpoint at $d_{\min}/(2\sigma_{\rm lat}){=}3.34$
lies outside the geometry a Gaussian-prior ELBO reaches on this data,
so the strongest-separation cell remains a single observation.

\section{Four-prior comparison: Gaussian, GMM, VampPrior, GMVAE}
\label{app:expV}

Implication~1 claims that the multi-modal advantage attaches to the
\emph{achieved latent geometry}, not to the mixture symbol: any prior
that forces the ELBO to preserve mode separation should reproduce the
GMM result.  This appendix tests the claim on the two most-cited
learnable-mixture priors.  Script: \texttt{expV\_priors.py}; analysis:
\texttt{analyze\_expV.py}; raw outputs: \texttt{results/expV/}.

\paragraph{Design.}
Data and protocol are identical to Exp6: the $K{=}10$, $d{=}5$,
$\sigma_0{=}0.5$ lattice pack in $\mathbb R^8$, 2D latent, VAE
$30\,000$ steps (Gaussian KL warm-up $3000$), score model $40\,000$
steps, 14-point $\beta$ grid on $[0.03,30]\,\beta_c^{\rm var}$, 1000
fine-tune steps per point, ODE $N{=}8000$ samples / 150 RK2 steps.
Four priors, three seeds each:
\begin{itemize}\itemsep2pt
\item \textbf{gaussian}: $\mathcal N(0,I)$;
\item \textbf{gmm}: the label-tracked $K{=}10$ Gaussian-mixture prior of
Exp6 (centres updated to labelled aggregate-posterior means);
\item \textbf{vamp}: a VampPrior \citep{tomczak2018vae} with $M{=}K{=}10$
learnable pseudo-inputs passed through the \emph{shared} encoder,
producing a 10-component mixture posterior as the aggregate prior ---
no labels used;
\item \textbf{gmvae}: a VaDE/GMVAE \citep{lee2021mixture} prior with
$g_\mu,g_{\log\sigma^2},g_{\log w}$ all learnable \texttt{Parameter}s
trained jointly; the KL is the GMVAE decomposition into a categorical
term and responsibility-weighted conditional Gaussian terms --- no
labels used.
\end{itemize}
Each learned prior is additionally audited by \emph{component
placement}: a greedy one-to-one matching of its $K$ components to the
fitted data-mode centres, reporting the matched fraction and the mean
matched distance in units of the data separation $d$.

\begin{table}[h]
\centering\small
\begin{tabular}{lcccccccc}
\hline
prior & $\sigma_{\rm lat}$ & $d_{\min}/(2\sigma_{\rm lat})$ &
placement & $\beta_c^{\rm var}$ & $\beta_c^{\rm emp}$ & emp/var &
$\alpha_{\rm emp}$ & plateau \\
\hline
gaussian & $0.36{\pm}.01$ & $1.19{\pm}.05$ & --- &
$1.16{\pm}.10$ & $0.71{\pm}.02$ & $0.61{\pm}.08$ & $2.7{\pm}.2$ & 1.00 \\
gmm & $0.039{\pm}.001$ & $5.2{\pm}1.6$ & $0.01d$ &
$1.47{\pm}.23$ & $1.27{\pm}.19$ & $0.87{\pm}.10$ & $430{\pm}70$ & 1.00 \\
vamp & $0.34{\pm}.01$ & $1.73{\pm}.23$ & $0.27d$ &
$5.9{\pm}1.6$ & $4.0{\pm}1.7$ & $0.67{\pm}.18$ & $18{\pm}9$ & 1.00 \\
gmvae & $0.75{\pm}.01$ & $1.13{\pm}.04$ & $0.67d$ &
$4.5{\pm}.6$ & $4.5{\pm}1.4$ & $0.97{\pm}.21$ & $4.0{\pm}1.2$ &
$0.95$--$1.00$ \\
\hline
\end{tabular}
\caption{Experiment~V, mean${\pm}$std over three seeds.
Placement is the mean greedy component-to-mode distance in units of
$d$ (fraction matched is 10/10 for every learned mixture).
$\alpha_{\rm emp}=\beta_c^{\rm emp}/(2\sigma_{\rm lat}^2)$; plateau is
decoded coverage at the two largest $\beta$ values.}
\label{tab:expV}
\end{table}

\paragraph{Result 1: only the label-tracked GMM prior reaches the
tight, well-separated geometry.}
The GMM prior reproduces the Exp6 endpoints
($d_{\min}/(2\sigma_{\rm lat}){=}3.3$--$7.2$,
$\beta_c^{\rm emp}{=}1.09$--$1.52$ versus $1.50$ in Exp6, and the
Gaussian cells $0.68$--$0.74$ versus $0.71$).  The VampPrior moves the
\emph{ratio} only to $1.4$--$2.0$ while leaving the absolute width
nearly Gaussian ($\sigma_{\rm lat}{=}0.33$ versus $0.36$): its
pseudo-inputs do find all ten modes (10/10 matched, $0.27d$ mean
offset) but the mixture components it induces through the shared
encoder stay broad.  The GMVAE does not separate at all in ratio
terms ($1.09$--$1.18$), leaves the latent \emph{wider} than the
Gaussian prior ($\sigma_{\rm lat}{=}0.75$), and places its components
at $0.67d$ from the data modes --- formally 10/10 matched but
effectively between the modes.

\paragraph{Result 2: after conditioning on the achieved geometry, no
symbolic prior effect survives.}
For every prior, the measured boundary is accounted for by the cell's
\emph{own} variational boundary, emp/var$\in[0.43,1.24]$ with means
$0.61/0.87/0.67/0.97$ for gaussian/gmm/vamp/gmvae; the normalised
collapse curves cross coverage $0.5$ at $\beta/\beta_c^{\rm var}$
between $0.5$ and $1.2$ for all four families
(Fig.~\ref{fig:expV-curves}).  In particular, the apparently large
GMVAE boundary ($2.6$--$6.1$ at $d_{\min}/(2\sigma_{\rm lat})\approx
1.1$, where the Gaussian prior gives $0.7$) is explained by geometry,
not by a direct prior penalty: at matched ratio the GMVAE latent is
$2.1\times$ wider, and $\beta_c$ scales with $\sigma_{\rm lat}^2$; in
scale-free units its $\alpha_{\rm emp}{=}2.5$--$5.2$ sits next to the
Gaussian $2.6$--$3.0$, and its emp/var is $0.73$--$1.24$.  The
mediation plane with the Exp~R fixed-Gaussian KL-weight arm overlaid
(Fig.~\ref{fig:expV-mediation}) shows the same thing geometrically:
the high Vamp/GMVAE points sit at their own geometry, and the prior
label adds no residual beyond seed scatter.

\paragraph{Result 3: no plateau penalty for the learned priors.}
All four priors reach full decoded coverage at large $\beta$
(Fig.~\ref{fig:expV-plateau}); the GMVAE dips to $0.95$ in latent
coverage in two seeds, but this decodes to full data-space coverage.
None of the learned priors damages mode preservation once fine-tuned.

\paragraph{Verdict for Implication~1.}
The implication is supported in its conditional form ---
\emph{any} prior that achieves the tight, well-placed, well-separated
geometry reproduces the GMM boundary and plateau --- and sharpened in
an important way: the two standard label-free learned-mixture priors
do \emph{not} achieve that geometry under the same 30k-step ELBO
budget.  The mixture symbol is not sufficient; the ELBO must actually
be forced to tighten the components onto the data modes, which in our
hands required label tracking.  This converts Implication~1 from a
promise about architecture classes into a falsifiable statement about
achieved geometry, and identifies ``ELBO recipes that make Vamp/GMVAE
components as tight as label-tracked ones'' as the remaining
engineering question.

\begin{figure}[h]
\centering
\includegraphics[width=0.92\linewidth]{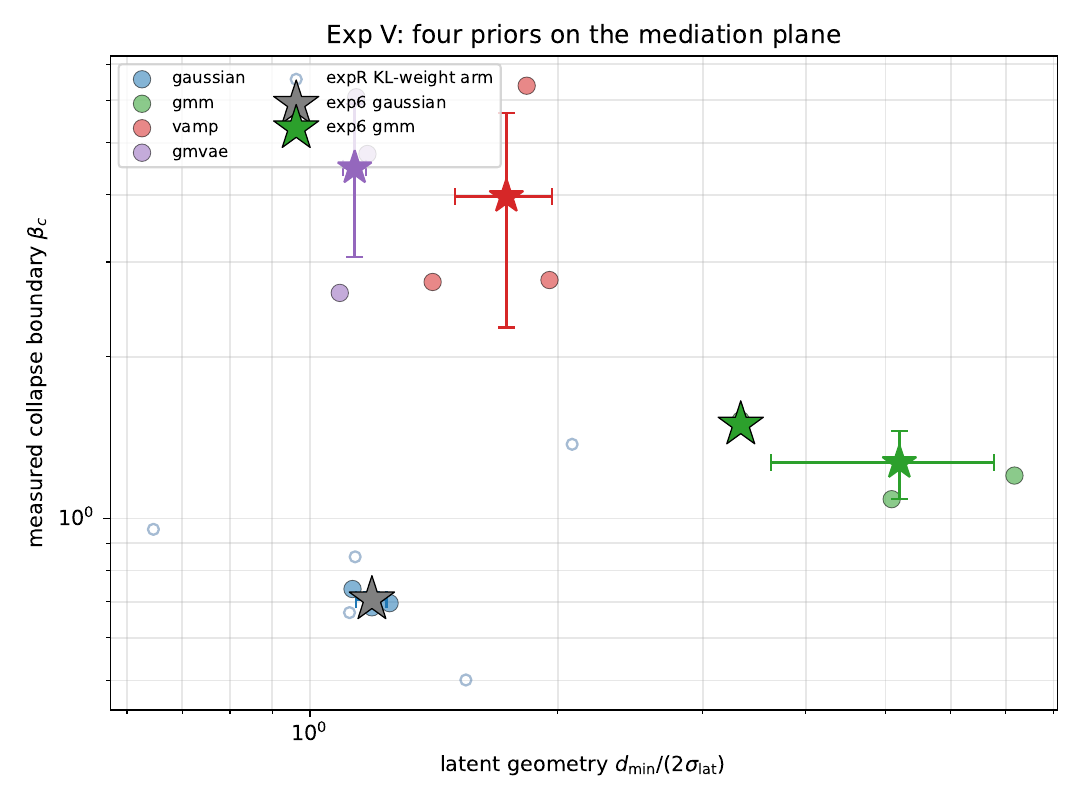}
\caption{Experiment~V on the mediation plane (stars: 3-seed
mean${\pm}$std; hollow circles: the Exp~R fixed-Gaussian KL-weight
arm; large stars: Exp6 endpoints).  Vamp/GMVAE cells occupy their own
geometries rather than the GMM corner.}
\label{fig:expV-mediation}
\end{figure}

\begin{figure}[h]
\centering
\includegraphics[width=0.92\linewidth]{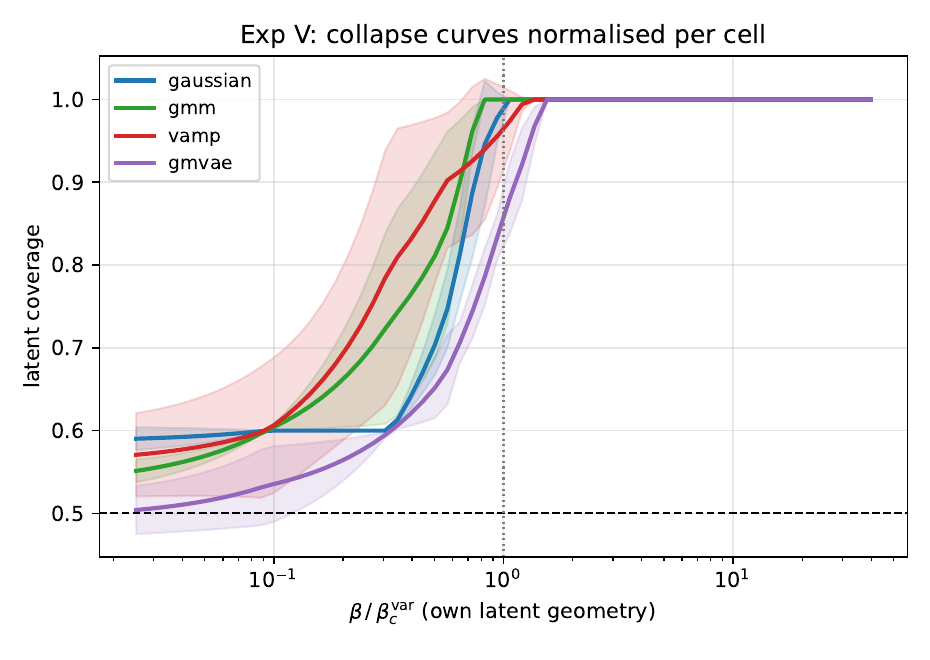}
\caption{Experiment~V collapse curves normalised by each cell's own
$\beta_c^{\rm var}$ (mean${\pm}$std band, three seeds).  All four
prior families cross coverage $0.5$ within a factor $0.5$--$1.2$ of
their geometry-predicted boundary.}
\label{fig:expV-curves}
\end{figure}

\begin{figure}[h]
\centering
\includegraphics[width=0.62\linewidth]{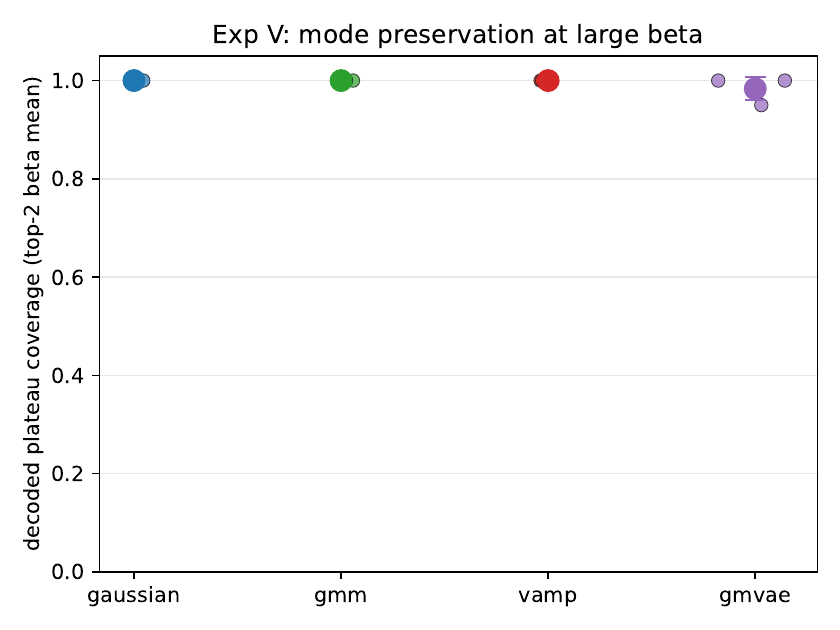}
\caption{Experiment~V decoded plateau coverage.  No prior family loses
data-space modes at large $\beta$.}
\label{fig:expV-plateau}
\end{figure}

\section{Separation scan: does $\alpha=\beta_c/\beta_{\rm cross}$
tend to $1$?}
\label{app:expW}

The roadmap of Section~``Future Work'' conjectured an asymptotic
$\mathrm{dyn}/\mathrm{var}\to1$ as $g=d_{\min}/(2\sigma_0)\to\infty$,
and Proposition~\ref{prop:prior-scale} locates $\alpha\approx1$ near
$d_{\min}>4\sigma_0$.  This appendix measures $\alpha(g)$ directly on
1D equal-weight lattices over $g\in[0.5,10]$.
Script: \texttt{expW\_limits.py}; analysis:
\texttt{analyze\_expW.py}; raw outputs: \texttt{results/expW/}.

\paragraph{Design.}
Equal-weight 1D Gaussian mixtures with $\sigma_0=\lambda=1$, lattice
spacing $d=2g$, $K\in\{3,5,10\}$ modes, $g\in\{0.5,0.75,1,1.5,2,3,4,6,
8,10\}$.  The target is the marginal mode $j=\lceil (K-1)/2\rceil$
positions in from the nearest end ($j=1,2,5$), which is where the
$f=0.5$ coverage boundary is tightest.  The $K=2$ layout is excluded:
with $f=0.5$ the target mode's own mass already makes coverage
$\ge0.5$, so no boundary is ever defined.  The learned pipeline is
identical to Exp6/Exp~Q (40\,000 score steps, 1000 fine-tune steps per
grid point, ODE $N=8000$ / 150 RK2 steps, 14-point $\beta$ grid).
Three boundaries are recorded per cell: the dominant-order saddle
expression $\beta_c^{\rm cl}=\lambda((jd)^2/\!\ln2-2\sigma_0^2)$, the
exact variational boundary $\beta_c^{\rm var}$ of the tilted mixture,
and the empirical learned-pipeline boundary $\beta_c^{\rm emp}$, each
divided by $\beta_{\rm cross}=2\lambda\sigma_0^2=2$.  The full scan is
30 seed-0 cells; the $K=5$ cells at $g\in\{1,2,4\}$ carry two
additional seeds (36 cells total).

\begin{table}[h]
\centering\small
\begin{tabular}{llcccc}
\hline
& $g$ & 1 & 2 & 4 & 10 \\
\hline
$K{=}3$ ($j{=}1$) & $\alpha^{\rm var}$
  & sat & 4.1 & 19.8 & 128 \\
& $\alpha^{\rm emp}$ & 0.11 & 0.46 & 6.5 & 14.5 \\
$K{=}5$ ($j{=}2$) & $\alpha^{\rm var}$
  & 1.9 & 18.2 & 78.3 & 494 \\
& $\alpha^{\rm emp}$ & $0.91{\pm}.21$ & 3.5 & $18.7{\pm}6.1$ & 55.9 \\
$K{=}10$ ($j{=}5$) & $\alpha^{\rm var}$
  & 10.5 & 67.7 & 283 & 1772 \\
& $\alpha^{\rm emp}$ & 4.5 & 17.0 & 71.1 & 445 \\
\hline
\end{tabular}
\caption{Experiment~W: $\alpha^{\rm var}$ and $\alpha^{\rm emp}$ at
four separations (``sat'': ball metric saturated at $\beta=0$,
$\beta_c^{\rm var}=0$ censored).  Extra-seed values for $K{=}5$ are
exact at $g=2$ (3.51 all seeds) and spread at $g=1$ ($0.62,1.06,1.06$)
and $g=4$ ($15.1,15.1,25.7$).}
\label{tab:expW}
\end{table}

\paragraph{Result 1: $\alpha$ grows with $g$, it is not driven to 1.}
Figure~\ref{fig:expW-alpha} shows the three $\alpha(g)$ curves.
$\alpha^{\rm var}$ rises monotonically on the $2j^2g^2/\!\ln2$ scale
of the saddle envelope, from $\mathcal O(1)$ at $g\approx1$ to $128$
($K{=}3$), $494$ ($K{=}5$) and $1772$ ($K{=}10$) at $g=10$ (a factor
$2$--$4$ below the envelope itself).
$\alpha^{\rm emp}$ follows the same rise and reaches $14.5$, $55.9$
and $445$ at $g=10$; every $\alpha^{\rm emp}$ for $g\ge3$ exceeds $2$,
and at $g=10$ it exceeds $14$ in every layout.  The empirical
$\alpha=1$ crossing sits near $g\approx2.4$ for $K=3$, $g\approx1.0$
for $K=5$, and below $g=0.5$ for $K=10$.  Thus $\beta_{\rm cross}$ is
correctly interpreted as the sign- and order-of-magnitude marker where
the tilted component widths lock (the crossover-scale discussion of
Section~\ref{sec:theory}), not as a
limit the empirical boundary approaches: $\alpha=1$ is crossed once,
from below, at a geometry- and $K$-dependent location, after which
$\alpha$ keeps growing.

\paragraph{Result 2: ordering of the three boundaries, and the high-$g$
roll-over.}
At every non-saturated cell $\alpha^{\rm emp}<\alpha^{\rm var}
<\alpha^{\rm cl}$.  The dominant-order saddle expression omits all
competing modes and is an upper envelope, overshooting by factors
$1.5$--$25$.  The learned-pipeline gap widens at large $g$: for
$K{=}10$, $\alpha^{\rm emp}/\alpha^{\rm var}$ falls from $0.43$ at
$g=1$ to $0.25$ at $g\ge2$, and $\alpha^{\rm emp}$ is non-monotone at
the top end ($209$ at $g=6$, $167$ at $g=8$, $445$ at $g=10$ for
$K{=}10$; $15.7$ then $14.5$ for $K{=}3$).  These cells carry
\emph{exact} flags (the boundary lies inside the grid, not at its
edge), so the roll-over is pipeline behaviour rather than censoring:
sharper, wider saddles are progressively harder for the finite network
and the $1000$-step fine-tune to represent, the same policy-side
carrier isolated in Exp~Q (Appendix~\ref{app:expQ}).

\paragraph{Result 3: overlap censoring at small $g$.}
The exact ball-metric boundary is zero whenever mode balls overlap at
zero tilt ($K=3$, $g\le1$; $K=5$, $g=0.5$); these cells are plotted as
hollow censored markers, not as measured zeros.  The empirical
pipeline still shows a low-$g$ transition ($\alpha^{\rm emp}=0.06$ at
$K=3,g=0.5$, itself flagged below the grid floor) and $0.33$--$0.66$
at $g=0.5$--$0.75$.  The $K=10$ layout saturates nowhere on the scan:
the long lattice always contains a non-overlapped marginal mode.

\paragraph{Result 4: $K$-scaling and the 2D/3D check.}
At fixed $g=2$, $\alpha^{\rm var}$ is $4.1/18.2/67.7$ across
$K=3/5/10$, tracking the $j^2\in\{1,4,25\}$ prefactor (ratios
$1:4.5:16.6$); $\alpha^{\rm emp}$ gives $0.46/3.5/17.0$
($1:7.6:37$).  The rightmost panel overlays the 2D/3D lattice packs of
Exp~Q: their $\alpha^{\rm emp}$ span the same range ($0.5$--$50$) with
no dimension-specific anomaly, and the $K=5$ closed-form envelope
brackets them as it does the 1D cells.

\paragraph{Verdict.}
Open Problem~3's regime statement is sharpened: $d_{\min}>4\sigma_0$
($g>2$) is a fair single-number guide only for the $K=3$ marginal
layout, where $\alpha^{\rm emp}$ indeed crosses $1$ nearby; the
general statement is the $g$- and $j$-dependent scaling measured here.
The conjectured $\mathrm{dyn}/\mathrm{var}\to1$ endpoint is not
observed --- the exact optimum itself departs as $j^2g^2$, and the
learned pipeline departs further as the saddle sharpens.

\begin{figure}[h]
\centering
\includegraphics[width=0.99\linewidth]{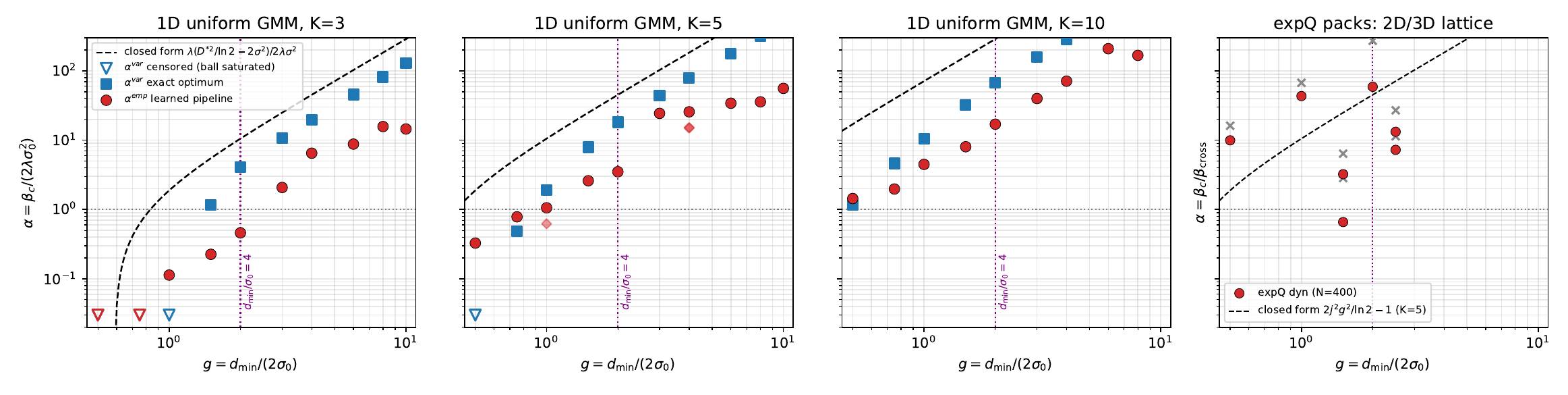}
\caption{Experiment~W: $\alpha=\beta_c/\beta_{\rm cross}$ against
$g=d_{\min}/(2\sigma_0)$.  Blue squares: exact variational optimum;
red circles: learned pipeline (faint diamonds: extra $K=5$ seeds);
hollow triangles: ball-saturated / below-grid censors; black dashed:
dominant-order saddle envelope; dotted: $\alpha=1$ and
$d_{\min}/\sigma_0=4$.  Right panel: Exp~Q 2D/3D packs against the
$K=5$ envelope.}
\label{fig:expW-alpha}
\end{figure}

\section{Rare-basin reward on alanine dipeptide: enrichment versus
retention}
\label{app:expX}

The Outlook asked whether the causal chain
prior~$\to$~latent geometry~$\to$~collapse boundary extends to
\emph{physical} conformations under a reward that favours a rare basin.
This appendix runs that test on the real alanine-dipeptide MD data of
Exp7 (no synthetic data).  Script: \texttt{expX\_rare.py}; analysis:
\texttt{analyze\_expX.py}; raw outputs: \texttt{results/expX/}.

\paragraph{Design.}
The Exp7 features define three physical basins with equilibrium
weights $w=(0.435,0.338,0.157)$: $\beta$, $\alpha_R$, and the rare
C5 basin.  Two priors --- Gaussian and the label-tracked GMM --- are
trained at the full budget (30\,000 VAE steps, 40\,000 score steps),
three seeds each.  For every target basin separately, the policy is
fine-tuned (reverse KL, 1000 steps) on a 12-point $\beta$ grid
anchored at $\max(\beta_c^{\rm var},2\sigma_{\rm lat}^2)$, and $N=8000$
ODE samples are measured in four spaces: latent/decoded
$\times$ hard/soft.  Enrichment is $m_t/w_t$; hard retention is the
fraction of non-target basins still holding mass $\ge0.5/K$; soft
retention is the weight-capped recovery fraction.  Soft kernels use
$\sigma=d_{\min}/2$ in each space (latent $\approx0.027$, decoded
$\approx0.98$): the fitted diagnostic width $\sigma_x\approx2.87$
exceeds the basin spacing and flattens kernel weights, whereas
$d_{\min}/2$ keeps adjacent basins distinguishable.

\paragraph{Accounting caveat: the soft metric at fixed enrichment.}
Soft kernel masses sum to one, so soft retention at a fixed enrichment
$E$ is pinned by mass conservation,
$(1-Ew_t)/\sum_{k\ne t}w_k$: for C5 at $E=3$ this is $0.684$, and all
six cells return $0.684$ to three decimals.  We report soft curves and
AUCs, but the discrimination between priors at fixed enrichment comes
from the hard gate, which is not mass-conservation constrained.

\paragraph{Result 1: $3\times$ enrichment is physically reachable only
for the rare basin.}
Maximum hard decoded enrichment for the two common basins is bounded
by $1/w$: $2.3$ ($\beta$) and $3.0$ ($\alpha_R$).  Observed maxima are
$2.2$ and $2.4$ (Gaussian) and $1.3$ and $1.7$ (GMM), so no common
basin crosses $3\times$ for either prior; these cells are censored
above rather than extrapolated.  C5 admits $1/w\approx6.4$ and is the
discriminating target.

\paragraph{Result 2: on C5, Gaussian guidance is sharp but destructive;
GMM guidance is soft and preserving.}
Gaussian-prior policies reach $3.9$--$4.3\times$ hard enrichment of C5
in all three seeds, but hard retention is $0.5$ everywhere at
$3\times$: exactly one of the two other basins falls below $0.5/K$.
GMM-prior policies reach $3.5$--$3.6\times$ with hard retention
$1.0$ in two seeds; the third peaks at $2.91\times$ (censored below
$3\times$) with retention $1.0$ across the entire sweep
(Figure~\ref{fig:expX-retention3}).  Soft AUCs favour or match the GMM
on every basin ($0.97$ vs $0.80$ on $\beta$; comparable on $\alpha_R$
and C5), and the full Pareto curves
(Figure~\ref{fig:expX-pareto}) show the GMM trading enrichment for
retention strictly less steeply once the hard gate begins to bind.

\paragraph{Result 3: mechanism --- diffuse latent ``collapse'' plus
decoder amplification.}
The GMM latent is tight ($\sigma_{\rm lat}=0.027$--$0.030$,
$d_{\min}/(2\sigma_{\rm lat})\approx0.9$--$1.0$; component centres
only $0.055$ apart).  At the smallest $\beta$, hard latent target mass
reaches only $\approx0.22$: the $f=0.5$ coverage criterion is met
(the other balls hold $<0.5/K$ each) but $78\%$ of samples lie in the
inter-ball tail, and the deterministic decoder, with a gain of
$\approx36$ from latent to feature spacing ($1.96/0.055$), spreads that
tail back across all three physical basins.  The Exp7 coverage sweep shows the
same signature directly: latent coverage stays at $1/3$ from
$\beta=10^{-6}$ to $2\times10^{-3}$ while decoded coverage stays at
$1.0$.  Gaussian-prior collapse is instead sharp (latent target mass
$0.91$), and the decoder transmits it faithfully, which is what makes
$4\times$ enrichment possible --- and what erases a basin.

\paragraph{Verdict.}
On real molecular data the conditional causal chain survives in a
more interesting form than ``GMM is uniformly better'': the prior's
geometry selects a point on the enrichment--retention Pareto
frontier.  The Gaussian prior offers controllable, sharp rare-basin
enrichment at the price of basin destruction; the tight GMM prior
trades $\approx20\%$ of reachable enrichment for full retention of the
common basins.  Limits: 2D latent, $K=3$, and the preserving mechanism
relies on the decoder's non-injective amplification, which should be
re-audited for higher-capacity decoders and larger molecules.

\begin{figure}[h]
\centering
\includegraphics[width=0.99\linewidth]{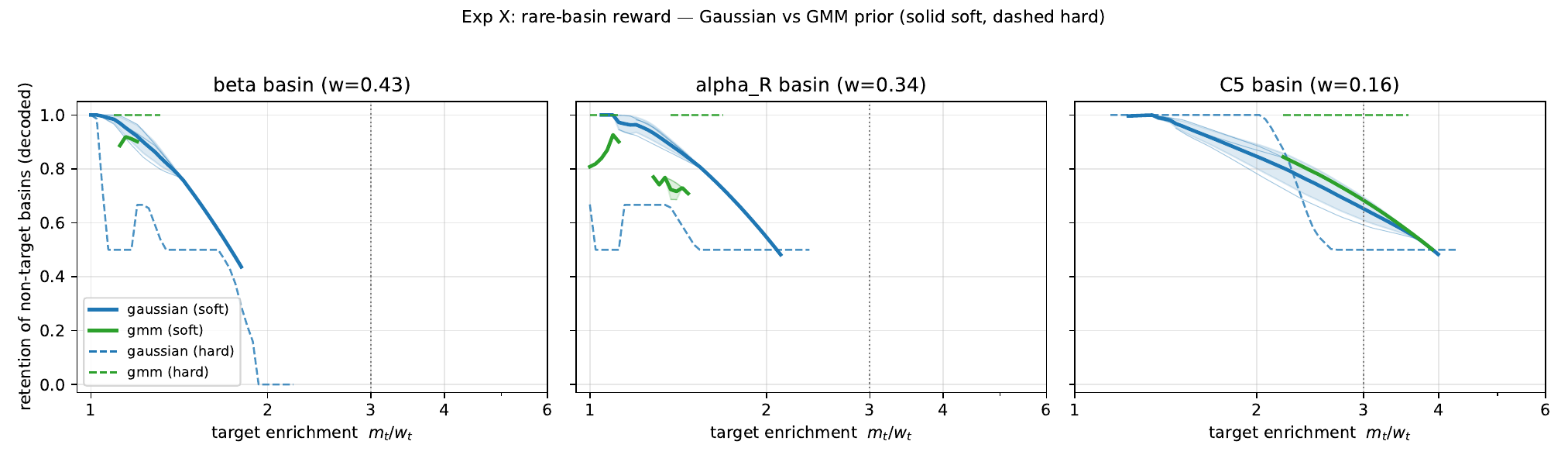}
\caption{Experiment~X: non-target-basin retention versus target
enrichment (decoded), three basins, mean over three seeds
(${\pm}$std band for soft curves).  Solid: soft retention; dashed:
hard retention.  The dotted line marks $3\times$.}
\label{fig:expX-pareto}
\end{figure}

\begin{figure}[h]
\centering
\includegraphics[width=0.92\linewidth]{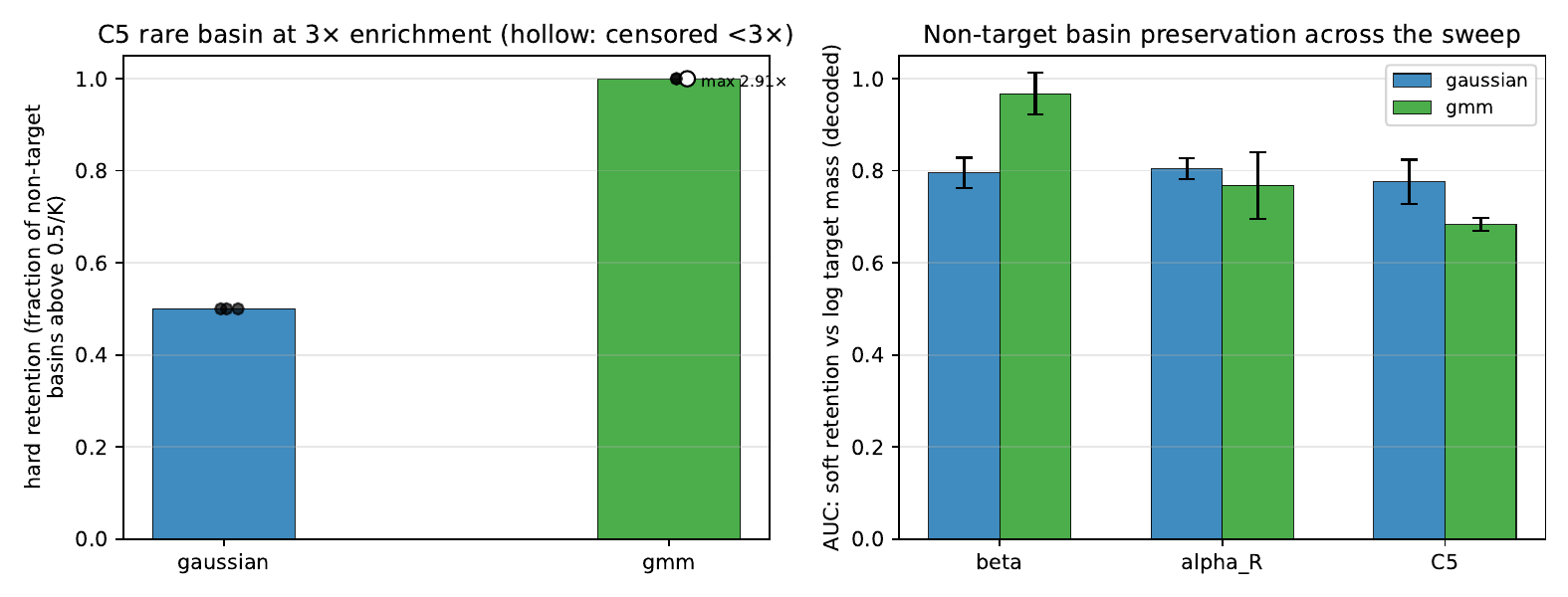}
\caption{Experiment~X, C5 rare basin.  Left: hard retention at
$3\times$ enrichment (hollow marker: seed censored below $3\times$,
plotted at its $2.91\times$ maximum, where retention is $1.0$).
Right: soft retention AUC across the sweep for all three basins.}
\label{fig:expX-retention3}
\end{figure}

\section{Reverse counterfactual: matched-geometry prior switch}
\label{app:expY}

Appendix~\ref{app:expR} fills the \emph{forward} mediation cell: prior
label fixed at Gaussian, geometry moved by the KL weight, and $\beta_c$
follows geometry.  Implication~2's full-mediation claim is symmetric:
\emph{at matched geometry, switching the prior label must not move
$\beta_c$}.  This appendix supplies the reverse cell on the same Exp6
pack ($K{=}10$, $d{=}5$, $\sigma{=}0.5$, $\mathbb R^8\!\to\!2$D):
Gaussian and GMM priors are each trained to the SAME achieved
latent geometries, after which the standard score/ODE/fine-tune
pipeline and the cell's own-geometry variational boundary are applied.
Script: \texttt{expY\_reverse.py}; analysis:
\texttt{analyze\_expY.py}.

\paragraph{Why the reverse cell needs a new intervention.}
One might expect the GMM analogue of expR --- simply down-weighting the
GMM KL term --- to widen the encoder blobs and reproduce compressed
geometries.  It does not.  The label-tracked GMM component means and
variances EMA-track the encoder's own per-label batch statistics
($0.99$ decay), so the GMM KL is self-referential: it contains neither
the centre contraction toward the origin nor the width inflation toward
unit variance that $\mathrm{KL}(q\|\mathcal N(0,I))$ imposes.  Probe
runs (30\,000 VAE steps, seed~0) with the GMM KL weight spanning
$w_{\rm kl}\in\{0.003,0.01,0.03,0.1,0.3,1,3\}$ achieve
$g=d_{\min}/(2\sigma_{\rm lat})=3.1$--$6.8$ at every weight --- never
entering the Gaussian-ELBO image, whose widest geometry on this pack is
$g=2.08$ (the expR $w_{\rm kl}{=}0.03$ cell).  Reweighting therefore
cannot produce the reverse counterfactual.

\paragraph{Two prior-internal interventions that do move GMM geometry.}
We instead modify the GMM prior $p(z)=\sum_k\pi_k
\mathcal N(\alpha\,\mu_k,\mathrm{diag}(\max(s_k^2,\sigma_{\rm
floor}^2)))$ in two ways: a \textbf{width floor}
$\sigma_{\rm floor}$ on every component variance (the reverse KL then
pulls encoder blobs to that width), and a \textbf{centre contraction}
$\alpha<1$ moving the tracked component centres toward the origin.
Both parameters act only on the prior distribution; the decoder,
score model, and reward are untouched.  Calibration at the full
$30\,000$-step budget (seed~0) gives:

\begin{table}[h]
\centering\small
\begin{tabular}{cccccc}
\hline
arm & $\alpha$ & $\sigma_{\rm floor}$ &
$d_{\min}/(2\sigma_{\rm lat})$ & $\sigma_{\rm lat}$ &
matched Gaussian cell \\
\hline
width & 1.0 & 0.35 & 2.15 & 0.48 & --- (above image edge $2.08$)\\
width & 1.0 & 0.50 & 2.02 & 0.67 & $w_{\rm kl}{=}0.03$ ($g{=}2.08$)\\
width & 1.0 & 0.70 & 1.93 & 0.93 & $w_{\rm kl}{=}0.03$\\
width & 1.0 & 1.00 & 1.88 & 1.32 & $w_{\rm kl}{=}0.03$\\
width & 1.0 & 1.30 & 1.79 & 1.69 & $w_{\rm kl}{=}0.03$\\
contract & 0.9 & 0.50 & 1.49 & 0.40 & $w_{\rm kl}{=}0.1$ ($g{=}1.55$)\\
contract & 0.7 & 0.50 & 1.22 & 0.26 & $w_{\rm kl}{=}0.3$ ($g{=}1.14$)\\
contract & 0.5 & 0.50 & 1.11 & 0.21 & $w_{\rm kl}{=}1$ ($g{=}1.12$)\\
\hline
\end{tabular}
\caption{Experiment~Y intervention calibration (seed~0, full VAE
budget).  The width floor moves achieved geometry through the upper
part of the Gaussian-ELBO image; adding centre contraction reaches the
middle image geometries.  Centre separation grows as components widen,
so the width-only arm saturates near $g\approx1.8$.}
\label{tab:expY:calib}
\end{table}

\paragraph{Protocol.}
Each of the eight GMM specifications runs at three seeds; the Gaussian
arm runs at seeds $\{1,2\}$, with seed~0 supplied by the bit-identical
KL-weight arm of expR (same VAE code, seeds, and budget).  Every
frozen VAE follows the main-text pipeline: score model on the
aggregate posterior ($40\,000$ steps), 14-point $\beta$ grid on
$[0.03,30]\,\beta_c^{\rm var}$ of the cell's own latent centres,
$1000$ reverse-KL fine-tune steps per point, $8000$ RK2-150 endpoints.

\begin{figure}[t]
\centering
\IfFileExists{results/expY/fig_expY_mediation.pdf}{%
  \includegraphics[width=0.99\linewidth]{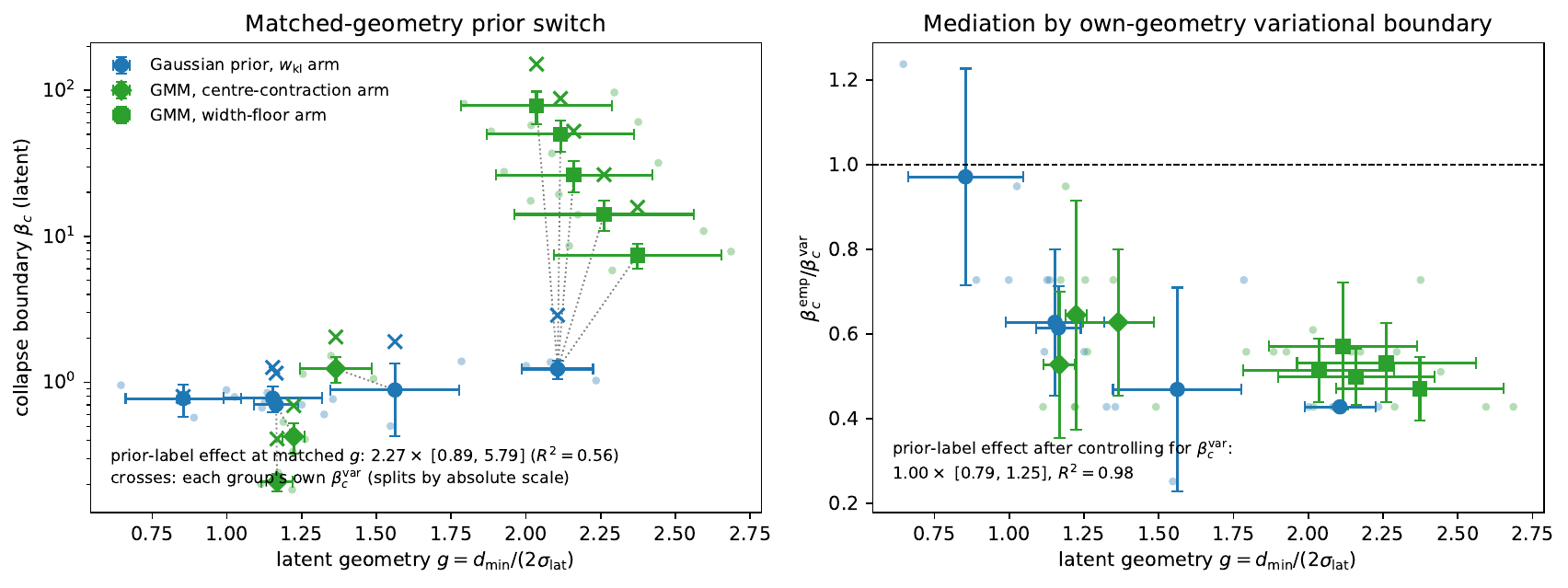}%
}{%
  \fbox{\parbox[c][5cm][c]{0.9\linewidth}{\centering
  Experiment~Y mediation figure (generated when the sweep completes)}}%
}
\caption{Experiment~Y reverse mediation.  \emph{Left:} measured
$\beta_c$ versus achieved geometry $g$ for the Gaussian KL-weight arm
(circles) and the GMM width-floor (squares) and centre-contraction
(diamonds) arms; crosses mark each group's own-geometry
$\beta_c^{\rm var}$; dotted lines join nearest-geometry cross-prior
pairs.  \emph{Right:} the same data as emp/var ratios.  The inset
reports the prior-label coefficient of
$\log\beta_c^{\rm emp}\sim\log g+\mathbb 1_{\rm GMM}$ (a direct prior
effect would move the coefficient away from $1\times$).}
\label{fig:expY:mediation}
\end{figure}

\paragraph{Results.}
All $39$ cell--seed runs ($15$ Gaussian rows --- the five
$w_{\rm kl}$ cells at three seeds, with seed~0 supplied by the
bit-identical expR arm --- and $24$ GMM rows, eight specifications at
three seeds) yield exact $\beta_c$: no point is censored in the matched
window.  The Gaussian arm spans $g=0.85$--$2.11$ and the GMM arms span
$g=1.17$--$2.37$ (centre-contraction $1.17$--$1.36$; width-floor
$2.04$--$2.37$), so the overlap interior $g\in[1.1,2.1]$ is covered by
both priors.  Crucially, the two priors' empirical-to-variational
shortfalls occupy the same band: $\beta_c^{\rm emp}/\beta_c^{\rm
var}=0.43$--$0.97$ over the five Gaussian cells and $0.47$--$0.64$ over
the eight GMM cells --- switching the prior label at a fixed achieved
geometry neither removes nor adds the finite-resource pipeline gap.

The eight nearest-geometry cross-prior pairs ($|\Delta g|\le0.30$) make
this point cell by cell: the change in measured boundary tracks the
change in each pair's own-geometry variational boundary, with
$\Delta\log\beta_c^{\rm emp}-\Delta\log\beta_c^{\rm var}\in[-0.15,
+0.29]$ and no consistent sign (3 negative, 5 positive).  At
$g\approx1.16$--$1.22$, for example, the contracted GMM cells show
$\Delta\log\beta_c^{\rm emp}=-1.22$ and $-0.50$ against
$\Delta\log\beta_c^{\rm var}=-1.04$ and $-0.51$; in the width arm,
where $g$ is matched ($2.04$--$2.37$ vs.\ $2.11$) but the GMM centres
are much farther apart ($d_{\min}=2.3$--$7.0$ vs.\ $0.96$), the
boundaries differ by $1.8$--$4.2$ log units \emph{and so do their
variational predictions} ($1.7$--$4.0$ log units) --- the difference is
explained by geometry in its complete form, not by the prior label.

The direct-effect regressions over all $39$ exact rows
(Figure~\ref{fig:expY:mediation}) give the formal verdict.  Controlling
for the full own-geometry boundary,
$\log\beta_c^{\rm emp}\sim\log\beta_c^{\rm var}+\mathbb 1_{\rm GMM}$,
the GMM coefficient is $-0.005\pm0.115$ ($t=-0.04$): factor
$0.995\times$, $95\%$ CI $[0.79,1.25]$, $R^2=0.976$.  The prior label
has no detectable direct effect, and the CI rules out effects larger
than $\pm25\%$.  Controlling for the scalar ratio alone,
$\log\beta_c^{\rm emp}\sim\log g+\mathbb 1_{\rm GMM}$, gives a
nominally larger but non-significant factor $2.27\times$
($t=1.71$, $95\%$ CI $[0.89,5.79]$, $R^2=0.57$): the scalar $g$ discards
the absolute inter-centre scale $d_{\min}$, which the width arm shows
varies sevenfold at fixed $g$, so it is an incomplete mediator.  The
correct mediator is the geometry as encoded in each cell's own
$\beta_c^{\rm var}$, and with that covariate the mediation is full.

\paragraph{Verdict for Implication~2 (reverse cell).}
Within the reachable interior $g\in[1.1,2.1]$, geometry is a sufficient
statistic for the collapse boundary across the prior switch: at matched
geometry (with the absolute scale retained through $\beta_c^{\rm var}$),
the measured boundaries agree to within the $\pm25\%$ CI, and any
residual is the same finite-pipeline shortfall observed under both
labels.  The result does not extend to the two unreachable endpoints
discussed next, where only one prior produces the geometry and no
cross-label comparison exists.

\paragraph{Bidirectional reachability limit.}
The matched comparison covers the interior $g\in[1.1,2.4]$ of the
Gaussian-ELBO image: all five Gaussian KL-weight cells
($g=0.85$--$2.11$ over three seeds) and all eight GMM specifications
($g=1.17$--$2.37$), producing the eight nearest-geometry pairs with
$|\Delta g|\le0.30$ analysed above (final three-seed geometries sit
slightly above the seed-0 calibration probes of
Table~\ref{tab:expY:calib}).  Both ends of the range remain
single-prior observations and are reported as unreachable rather than
synthesised:
(i)~the tight Exp6/V GMM endpoint $g=3.34$--$5.2$ is not attained by any
Gaussian-prior ELBO on this pack (image edge $g\approx2.11$), and
(ii)~the compressed Gaussian tail ($g=0.85$ three-seed mean at
$w_{\rm kl}{=}3$, reaching $g\approx0.65$ in the expR seed-0 cell) is
not attained by any GMM intervention probed: weight reweighting stays
at $g\ge3.1$, width flooring bottoms out at $g\approx2.04$
($\sigma_{\rm floor}=1.3$) because centre separation grows with
component width, and centre contraction bottoms out at $g\approx1.17$
($\alpha=0.5$) as centres and widths contract together.  A strict
reverse counterfactual at the Exp6 GMM endpoint is therefore impossible
in both directions; the symmetry test is performed throughout the
interior of the reachable image.

\section{Machine-checked skeleton of the theory (Lean\,4 / Mathlib)}
\label{app:lean}

The algebraic and structural skeleton of Theorem~\ref{thm:width-limits}
(the crossover statement), Proposition~\ref{prop:prior-scale}, the
mediation argument of Implication~2, and the three-source decomposition
discussed as Open Problem~2 is formalised in Lean\,4 (toolchain
\,\texttt{leanprover/lean4:v4.3.0}, Mathlib \texttt{v4.3.0}) and
compiles with zero errors under \texttt{lake build}.  The formalisation
takes the modelling assumptions (equal product weights, isotropic
Gaussian components, differentiable score paths) as hypotheses; what is
machine-checked is every algebraic manipulation and logical implication
below.  Source: \texttt{formal/BoundaryFormal/}.

\begin{itemize}\itemsep2pt
\item \texttt{PriorGeometry.lean} proves the two prior-form geometry
costs used in Propositions~\ref{prop:gauss-tax}
and~\ref{prop:assign}: the exact Gaussian-prior KL
$\tfrac12(\mu^2+s^2-1-\log s^2)$ with its minimum at width~1, strict
tax below width~1, and divergence as the width vanishes; and the
mixture assignment bound
$\mathrm{KL}(q\Vert m)\le\mathrm{KL}(q\Vert p)-\log\pi$
(discrete pmfs, via $\log x\le x-1$), whose corollary is that a
posterior matching a component exactly pays only $-\log\pi$
independent of width.
\item \texttt{CollapseBoundary.lean} defines
$\sigma_\beta^2=\sigma_0^2\tau^2/(\sigma_0^2+\tau^2)$ with
$\tau^2=\beta/(2\lambda)$ and proves the two asymptotic identities
$\sigma_\beta^2=\sigma_0^2-\sigma_0^4/(\sigma_0^2+\tau^2)$ and
$\sigma_\beta^2=\tau^2-\tau^4/(\sigma_0^2+\tau^2)$, the strict bounds
$\sigma_\beta^2<\sigma_0^2$ and $\sigma_\beta^2<\tau^2$, monotone
increase with $\tau^2$, and the crossover used in
Theorem~\ref{thm:width-limits}:
$\beta=2\lambda\sigma_0^2\Rightarrow\sigma_\beta^2=\sigma_0^2/2$.
\item \texttt{Mediation.lean} defines total, direct, and indirect
effects and proves their additive identity, that full mediation
($B(p,g)$ independent of $p$) makes the direct effect identically zero,
and the non-identifiability construction behind Implication~2:
two explicit response surfaces that
agree at both observed (prior, geometry) cells yet differ at the
counterfactual cell, so the direct prior effect cannot be read from the
confounded design without the intervention of Experiment~R.
\item \texttt{ErrorDecomposition.lean} proves additivity
$e_{\rm tot}=e_{\rm net}+e_{\rm ode}+e_{\rm anchor}$ of the three
non-negative error sources, the pigeonhole statement that at least one
source is $\ge e_{\rm tot}/3$, and that removing the solver term
($e_{\rm ode}=0$) bounds the achievable boundary by the remaining
network-plus-anchor budget.
\item \texttt{BoundaryConvergence.lean} proves the certification
lemma behind $\beta_c(N)\to\beta_c^\infty$: a measured coverage bracket
computed with uniform score error $\varepsilon$ brackets the true
half-width up to $\pm\varepsilon$; refining the solver to
$\varepsilon'\le\varepsilon$ tightens the certificate; and if the
solver error vanishes ($\varepsilon_N\to0$), there is a finite step
count after which the measured $1/2$-coverage bracket is certified for
the true boundary.
\end{itemize}

\noindent The scope is deliberately narrow: the certificates concern
the deterministic inequalities the paper relies on; no probabilistic
claim (sampling convergence of the Monte-Carlo estimators, score-model
generalisation) is machine-checked.

\clearpage
\raggedbottom
\bibliographystyle{plainnat}
\bibliography{paper}

\end{document}